\newtheorem{thm}{Theorem}[section]
\newtheorem{cor}{Corollary}[section]
\newtheorem{lem}{Lemma}[section]
\theoremstyle{definition}
\newtheorem{assump}{Assumption}
\newtheorem{rem}{Remark}
\newcommand{\N} {\mathbb{N}}
\newcommand{\Z} {\mathbb{Z}}
\newcommand{\caX}{{\mathcal X}}
\newcommand{\bsA}{{\boldsymbol A}}
\newcommand{\bsB}{{\boldsymbol B}}
\newcommand{\bsC}{{\boldsymbol C}}
\newcommand{\bsD}{{\boldsymbol D}}
\newcommand{\bsE}{{\boldsymbol E}}
\newcommand{\bsI}{{\boldsymbol I}}
\newcommand{\bsP}{{\boldsymbol P}}
\newcommand{\bsY}{{\boldsymbol Y}}
\newcommand{\bsU}{{\boldsymbol U}}
\newcommand{\bsV}{{\boldsymbol V}}
\newcommand{\bsW}{{\boldsymbol W}}
\newcommand{\bse}{{\boldsymbol e}}
\newcommand{\bsv}{{\boldsymbol v}}
\newcommand{\bsx}{{\boldsymbol x}}
\newcommand{\bsy}{{\boldsymbol y}}
\newcommand{\bsp}{{\boldsymbol p}}
\newcommand{\bsq}{{\boldsymbol q}}
\newcommand{\bsr}{{\boldsymbol r}}
\newcommand{\bsu}{{\boldsymbol u}}
\newcommand{\bsth}{{\boldsymbol \theta}}
\newcommand{\bsz}{{\boldsymbol z}}
\newcommand{\bsal}{{\boldsymbol \alpha}}
\newcommand{\bsgm}{{\boldsymbol \gamma}}
\newcommand{\beq}{ \begin{equation} }
\newcommand{\eeq}{ \end{equation} }
\newcommand{\lone}{\mathbbm{1}}
\definecolor{C0}{RGB}{31, 119, 180}  
\definecolor{C1}{RGB}{255, 127, 14}   
\definecolor{C2}{RGB}{44, 160, 44}     
\def\1{\bm{1}}
\def\bsbeta{{\boldsymbol {\beta}}}
\def\bse{{\mathbf{e}}}
\def\bsu{{\mathbf{i}}}
\def\bsp{{\mathbf{p}}}
\def\bsq{{\mathbf{q}}}
\def\bsr{{\mathbf{r}}}
\def\bsu{{\mathbf{u}}}
\def\bsv{{\mathbf{v}}}
\def\bsx{{\mathbf{x}}}
\def\bsy{{\mathbf{y}}}
\def\bsz{{\mathbf{z}}}
\def\bsA{{\mathbf{A}}}
\def\bsB{{\mathbf{B}}}
\def\bsC{{\mathbf{C}}}
\def\bsD{{\mathbf{D}}}
\def\bsE{{\mathbf{E}}}
\def\bsI{{\mathbf{I}}}
\def\bsP{{\mathbf{P}}}
\def\bsQ{{\mathbf{Q}}}
\def\bsU{{\mathbf{U}}}
\def\bsV{{\mathbf{V}}}
\def\bsW{{\mathbf{W}}}
\def\bsY{{\mathbf{Y}}}
\def\mE{{\bm{E}}}
\def\mPhi{{\bm{\Phi}}}
\def\mXi{{\bm{\Xi}}}
\def\mLambda{{\bm{\Lambda}}}
\DeclareMathAlphabet{\mathsfit}{\encodingdefault}{\sfdefault}{m}{sl}
\SetMathAlphabet{\mathsfit}{bold}{\encodingdefault}{\sfdefault}{bx}{n}
\def\gL{{\mathcal{L}}}
\def\gP{{\mathcal{P}}}
\def\gX{{\mathcal{X}}}
\def\gY{{\mathcal{Y}}}
\newcommand{\E}{\mathbb{E}}
\newcommand{\R}{\mathbb{R}}
\DeclareMathOperator*{\argmax}{arg\,max}
\DeclareMathOperator*{\argmin}{arg\,min}
\newcommand{\inp}[1]{\langle #1\rangle}
\title{Rethinking Self-Distillation: Label Averaging and Enhanced Soft Label Refinement with Partial Labels}
\author{Hyeonsu Jeong \&  Hye Won Chung \\
School of Electrical Engineering\\
Korea Advanced Institute of Science and Technology (KAIST)\\
Daejeon, South Korea \\
\texttt{\{hsjeong1121, hwchung\}@kaist.ac.kr} 
}
\begin{document}

\maketitle

\begin{abstract}


We investigate the mechanisms of self-distillation in multi-class classification, particularly in the context of linear probing with fixed feature extractors where traditional feature learning explanations do not apply. Our theoretical analysis reveals that multi-round self-distillation effectively performs label averaging among instances with high feature correlations, governed by the eigenvectors of the Gram matrix derived from input features. This process leads to clustered predictions and improved generalization, mitigating the impact of label noise by reducing the model's reliance on potentially corrupted labels. We establish conditions under which multi-round self-distillation achieves 100\% population accuracy despite label noise. Furthermore, we introduce a novel, efficient single-round self-distillation method using refined partial labels from the teacher's top two softmax outputs, referred to as the PLL student model. This approach replicates the benefits of multi-round distillation in a single round, achieving comparable or superior performance--especially in high-noise scenarios--while significantly reducing computational cost. 

\end{abstract}

\section{Introduction}\label{sec:intro}

Knowledge distillation, initially introduced by \cite{hinton2015distilling}, has emerged as a powerful technique in deep learning, where a smaller student model is trained to replicate the predictions of a larger, more complex teacher model. In classification tasks, the teacher's output probability distribution, often obtained via softmax, serves as ``soft targets'' for training the student model. This approach has proven effective across various domains including computer vision, speech recognition, and natural language processing  \citep{chebotar2016distilling, cui2017knowledge, asif2019ensemble, liu2019improving}, yielding smaller, more efficient models that often generalize better than similarly sized models trained without distillation.

Building upon this foundation, self-distillation eliminates the need for a larger teacher model by using a model of the same architecture as its own teacher. In this process, models of the same size are trained over multiple rounds, with each iteration learning from the outputs of the previous one. Surprisingly, self-distillation has been shown to enhance performance, especially when repeated over multiple rounds \citep{furlanello2018born, zhang2019your}. This improvement is intriguing because it cannot be fully explained by traditional theories that focus on a student mimicking a more complex teacher, prompting a deeper exploration into the mechanisms underlying self-distillation.

Recent research has begun to provide diverse explanations for the benefits of self-distillation. For example, \citet{allen2022towards} describe self-distillation as a feature-learning process in neural networks, particularly under a multi-view data distribution where each class possesses multiple orthogonal features (e.g., ``windows'' and ``wheels'' for cars). The teacher's softmax outputs help the student model learn a richer set of features, improving generalization. 
Another line of research emphasizes that the benefits of self-distillation extend beyond neural networks to classical regression settings \citep{mobahi2020self, dong2019distillation, das2023understanding}. In these contexts, self-distillation acts as a regularizer, 
preventing overfitting and promoting generalization. 


Despite these insights, several critical questions remain unaddressed. First, \textbf{can self-distillation provide benefits in training a simple linear classifier atop a fixed pre-trained feature extractor--a process known as linear probing?} In this setting, traditional explanations involving feature learning do not apply, as the feature extractor is fixed. With the emergence of large-scale pre-trained networks, or foundation models \citep{bommasani2021opportunities}, linear probing has become an increasingly popular method to leverage these networks' capabilities for downstream tasks. Yet, it remains unclear whether self-distillation can enhance this process without additional feature learning, and how repeated rounds of distillation impact the model outputs.


Furthermore, \textbf{can the benefits of multi-round self-distillation be achieved more efficiently--in just a single round--by developing methods that replicate these gains without repeated training cycles?} While multi-round self-distillation has shown performance improvements, it can be computationally intensive due to the need for multiple training iterations. Developing efficient methods to achieve similar gains without multiple rounds is therefore of significant practical interest.


\paragraph{Our Contributions}
In this paper, we address these questions by:
\begin{itemize}[leftmargin=1.5em]
\item Analyzing Self-Distillation in Linear Probing: 
Our analysis reveals that  self-distillation effectively performs label averaging among instances with high feature correlations when generating predictions on the training data. This averaging is governed by the eigenvectors of the Gram matrix derived from input features (defined in Sec. \ref{sec:overview}). 
As distillation progresses, the top-$K$ eigenvectors--where $K$ is the number of classes--dominate this process, leading to increasingly clustered predictions around instances from the same ground-truth class. This finding provides insights into how self-distillation benefits even in simple linear probing without feature updates. 
\item Quantifying Gains in Label-Noise Scenarios: We quantify the number of distillation rounds needed to achieve 100\% population accuracy in presence of label noise--that is, correctly predicting the ground-truth label for any training instance as the number of instances goes to infinity. This number depends on the relative strength between signals from the given (potentially noisy) labels and the averaged labels of instances with high feature correlation.  As the latter signal strengthens with more distillation rounds,  the student model can correctly classify samples with noisy labels after a few iterations, demonstrating the robustness imparted by multi-round self-distillation. 
\item Introducing an Efficient Single-Round Method using Partial Label Refinement: We introduce a novel self-distillation approach that replicates the benefits of multi-round distillation in a single round. This method trains the student with refined partial labels, focusing on the top two softmax outputs from the teacher. We demonstrate that this approach outperforms traditional multi-round self-distillation in high-noise scenarios, while significantly reducing computational cost. Our findings are supported by experiments on synthetic and real datasets (Figure \ref{fig:acc_gain_synt} and Section \ref{sec:exp}). 
\end{itemize}

\paragraph{Related Works}

Since its introduction by \cite{hinton2015distilling} and subsequent extensions \citep{romero2014fitnets, tian2019contrastive, lopez2015unifying}, several studies have sought to theoretically analyze the effects of knowledge distillation and self-distillation. Many of these works rely on specific assumptions about datasets, model architectures, or loss functions to facilitate tractable analysis.
\cite{mobahi2020self, dong2019distillation} explored self-distillation in classical regression settings using mean squared error loss. They demonstrated that multi-round distillation reduces the number of basis functions in the model, which helps prevent overfitting initially but can lead to underfitting after excessive rounds. In another line of research focusing on binary classification with cross-entropy loss, \cite{phuong2019towards, ji2020knowledge, das2023understanding} analyzed the effects of distillation on linear networks or logistic regression under label noise. \cite{das2023understanding} specifically identified the range of label corruption where the student model outperforms the teacher.

In contrast to these studies, our paper examines multi-round self-distillation for multi-class classification with cross-entropy loss, particularly in the context of linear probing where the feature extractor is fixed and feature learning is absent. We demonstrate that the effect of self-distillation can be interpreted as label averaging among highly correlated instances, offering a new perspective on the mechanism of self-distillation. While \citet{allen2022towards} explain the benefits of self-distillation through feature learning under a multi-view data distribution, our results show that significant benefits can be achieved through label averaging alone, even without feature evolution. Specifically, our theoretical analyses (Theorems \ref{thm:Q1}--\ref{thm:self}) illustrate how self-distillation leads to clustered outputs based on feature correlations, enhancing generalization by reducing dependence on given (potentially noisy) labels. Additionally, our findings can be extended to include feature learning by considering the evolution of the feature map over multiple distillation rounds (as discussed in Corollary \ref{cor:full}).

Our work also introduces a novel method for efficiently distilling ``dark knowledge'' from the teacher model. Instead of utilizing the full softmax output as in traditional distillation, we refine the teacher's outputs into a set of feasible labels by focusing on the top two predictions. This approach is analogous to Partial Label Learning (PLL)  \citep{cour2011learning}, where each training sample is associated with a set of candidate labels rather than a single ground truth label. Various PLL methods aim to improve classification accuracy by weighting candidate labels using logits \citep{lv2020progressive, wang2021pico} or leveraging feature space topology \citep{wang2019adaptive, xu2019partial, lyu2019gm}. Our method differs by directly employing the refined partial labels derived from the teacher's outputs, achieving the same benefits as multi-round distillation in just one round. Importantly, our approach outperforms traditional multi-round distillation in high-noise scenarios, as it consistently includes the ground-truth label among the top two predictions, as demonstrated by Theorem \ref{thm:pll}.

 \section{Preliminaries and Overview of Results}

 \paragraph{Notation}
 
 We denote vectors (matrices) by lowercase (uppercase) bold letters. For a vector $\bsv$ and a matrix $\bsA$, let $[\bsv]_i$ or $v_i$ represent the $i$-th entry, and $[\bsA]_{i,j}$ the $(i,j)$-th entry. Let $\bse(i)$ be a unit vector with 1 in the $i$-th entry. We use $\|\cdot\|_F$ for the Frobenius norm. Let  $\textbf{1}_{n \times m}\in\mathbb{R}^{n\times m}$ be all-one matrix. 

 \subsection{Setting: Self-Distillation for $K$-class Classification} 
 
 We consider the problem of $K$-class classification, where each sample $\bsx \in \caX$ is associated with a ground-truth class label  $y(\bsx) \in \gY:= \{1, 2, \dots, K\}$. Let $\gP$ denote the underlying distribution over the sample space $\gX\times \gY$.
Assume that we have a feature map $\phi:\gX \rightarrow \tilde{\gX}\subset \mathbb{R}^d$. We are given a dataset $\{(\phi(\bsx_i), \hat{y}_i)\}_{i=1}^{Kn}$ consisting of $Kn$ feature-label pairs, where each $(\phi(\bsx_i), \hat{y}_i)\in \tilde{\caX}\times \mathcal{Y}$. For simplicity, we assume that the feature vectors are normalized, i.e., $\|\phi(\bsx_i)\|_2=1$, $\forall i\in[Kn]$. The given label  $\hat{y}_i$ may be noisy, while the ground-truth label of $\bsx_i$ is  $y_i:=y(\bsx_i)$. We further assume that the dataset is balanced with respect to both the ground-truth labels and the given labels, meaning that for each class $k \in [K]$, there are $n$ samples with $y_i = k$ and $n$ samples with $\hat{y}_i = k$. 

We consider a classifier $\bsth\in\mathbb{R}^{d\times K}$ based on multinomial logistic regression (softmax regression). For an input $\mathbf{x}$, the classifier outputs $\mathbf{y} = \sigma(\bsth^\top \phi(\mathbf{x}))$, where $\sigma: \mathbb{R}^K \rightarrow (0,1)^K$ is the softmax function. 
The predicted class label for input $\mathbf{x}$ is then $\arg\max_{k \in [K]} [\sigma(\bsth^{\top} \phi(\mathbf{x}))]_k$. The population accuracy is defined as
$
\E_{(\bsx,y(\bsx)) \sim\gP}\left[\mathbbm{1}\left(\argmax_{k\in[K]} [\sigma (\bsth^{\top} \phi(\bsx))]_k =y(\bsx)\right)\right].
$ 
This setting is equivalent to linear probing in the context of neural networks, where the network consists of a fixed feature extractor $\phi: \mathcal{X} \rightarrow \tilde{\mathcal{X}}$ followed by a tunable fully connected linear layer parameterized by $\bsth$ and a softmax layer.
As the training objective, we consider the $\ell_2$-regularized cross-entropy (CE) loss between the target labels (e.g., $\bse({\hat{y}_i})$  for the teacher model) and the model's predictions $ \bsy_i:=\sigma(\bsth^\top \phi(\bsx_i))$:
\begin{equation}\label{eqn:obj_tea}
    f_T(\bsth) = \frac{1}{Kn} \sum^{Kn}_{i=1} \textsf{CE}(\bse({\hat{y}_i}), \bsy_i) + \frac{\lambda \|\bsth\|^2_F}{2},
\end{equation}
where $\lambda>0$  is a regularization parameter, and  $\textsf{CE}(\bse({\hat{y}_i}), \bsy_i):=-\sum_{k=1}^K [\bse({\hat{y}_i})]_k\log  [\bsy_i]_k$.

In self-distillation, the student model, which shares the same architecture  as the teacher, minimizes the same objective \eqref{eqn:obj_tea} over the same training set $\{\phi(\bsx_i)\}$, but with new targets obtained from the teacher's output. Let $\bsth_T:= \argmin_\bsth f_T(\bsth)$  be the teacher's optimal parameters. The new target for an input $\mathbf{x}_i$ becomes the teacher's output prediction $\bsy_i^{(T)}:=\sigma(\bsth_T^\top \phi(\bsx_i))$ rather than the given label $\mathbf{e}({\hat{y}_i})$. Self-distillation can be repeated iteratively, where the $t$-th distilled model uses the outputs of the $(t-1)$-th model as its targets.
Denoting the output of the $(t-1)$-th model for the $i$-th instance as $\bsy_i^{(t-1)}$, the training objective for the $t$-th student model $\bsth\in\mathbb{R}^{d\times K}$, for $t \in \mathbb{Z}^+$, is given by
\beq\label{eqn:obj_stu}
    f^{(t)}(\bsth) =  \frac{1}{Kn} \sum^{Kn}_{i=1} \textsf{CE}(\bsy_i^{(t-1)}, \bsy_i^{(t)}) + \frac{\lambda \|\bsth\|^2_F}{2},
\eeq
for $\bsy_i^{(t)}:=\sigma(\bsth^\top \phi(\bsx_i))$. 
 $f^{(1)}(\bsth)$ is the same objective as the teacher model \eqref{eqn:obj_tea} when $\bsy_i^{(0)}:=\bse(\hat{y}_i)$. 

The optimal $\bsth$ that minimizes \eqref{eqn:obj_stu} for the $t$-th distilled model, denoted by $\bsth^{(t)}$, is expressed as
\begin{equation}
\begin{split}\label{eqn:theta_self}
  { \bsth^{(t)}}{^\top} &= \sum_{i=1}^{Kn} \frac{\left(\bsy_i^{(t-1)} - \bsy_i^{(t)}\right)}{Kn\lambda}  \phi(\bsx_i)^\top= \sum_{i=1}^{Kn} \bsal_i^{(t)} \phi(\bsx_i)^\top\text{ where } \bsal_i^{(t)}:=\frac{ \left(\bsy_i^{(t-1)} - \bsy_i^{(t)}\right)}{Kn\lambda},
   \end{split}
\end{equation}
since it satisfies the condition $ \frac{df^{(t)}(\bsth)}{d\bsth}=\frac{1}{Kn}\sum_{i=1}^{Kn} \phi(\bsx_i) (\bsy_i^{(t)} -\bsy_i^{(t-1)})^\top  + \lambda \bsth=\mathbf{0}$. Note that  $\bsth^{(t)}$ is a linear combination of the training instances $\{\phi(\bsx_i)\}$, with coefficients $\bsal_i^{(t)}$ determined by the difference between the target $\bsy_i^{(t-1)}$ and the output $\bsy_i^{(t)}$. The key question is how the optimal classifier $\bsth^{(t)}$ and the corresponding output $\bsy^{(t)}=\sigma({ \bsth^{(t)}}{^\top} \phi(\bsx))$ evolve as $t$ increases. 

To answer this question, we need to analyze the coefficients $\{\bsal^{(t)}_{i}\}_{i=1}^{Kn}$ in \eqref{eqn:theta_self}, which satisfy \begin{equation}\label{eqn:alpha_sm}
    \begin{split}
        Kn\lambda \bsal^{(t)}_{i}
    &=\bsy_i^{(t-1)} -  \sigma( { \bsth^{(t)}}{^\top} \phi(\bsx_i) ) =\bsy_i^{(t-1)}- \sigma\left( \sum_{j=1}^{Kn} \inp{\phi(\bsx_i), \phi(\bsx_j)} \bsal_{j}^{(t)}\right), \forall i\in[Kn],
    \end{split}
\end{equation}
derived from the definition. 
Solving \eqref{eqn:alpha_sm} for $\{\bsal_i^{(t)} \}$ is challenging due to the non-linearity of the softmax function $\sigma$. To address this, we consider the following  linear approximation of the softmax:
\begin{assump}
\label{asm:softmax}
        We approximate the softmax $\sigma({\bsv})$ for the logits $\bsv \in \mathbb{R}^K$ by
    \begin{equation}
    \begin{split}\label{eqn:softmax}
        \sigma({\bsv}) &= \left[\frac{\exp(v_1)}{\sum_{i=1}^K \exp(v_i)}, \dots, \frac{\exp(v_K)}{\sum_{i=1}^K \exp(v_i)}\right] \approx \left[\frac{1+v_1}{K+\sum_{i=1}^K v_i}, \dots, \frac{1+v_K}{K + \sum_{i=1}^K v_i}\right].
    \end{split}
    \end{equation}
\end{assump}
This linear approximation has been considered in the analysis of knowledge distillation, as initially explored by \cite{hinton2015distilling}, where the softmax output layer converts the logits $\bsv$ using $\sigma(\bsv/\tau)$ with a high temperature $\tau > 0$. When $\tau = 1$, the approximation remains valid if the logits are of sufficiently small magnitude.
From \eqref{eqn:theta_self}, the logits of the $t$-th distilled model for an input $\phi(\bsx_i)$ are given by ${ \bsth^{(t)}}{^\top} \phi(\bsx_i)=\sum_{j=1}^{Kn}  \inp{\phi(\bsx_i), \phi(\bsx_j)} \bsal_{j}^{(t)} $. This is equivalent to 
$
\sum_{j=1}^{Kn} \frac{ \inp{\phi(\bsx_i), \phi(\bsx_j)}}{Kn\lambda} \left(\sigma(\bsth^{(t-1)} \phi(\bsx_j))-\sigma(\bsth^{(t)} \phi(\bsx_j))\right).
$ Thus, the magnitude of the logits becomes small if the average difference between the softmax outputs of the teacher (i.e., the $(t-1)$-th  model) and the student (i.e., the $t$-th model) across the training instances is sufficiently small.   In Appendix \ref{app:assmp2}, we empirically demonstrate that this approximation holds well for class-balanced datasets with a sufficient number of samples. 
Using the fact that the  logits are zero-mean, i.e., $\sum_{i=1}^K v_i =0$, which  holds since $\bsv=\bsth^{\top} \phi(\bsx_i)= \sum_{j=1}^{Kn} \inp{\phi(\bsx_i), \phi(\bsx_j)} \bsal_{j} $ and $\sum_{k=1}^K[\bsal_{j}]_k=0$ for all $j$, 
we have
$
    \sigma(\bsv) \approx \sigma_L(\bsv):=\frac{1}{K} \textbf{1}_{K} + \frac{1}{K} \bsv.
$
 Thus, the softmax output can be approximated as a uniform vector $\textbf{1}_{K}/K$ perturbed by a (scaled) logit $\bsv$. 
Under this approximation, we derive closed-form expressions for $\{\boldsymbol{\alpha}_i^{(t)}\}$ in \eqref{eqn:alpha_sm}  and analyze how the model's outputs change over multiple rounds of self-distillation. 
 
 \subsection{Overview of Our Results}\label{sec:overview}
Given a set of input features $\{\phi(\bsx_i)\}_{i=1}^{Kn}$, we define the Gram matrix $\mPhi \in \R^{Kn \times Kn}$ as
$
    [\mPhi]_{ij} := \inp{\phi(\bsx_i), \phi(\bsx_j)},
$
$\forall i,j\in[Kn]$, where $\inp{\cdot,\cdot}$ denotes the inner product.  The eigen-decomposition of $\mPhi$ is written as $\mPhi = \sum_{i=1}^{Kn} \lambda_i \bsv_i \bsv_i^\top$, with orthonormal eigenvectors $\bsv_1, \dots, \bsv_{Kn}\in \mathbb{R}^{Kn}$  and the corresponding eigenvalues $\lambda_1 \geq \dots \geq \lambda_{Kn} \geq 0$.
\begin{thm}[Informal version of Thm. \ref{thm:Q1}] As the number of self-distillation rounds $t \in \Z^{+}$ increases,  the output $\bsY^{(t)} = [\bsy_1^{(t)}, \dots, \bsy_{Kn}^{(t)}] \in \mathbb{R}^{K\times Kn}$ of the $t$-th distilled model for the inputs of the training dataset $\{\phi(\bsx_i), \hat{y}_i\}_{i=1}^{Kn}$ evolves as
\begin{equation}\label{eqn:output_closed_self}
    \bsY^{(t)} -\frac{1}{K} \textbf{1}_{K \times Kn}= \left(\bsY^{(0)} - \frac{1}{K} \textbf{1}_{K \times Kn}\right) \sum_{i=1}^{Kn} \left(\frac{\lambda_i}{K^2n\lambda+\lambda_i}\right)^t \bsv_i \bsv_i^\top,
\end{equation} 
where $\bsY^{(0)} = [\bse({\hat{y}_1}),    \cdots, \bse({\hat{y}_{Kn}})] $ represents the one-hot encoded vectors of the given labels. 
\end{thm}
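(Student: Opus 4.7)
The plan is to convert the fixed-point equation \eqref{eqn:alpha_sm} for the coefficients $\bsal_i^{(t)}$ into a linear matrix equation by invoking Assumption \ref{asm:softmax}, then solve it in closed form and iterate. I would first verify by induction on $t$ that the logit vectors $\bsth^{(t)\top}\phi(\bsx_i) = \sum_j [\mPhi]_{ij}\bsal_j^{(t)}$ have zero coordinate sum, which legitimizes replacing $\sigma$ by $\sigma_L(\bsv) = \tfrac{1}{K}\textbf{1}_K + \tfrac{1}{K}\bsv$: the base case holds because $\bsy_i^{(0)} = \bse(\hat{y}_i)$ sums to $1$, and if the successive targets are on the simplex then $\bsal_i^{(t)} = (\bsy_i^{(t-1)} - \bsy_i^{(t)})/(Kn\lambda)$ has entries summing to $0$, so $\sigma_L$ again outputs a valid probability vector, closing the induction.

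Second, I would assemble the coefficients into the matrix $\bsA^{(t)} \in \R^{K \times Kn}$ whose $i$-th column is $\bsal_i^{(t)}$. Under $\sigma_L$, equation \eqref{eqn:alpha_sm} stacked columnwise becomes
\[ Kn\lambda\, \bsA^{(t)} = \bsY^{(t-1)} - \tfrac{1}{K}\textbf{1}_{K \times Kn} - \tfrac{1}{K}\bsA^{(t)}\mPhi, \]
using the symmetry of $\mPhi$. Solving for $\bsA^{(t)}$ gives
\[ \bsA^{(t)} = K\bigl(\bsY^{(t-1)} - \tfrac{1}{K}\textbf{1}_{K \times Kn}\bigr)(K^2 n\lambda\,\bsI + \mPhi)^{-1}. \]
Separately, from the definition of the model output under $\sigma_L$ one has $\bsY^{(t)} - \tfrac{1}{K}\textbf{1}_{K \times Kn} = \tfrac{1}{K}\bsA^{(t)}\mPhi$. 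Combining these two relations eliminates $\bsA^{(t)}$ and produces the one-step recursion
\[ \bsY^{(t)} - \tfrac{1}{K}\textbf{1}_{K \times Kn} = \bigl(\bsY^{(t-1)} - \tfrac{1}{K}\textbf{1}_{K \times Kn}\bigr)(K^2 n\lambda\,\bsI + \mPhi)^{-1}\mPhi. \]

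Finally, unrolling this recursion $t$ times back to $\bsY^{(0)}$ and plugging in the eigendecomposition $\mPhi = \sum_i \lambda_i \bsv_i \bsv_i^\top$ yields $(K^2 n\lambda\,\bsI + \mPhi)^{-1}\mPhi = \sum_i \tfrac{\lambda_i}{K^2 n\lambda + \lambda_i}\bsv_i \bsv_i^\top$; orthonormality of the eigenvectors makes the $t$-th matrix power reduce to $\sum_i \bigl(\tfrac{\lambda_i}{K^2 n\lambda + \lambda_i}\bigr)^t \bsv_i \bsv_i^\top$, producing \eqref{eqn:output_closed_self}. I expect the main obstacle to be bookkeeping: getting the transpose conventions right so that products like $\bsA^{(t)}\mPhi$ appear on the correct side with the correct dimensions, and carefully maintaining the induction that keeps every iterate on the probability simplex---without which the linearized-softmax identity would fail from one round to the next and the recursion would not chain cleanly back to $\bsY^{(0)}$.
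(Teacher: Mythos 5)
Your proposal is correct and follows essentially the same route as the paper's proof: linearize the softmax, stack the fixed-point equations for $\bsal_i^{(t)}$ into a matrix identity for $\bsA^{(t)}$, eliminate $\bsA^{(t)}$ to get a one-step affine recursion for $\bsY^{(t)}-\tfrac{1}{K}\textbf{1}_{K\times Kn}$, and unroll it using the eigendecomposition of $\mPhi$. The only cosmetic difference is that you eliminate $\bsA^{(t)}$ via the output relation $\bsY^{(t)}-\tfrac{1}{K}\textbf{1}_{K\times Kn}=\tfrac{1}{K}\bsA^{(t)}\mPhi$ while the paper uses the definition $\bsA^{(t)}=\tfrac{1}{Kn\lambda}(\bsY^{(t-1)}-\bsY^{(t)})$; these yield the algebraically identical recursion since $(K^2n\lambda\,\bsI+\mPhi)^{-1}\mPhi=\bsI-\left(\bsI+\mPhi/(K^2n\lambda)\right)^{-1}$.
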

This theorem reveals that after $t$ rounds of self-distillation, the model's predictions become a weighted average of the given labels, with weights determined by the transformed Gram matrix $\mPhi^{(t)}:=\sum_{i=1}^{Kn} ({\lambda_i}/({K^2n\lambda+\lambda_i}))^t \bsv_i \bsv_i^\top$. 
Note that $\mPhi^{(t)}$ retains the same eigenvectors as the Gram matrix $\mPhi$, 
but its eigenvalues take the form  $({\lambda_i}/({K^2n\lambda+\lambda_i}))^t$, maintaining the same decreasing order as the eigenvalues of  $\mPhi$ for $i \in [Kn]$. The eigenvalues of $\mPhi^{(t)}$ decay exponentially with $t$ since  ${\lambda_i}/({K^2n\lambda+\lambda_i})<1$. Consequently, the label averaging process becomes increasingly dominated by the top eigenvectors of $\mPhi^{(t)}$, as shown in Fig. \ref{fig:eigenspectrum}. See Section \ref{sec:label_avg} for details.


The label averaging effect of multi-round self-distillation is particularly advantageous in label-noise scenarios, as it allows the model's output prediction for each training instance to be influenced not only by the given (potentially noisy) label but also by the labels of other instances with high feature correlations.
To quantify the benefits of multi-round self-distillation in the presence of label noise,  we consider a low-rank structure for the Gram matrix  
$\mPhi \in \R^{Kn \times Kn}$:
\beq\label{eqn:corr}
    \mPhi = 
    \underbrace{\begin{pmatrix}
        \begin{tikzpicture}
        \fill[green!20] (0,1.66) rectangle (0.33,2);
        \fill[green!20] (0.34,1.33) rectangle (0.66,1.67);
        \fill[green!20] (0.67,1) rectangle (0.99,1.34);
        \fill[green!20] (1,0.66) rectangle (1.33,1);
        \fill[green!20] (1.34,0.33) rectangle (1.66,0.67);
        \fill[green!20] (1.67,0) rectangle (2,0.34);
    \end{tikzpicture}
    \end{pmatrix}}_{\text{class correlation matrix}}
    +
    \underbrace{\begin{pmatrix}
        \begin{tikzpicture}
        \fill[blue!20] (0,1.01) rectangle (0.99,2); 
        \fill[blue!20] (1,0.34) rectangle (1.66,1);
        \fill[blue!20] (1.67,0) rectangle (2,0.33); 
    \end{tikzpicture}
    \end{pmatrix}}_{\text{superclass correlation matrix}}
;\quad
  [\mPhi]_{i, j}=   
\begin{cases}
1, & i=j,\\
c, & y_i=y_j, i\neq j,\\
d, & y_i\neq y_j, h(y_i)=h(y_j),\\
0,   & h(y_i) \neq h(y_j),
\end{cases}
\eeq
\begin{wrapfigure}{r}{0.4\textwidth}
    \includegraphics[width=0.4\textwidth]{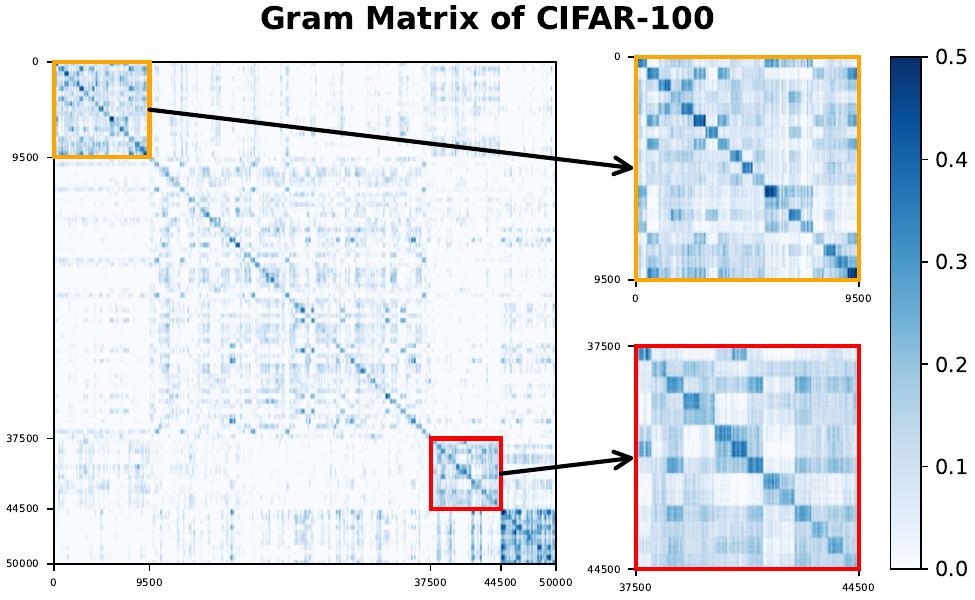}
    \caption{The Gram matrix $\mPhi$ of 50,000 instances of CIFAR-100, extracted from a ResNet34 network, pre-trained on ImageNet. Similar block-wise structures of the Gram matrix are observed for six different real datasets in Fig. \ref{fig:ft_corr}.}
    \label{fig:gram_mat}
    \vspace{-10pt}
\end{wrapfigure}
where $1>c>d\geq0$, and $h:[K] \rightarrow [R]$ maps each class to a superclass or a set of confusable classes ($R\leq K$). This block-wise structure of the Gram matrix is motivated by input feature correlations observed in real datasets, such as those obtained from neural networks pre-trained on ImageNet, as shown in Fig. \ref{fig:gram_mat}.  Under the structure \eqref{eqn:corr}, the top-$K$ eigenvectors of $\mPhi$ reveal clusters of instances sharing the same ground-truth class, inferred from their feature correlations. Consequently, the label averaging by $\mPhi^{(t)}$, as in \eqref{eqn:output_closed_self}, causes predictions for the instances from the same ground-truth class to become more similar as  $t$ increase, as illustrated in Fig. 
\ref{fig:output_clst}, even when labels are noisy. 
This clustering mitigates the impact of label noise by reducing the model's reliance on potentially corrupted given labels.
Our main result quantifies the benefits of multi-round self-distillation by establishing conditions under which the $t$-th distilled model can achieve 100\% population accuracy. These conditions are equivalent to those where the model can correctly classify all training data, including label-noisy instances, in the limit of $n\to\infty$.

\begin{thm}[Informal version of Thm. \ref{thm:self}]
Given a label corruption matrix $\bsC\in \mathbb{R}^{K\times K}$, where $
[\bsC]_{k,k'}:=|\{i: y_i=k, \hat{y}_i=k'\}|/n$, the $t$-th distilled model trained on this label-noisy dataset achieves 100\% population accuracy if 
\beq\label{eqn:cond_st_100_o}
[\bsC]_{k,k}>[\bsC]_{k,k'}+{1}/\left({(\tilde{\lambda}_K/\tilde{\lambda}_{K+1})^t-1}\right),\quad \forall k,k'\in[K] \text{ with }k\neq k',
\eeq
where $\tilde{\lambda}_K=\lambda_{K}/(K^2n\lambda+\lambda_K)$ and $\tilde{\lambda}_{K+1}=\lambda_{K+1}/(K^2n\lambda+\lambda_{K+1})$, with $\lambda_K=1-c+n(c-d)$, $\lambda_{K+1}=1-c$, as per the assumed Gram matrix in \eqref{eqn:corr}.
\end{thm}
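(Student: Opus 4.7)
The plan is to instantiate the closed form \eqref{eqn:output_closed_self} under the block-structured Gram matrix \eqref{eqn:corr}, reducing the statement to checking pairwise prediction margins at each training instance.

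First, I would diagonalize $\mPhi$ via the decomposition $\mPhi = (1-c)\bsI + (c-d)\bsJ_{\mathrm{class}} + d\,\bsJ_{\mathrm{super}}$, where $\bsJ_{\mathrm{class}}$ and $\bsJ_{\mathrm{super}}$ are the $\{0,1\}$ indicator matrices of same-class and same-superclass pairs, respectively. This exposes three orthogonal $\mPhi$-invariant subspaces: an $R$-dimensional superclass-indicator subspace with eigenvalue $\lambda_1 = 1 + (n-1)c + (K/R - 1)nd$; a $(K-R)$-dimensional subspace of class-indicator vectors that are mean-zero within each superclass, with eigenvalue $\lambda_K = 1 - c + n(c-d)$; and the residual $K(n-1)$-dimensional complement with eigenvalue $\lambda_{K+1} = 1 - c$. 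Letting $\bsP_1, \bsP_2, \bsP_3$ be the corresponding orthogonal projectors and setting $\bsP_{\mathrm{class}} := \bsP_1 + \bsP_2$, one checks $[\bsP_{\mathrm{class}}]_{j,i} = \mathbbm{1}[y_j = y_i]/n$ and $[\bsP_1]_{j,i} = \mathbbm{1}[h(y_j) = h(y_i)] \cdot R/(Kn)$.

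Next, I would substitute $\bsM := \tilde{\lambda}_1^t \bsP_1 + \tilde{\lambda}_K^t \bsP_2 + \tilde{\lambda}_{K+1}^t \bsP_3$ into \eqref{eqn:output_closed_self} and evaluate the prediction margin between the true class $k^* := y_i$ and an alternative class $k'$ at a training instance $i$. Writing this margin as $\bsu^\top \bsM \bse(i)$ with $u_j := \mathbbm{1}[\hat{y}_j = k^*] - \mathbbm{1}[\hat{y}_j = k']$, applying the projection identities above, and using the definition of $\bsC$, the margin decomposes into: (a) the class-level signal $(\tilde{\lambda}_K^t - \tilde{\lambda}_{K+1}^t)([\bsC]_{k^*, k^*} - [\bsC]_{k^*, k'})$; (b) the per-instance residual $\tilde{\lambda}_{K+1}^t (\mathbbm{1}[\hat{y}_i = k^*] - \mathbbm{1}[\hat{y}_i = k'])$, which lies in $[-\tilde{\lambda}_{K+1}^t, \tilde{\lambda}_{K+1}^t]$; and (c) a superclass-level contribution $(\tilde{\lambda}_1^t - \tilde{\lambda}_K^t)(R/K)\sum_{k'' : h(k'') = h(k^*)}([\bsC]_{k'', k^*} - [\bsC]_{k'', k'])$.

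The heart of the argument is that the stated hypothesis is exactly the threshold at which (a) beats the worst case of (b). The worst case of (b) occurs when $\hat{y}_i = k'$, contributing $-\tilde{\lambda}_{K+1}^t$; demanding $(\tilde{\lambda}_K^t - \tilde{\lambda}_{K+1}^t)([\bsC]_{k,k} - [\bsC]_{k,k'}) > \tilde{\lambda}_{K+1}^t$ and dividing through by $\tilde{\lambda}_{K+1}^t$ recovers $[\bsC]_{k,k} - [\bsC]_{k,k'} > 1/((\tilde{\lambda}_K/\tilde{\lambda}_{K+1})^t - 1)$, i.e.\ the hypothesis. Since this must hold for every $k \neq k'$, the ground-truth class attains the strictly largest predicted score at every training instance; combined with the excerpt's remark that on this balanced block-structured dataset $100\%$ training accuracy coincides with $100\%$ population accuracy in the limit $n \to \infty$, the claim follows.

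I expect the main obstacle to be term (c), since $\tilde{\lambda}_1$ does not appear in the stated threshold. The cleanest route is to exploit the superclass structure: if the corruption concentrates within superclasses (so $[\bsC]_{k'', k'} = 0$ whenever $h(k'') \neq h(k')$), then the row/column balance forces the restricted sums $\sum_{k'' : h(k'') = h(k^*)} [\bsC]_{k'', k}$ to equal $1$ for $k$ inside that superclass and $0$ for $k$ outside, so (c) vanishes when $h(k') = h(k^*)$ and is strictly positive otherwise. Without this within-superclass confinement, one must argue case by case that (c) is sign-favorable -- or absorb its magnitude into a tightened threshold. Formalizing the sign of (c), either via an implicit structural assumption on $\bsC$ or via a direct bound, is the most delicate step in matching the margin calculation to the declared condition.
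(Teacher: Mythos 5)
Your proposal is correct and follows essentially the same route as the paper: the paper's Lemma \ref{lem:output_correlation} is exactly your spectral decomposition of $\mPhi^{(t)}$ into class-wise, superclass-wise, and identity projectors (with $p=\tilde{\lambda}_{K+1}$, $q=\tilde{\lambda}_K$), and the binding constraint is likewise the noisy-sample comparison $\hat{y}_i=k'$, yielding $[\bsC]_{k,k}>[\bsC]_{k,k'}+p^t/(q^t-p^t)$. The "delicate" term (c) you flag is handled in the paper exactly as you anticipate: the formal Theorem \ref{thm:self} assumes $[\bsC]_{k,k'}=0$ whenever $h(k)\neq h(k')$, which makes the superclass contribution uniform (hence cancelling) within a superclass and strictly favorable across superclasses.
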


This theorem shows that as the number of distillation rounds $t$ increases, the condition for achieving 100\% population accuracy becomes less stringent due to the eigenvalue ratio, $(\tilde{\lambda}_K/\tilde{\lambda}_{K+1})^t$, increasing exponentially with $t$.  This allows the model to tolerate higher levels of label corruption while still guaranteeing correct classification, highlighting the robustness conferred by multi-round self-distillation in noisy environments. See Section \ref{sec:sd_noise} for a detailed analysis.


Lastly, to reduce the computational cost of multiple distillation rounds, we propose a novel method that achieves the benefits of multi-round self-distillation in a single round. The key idea is to refine the teacher's output into partial labels, focusing on the top two predicted classes. Specifically, we assign equal weights (1/2) to the top two label positions in the teacher's softmax output and use this refined target to train the student model.

\begin{thm}[Informal ver. of Thm. \ref{thm:pll}]
The student model trained using partial labels, refined from the teacher's outputs, achieves 100\% population accuracy, if
$
[\bsC]_{k,k}>[\bsC]_{k,k'}, \forall k,k'(\neq k)\in[K].
$
\end{thm}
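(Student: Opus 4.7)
The plan is to apply the label-averaging formula~\eqref{eqn:output_closed_self} twice with $t=1$: first to the noisy one-hot targets to characterize the teacher's softmax output and its top-2 ranking, and then to the refined partial-label targets $\bsP$ to analyze the resulting PLL student. The starting point is the eigendecomposition of the block-structured Gram matrix $\mPhi$ from~\eqref{eqn:corr}. Writing $\mPhi=(1-c)\bsI+(c-d)\bsB+d\bsS$ with $\bsB$ the class-block and $\bsS$ the superclass-block all-ones matrices, one identifies three mutually orthogonal eigenspaces of $\mPhi$: $R$ superclass-constant directions (eigenvalue $\lambda_{\mathrm{sup}}=1-c+n(c-d)+(Kn/R)d$), $K-R$ class-constant, superclass-zero-mean directions (eigenvalue $\lambda_K=1-c+n(c-d)$), and $Kn-K$ class-zero-mean directions (eigenvalue $\lambda_{K+1}=1-c$). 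Denoting the shrinkage factors $\mu_\bullet:=\lambda_\bullet/(K^2n\lambda+\lambda_\bullet)$ and the corresponding projectors $P_{\mathrm{sup}},P_{\mathrm{cls}},P_{\mathrm{ins}}$, we have $\mPhi^{(1)}=\mu_{\mathrm{sup}}P_{\mathrm{sup}}+\mu_K P_{\mathrm{cls}}+\mu_{K+1}P_{\mathrm{ins}}$.

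For the teacher stage, I apply this decomposition row-by-row to $\bsY^{(0)}\mPhi^{(1)}$. For an instance $i$ with true class $k=y_i$ lying in superclass $r=h(k)$, the $(k',i)$-entry splits into a class-level term $(\mu_K-\mu_{K+1})[\bsC]_{k,k'}$, a superclass-level term proportional to $\sum_{k''\in r}[\bsC]_{k'',k'}$, and an instance-level residual $\mu_{K+1}\mathbbm{1}(\hat y_i=k')$. Under the hypothesis $[\bsC]_{k,k}>[\bsC]_{k,k'}$ for all $k'\neq k$, the class-level component is uniquely maximized at $k'=k$. The key step is to argue that the combined superclass and instance-level perturbations can unseat $k$ from top-$1$ by at most one coordinate: the instance-level term is nonzero only at the single coordinate $k'=\hat y_i$, and the superclass-level term favors the single class $k^\star\in r$ attracting the largest column marginal of $\bsC$ restricted to $r$. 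Consequently at most one coordinate can overtake $k$, so $k$ remains among the top two predictions of the teacher for every training instance, i.e., $y_i\in\{k_1^{(i)},k_2^{(i)}\}$.

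For the student stage, the refined target is $\bsp_i=\tfrac{1}{2}(\bse(k_1^{(i)})+\bse(k_2^{(i)}))$, and by the teacher analysis $[\bsp_i]_{y_i}=\tfrac{1}{2}$ holds for every $i$. Assembling $\bsP=[\bsp_1,\ldots,\bsp_{Kn}]$ and reapplying~\eqref{eqn:output_closed_self} with $\bsY^{(0)}$ replaced by $\bsP$, the student's output obeys $\bsY^{\mathrm{PLL}}-\tfrac{1}{K}\mathbf{1}_{K\times Kn}=(\bsP-\tfrac{1}{K}\mathbf{1}_{K\times Kn})\mPhi^{(1)}$. Define the ``partial-label corruption matrix'' $[\bsC^{\mathrm{PLL}}]_{k,k'}:=\tfrac{1}{2n}|\{i:y_i=k,\,k'\in\{k_1^{(i)},k_2^{(i)}\}\}|$; by construction $[\bsC^{\mathrm{PLL}}]_{k,k}=\tfrac{1}{2}$, and every off-diagonal entry is strictly smaller, since the second choice $k_2^{(i)}$ never equals $y_i$. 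Rerunning the spectral bookkeeping of the previous paragraph with $\bsC^{\mathrm{PLL}}$ in place of $\bsC$ yields strict dominance of the true-class logit over every competitor: the class-level, superclass-level, and instance-level terms now either favor the true class or vanish, and no single competitor can absorb more mass than the true class's $\tfrac{1}{2}$. Hence $\arg\max$ equals $y_i$ at every training point, and population accuracy follows by the same inner-product argument as in Theorem~\ref{thm:self}, since a fresh $\bsx$ with true class $k$ has $\langle\phi(\bsx),\phi(\bsx_j)\rangle\in\{c,d,0\}$ depending on the relative class and superclass of $y_j$.

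The central technical obstacle is the top-2 claim in the teacher stage: one must quantitatively bound the superclass-level component of the logit to conclude that at most one competitor can surpass $k$. The sketched argument---that the superclass term concentrates excess mass on at most one class in $r$ (the one maximizing $\sum_{k''\in r}[\bsC]_{k'',k'}$), while the instance-level residual concentrates on the single class $\hat y_i$---needs to be made rigorous, either via a direct inequality chain using $[\bsC]_{k,k}>[\bsC]_{k,k'}$ or through a column-marginal argument on $\bsC$. Once this step is settled, the student stage is a clean consequence: the partial-label target carries strict diagonal dominance $\tfrac{1}{2}>[\bsC^{\mathrm{PLL}}]_{k,k'}$, which makes the final argmax conclusion automatic.
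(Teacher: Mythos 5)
Your overall route is the same as the paper's: apply the one-step label-averaging formula once to the one-hot targets to show the true label is always among the teacher's top two, and once more to the two-hot refined targets to show the true label becomes the argmax; both steps rest on the spectral decomposition of the block Gram matrix into superclass-constant, class-constant, and residual eigenspaces. The difference is that you leave the crucial teacher-stage top-2 claim open, and the patch you sketch for it does not work as stated.

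The gap: you treat the superclass-level term as a perturbation that can ``favor the single class $k^\star\in r$ attracting the largest column marginal of $\bsC$ restricted to $r$,'' and then argue that together with the instance-level spike at $\hat y_i$ ``at most one coordinate can overtake $k$.'' But if the superclass term really could favor some $k^\star$ with $k^\star\neq\hat y_i\neq y_i$, then two distinct coordinates ($k^\star$ and $\hat y_i$) could each exceed the true-label logit, pushing $y_i$ to third place and breaking the whole construction. The paper closes this step by invoking two standing assumptions you did not use: the given labels are class-balanced ($|\{i:\hat y_i=k'\}|=n$ for every $k'$, so every column of $\bsC$ sums to $1$) and $[\bsC]_{k,k'}=0$ whenever $h(k)\neq h(k')$. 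Together these force $\sum_{k''\in r}[\bsC]_{k'',k'}=1$ for every $k'\in r$, i.e., the superclass-average label vector is exactly uniform on the superclass (see \eqref{eqn:superclass_unif}); it is a constant offset within $r$, not a perturbation. The only non-uniform term beyond the class-level one is then the single-coordinate instance spike $p\,\mathbbm{1}(k'=\hat y_i)$, so under $[\bsC]_{k,k}>[\bsC]_{k,k'}$ the true label can lose to at most that one coordinate and is always at least second. Once this observation is inserted, the rest of your argument (the matrix $\bsC^{\mathrm{PLL}}$ with diagonal $1/2$, re-running the same bookkeeping) matches the paper's case analysis; note only that your claimed strict inequality $[\bsC^{\mathrm{PLL}}]_{k,k'}<1/2$ can in fact be an equality when all samples of class $k$ share the same second candidate, a boundary case the paper itself handles only with a non-strict bound.
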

This condition requires that, for each class $k$, the proportion of samples correctly labeled as $k$ is greater than the proportion mislabeled as any other class $k'$. Under this mild condition, our method effectively attains the label averaging benefits of multi-round distillation without the need for repeated distillations. 
 See Section \ref{sec:sd_pll} for details. Full proofs of our theoretical results are available in Appendix.

 \section{Label Averaging Effect of Multi-Round Self-Distillation}\label{sec:label_avg}

\begin{figure*}[t]
    \vspace{-10pt}
    \centering
    \subfloat[]{\includegraphics[width=0.25\textwidth]{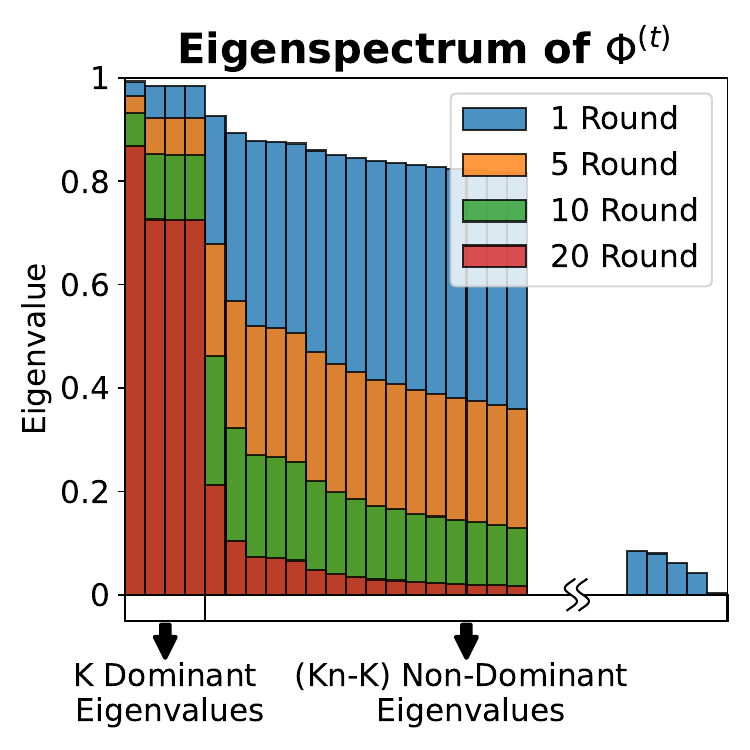}\label{fig:eigenspectrum}}
    \subfloat[]{\includegraphics[width=0.75\textwidth]{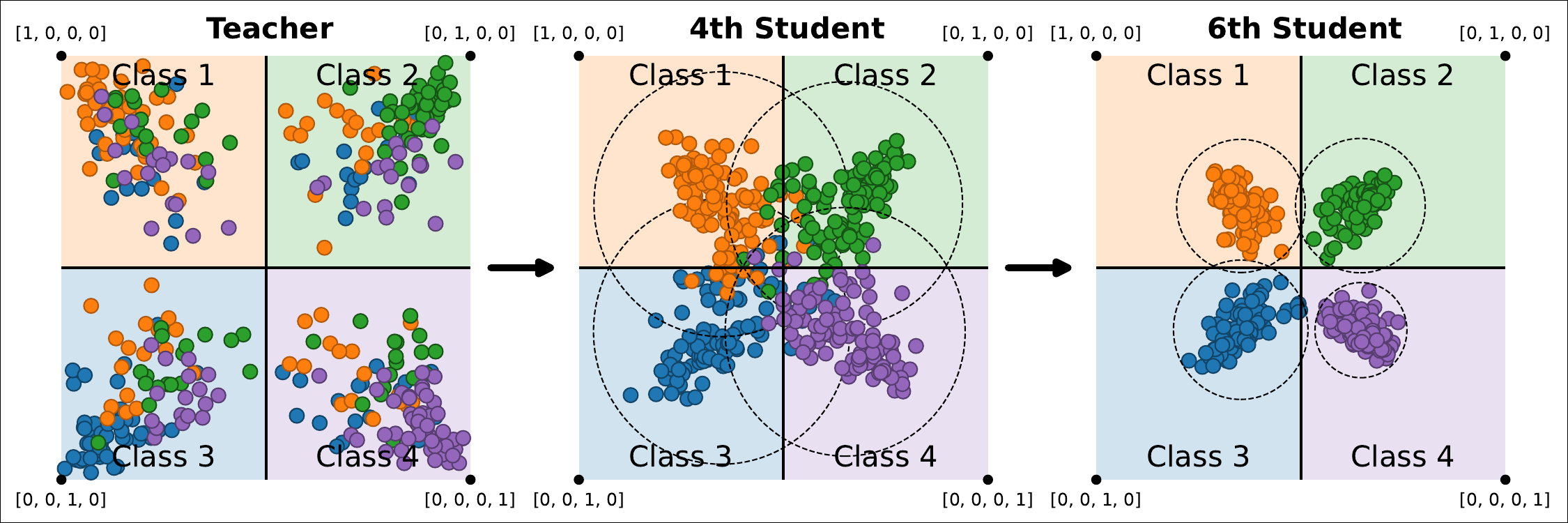}
    \label{fig:output_clst}}
    \caption{(a) The evolution of the eigenvalues of $\mPhi^{(t)}$ as the distillation rounds $t$ increases. Notably, only $K(=4)$ top-eigenvalues dominate the others as $t$ progresses. 
    (b) A 2D plot showing the softmax outputs of the teacher model and the 4th and 6th student models for the training instances. Each corner represents the one-hot vectors of a class. As the distillation round $t$ increases, instances form distinct \textit{clusters} based on their true labels (represented by colors in dots), driven by the label averaging among highly correlated input instances within the same class. See Appendix~\ref{app:output_full} for full evolution. 
    }\label{fig:exp_synth}
\end{figure*}
We present a closed-form solution for the softmax outputs $\bsY^{(t)} = [\bsy_1^{(t)}, \dots, \bsy_{Kn}^{(t)}] \in \mathbb{R}^{K\times Kn}$ of the $t$-th student model, in terms of the Gram matrix $\mPhi\in \mathbb{R}^{Kn\times Kn}$, defined as $[\mPhi]_{i,j} := \inp{\phi(\bsx_i), \phi(\bsx_j)}$.
 
 \begin{thm}\label{thm:Q1}
As the number of self-distillation rounds $t \in \Z^{+}$ increases, the output predictions for the training instances evolve as
\begin{equation}\label{eqn:output_closed_self1}
    \bsY^{(t)} -\textbf{1}_{K \times Kn}/K= \left(\bsY^{(0)} -  \textbf{1}_{K \times Kn}/K\right) \left[\bsI_{Kn}-\left(\bsI_{Kn}+\mPhi/{(K^2n\lambda)}\right)^{-1}\right]^t.
\end{equation} 
 \end{thm}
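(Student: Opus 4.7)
The plan is to derive a closed-form one-step recursion for the centered output matrix $\tilde{\bsY}^{(t)} := \bsY^{(t)} - \textbf{1}_{K \times Kn}/K$ under the linear softmax approximation and then iterate it $t$ times. The necessary ingredients are already laid out in the setup: equation~(3) expresses the $t$-th student's logits on training inputs as $\bsth^{(t)\top}\phi(\bsx_i) = \sum_{j=1}^{Kn} [\mPhi]_{ij}\,\bsal_j^{(t)}$; equation~(4) gives the stationarity identity $Kn\lambda\,\bsal_i^{(t)} = \bsy_i^{(t-1)} - \bsy_i^{(t)}$; and Assumption~1 supplies $\sigma(\bsv) \approx \textbf{1}_K/K + \bsv/K$, which is valid because the logits are zero-mean (the discussion preceding~(6)).

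First I would stack the coefficients into $\bsA^{(t)} := [\bsal_1^{(t)},\ldots,\bsal_{Kn}^{(t)}] \in \R^{K \times Kn}$ and rewrite the two ingredients compactly as
\begin{equation*}
\bsY^{(t)} = \frac{1}{K}\textbf{1}_{K \times Kn} + \frac{1}{K}\bsA^{(t)}\mPhi, \qquad Kn\lambda\,\bsA^{(t)} = \bsY^{(t-1)} - \bsY^{(t)}.
\end{equation*}
Eliminating $\bsA^{(t)}$ gives $\tilde{\bsY}^{(t)} = (\tilde{\bsY}^{(t-1)} - \tilde{\bsY}^{(t)})\,\mPhi/(K^2 n\lambda)$, where the $\textbf{1}_{K\times Kn}/K$ shift cancels because $\bsY^{(t-1)} - \bsY^{(t)}$ has zero column sums. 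Right-multiplying by $(\bsI_{Kn} + \mPhi/(K^2 n\lambda))^{-1}$, which is well-defined since $\mPhi \succeq 0$ and $\lambda > 0$, yields
\begin{equation*}
\tilde{\bsY}^{(t)} = \tilde{\bsY}^{(t-1)}\,\bigl(\mPhi/(K^2 n\lambda)\bigr)\,\bigl(\bsI_{Kn} + \mPhi/(K^2 n\lambda)\bigr)^{-1}.
\end{equation*}
A one-line algebraic identity then converts the right-multiplying factor into the form appearing in~(7): writing $\mPhi/(K^2 n\lambda) = \bigl(\bsI_{Kn} + \mPhi/(K^2 n\lambda)\bigr) - \bsI_{Kn}$ immediately gives $(\mPhi/(K^2 n\lambda))(\bsI_{Kn} + \mPhi/(K^2 n\lambda))^{-1} = \bsI_{Kn} - (\bsI_{Kn} + \mPhi/(K^2 n\lambda))^{-1}$. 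Since the resulting right-multiplying matrix is independent of $t$, iterating the one-step recursion $t$ times starting from $\bsY^{(0)} = [\bse(\hat{y}_1),\ldots,\bse(\hat{y}_{Kn})]$ produces the closed form (7).

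The main obstacle I anticipate is self-consistency of Assumption~1 across all rounds: the approximation requires zero-mean logits at every $t$, and one must check this property is preserved by the recursion so that the same linearization can be reused at each step. I would close this point inductively by noting that $\sum_k [\bsy_i^{(t)}]_k = 1$ for every round combined with~(4) gives $\sum_k [\bsal_i^{(t)}]_k = 0$ for all $i,t$, whence $\sum_k [\bsth^{(t)\top}\phi(\bsx_i)]_k = \sum_j [\mPhi]_{ij} \sum_k [\bsal_j^{(t)}]_k = 0$, so the zero-mean logit property feeding $\sigma_L$ is maintained throughout. Aside from this bookkeeping, the derivation is purely linear-algebraic and invokes no further assumptions beyond $\mPhi \succeq 0$ and $\lambda > 0$.
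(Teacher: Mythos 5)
Your derivation is correct and follows essentially the same route as the paper's proof in Appendix I: linearize the stationarity condition \eqref{eqn:alpha_sm} via Assumption~\ref{asm:softmax}, stack the coefficients into $\bsA^{(t)}$, eliminate $\bsA^{(t)}$ using $Kn\lambda\,\bsA^{(t)}=\bsY^{(t-1)}-\bsY^{(t)}$ and the symmetry of $\mPhi$ to obtain the one-step recursion $\tilde{\bsY}^{(t)}=\tilde{\bsY}^{(t-1)}\bigl[\bsI_{Kn}-(\bsI_{Kn}+\mPhi/(K^2n\lambda))^{-1}\bigr]$, and iterate. Your closing remark verifying that the zero-mean-logit property propagates across rounds (via $\sum_k[\bsal_i^{(t)}]_k=0$) matches the justification the paper gives in the discussion preceding the linearization, so nothing is missing.
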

 This result shows that each round of self-distillation effectively averages the given labels (columns of  $\bsY^{(0)}$) by repeatedly multiplying $(\bsI_{Kn} - \left(\bsI_{Kn}+\mPhi/{(K^2n\lambda)}\right)^{-1})$. 
By applying Woodbury's matrix identity and using the eigen-decomposition of $\mPhi= \sum_{i=1}^{Kn} \lambda_i \bsv_i \bsv_i^\top$, we can express the right-hand side of \eqref{eqn:output_closed_self1} as in \eqref{eqn:output_closed_self}, using
$
{\mPhi}^{(t)}:=\left[\bsI_{Kn} -\left(\bsI_{Kn} +\mPhi/{(K^2n\lambda)}\right)^{-1}\right]^t=\sum_{i=1}^{Kn} \left(\frac{\lambda_i}{K^2n\lambda+\lambda_i}\right)^t \bsv_i \bsv_i^\top.
$
The relationship between the Gram matrix $\mPhi$ and the \text{label averaging matrix} ${\mPhi}^{(t)}$ reveals the effect of multi-round self-distillation, summarized as follows: 
\begin{itemize}[leftmargin=1.5em]
\item Clustering Based on Feature Correlation: The Gram matrix $\mPhi$, defined as $[\mPhi]_{ij} := \inp{\phi(\bsx_i), \phi(\bsx_j)}$, captures the pairwise feature correlations among the training instances. If instances from the same ground-truth class exhibit higher feature correlations than those from different classes--as assumed in \eqref{eqn:corr}--the top $K$ eigenvectors of $\mPhi$ indicate clusters representing each class (See Appendix \ref{app:proof_lemma_corr} for details). Since ${\mPhi}^{(t)}$ shares the same eigenvectors as $\mPhi$, the label averaging process via ${\mPhi}^{(t)}$ causes the predictions for each instance to be increasingly influenced by the average labels of other instances within the same ground-truth class as the number of distillation rounds $t$ increases. 
\item  Dominance of Top Eigenvectors: The effect of multi-round self-distillation is reflected in the eigenspectrum of ${\mPhi}^{(t)}$, where the eigenvalues are  $\lambda_i^{(t)} := ({\lambda_i}/({K^2n\lambda+\lambda_i}))^t$, for $i \in [Kn]$. As  $t$ increases, these eigenvalues decay at different rates depending on $\lambda_i$. Specifically, the top $K$ eigenvalues (associated with the largest $\lambda_i$) decay more slowly than the others. 
In Fig. \ref{fig:eigenspectrum}, we illustrate how the eigenvalues of ${\mPhi}^{(t)}$ evolve with increasing $t$. We use a training dataset with $K = 4$ classes and $n = 100$ samples per class, where the Gram matrix follows a block-wise structure specified in \eqref{eqn:corr} with parameters $c=0.4$ and $d=0.1$ for $R=1$, and is perturbed by a random matrix $\mE$ with entries $[\mE]_{i,j} \sim U(-0.05, 0.05)$. As the top-$K$ eigenvalues increasingly dominate,  the model's predictions become more clustered among instances of the same ground-truth class, as shown in Fig. \ref{fig:output_clst}. This clustering effect persists as long as the top-$K$ eigenvectors of $\mPhi$ align with the ground-truth classes.
The robustness of this alignment, even under perturbations from $\mE$, is ensured provided that the perturbations are small relative to the eigen-gap $(\lambda_K-\lambda_{K+1})$, as established by the Davis-Kahan theorem \citep{davis1970rotation}. 
\end{itemize}

 \section{Gain of Multi-Round Self-Distillation in Label-Noise Scenarios}\label{sec:sd_noise}

 \begin{figure*}[t]
    \centering
    \vspace{-10pt}
    \subfloat[]{\includegraphics[width=0.33\textwidth]{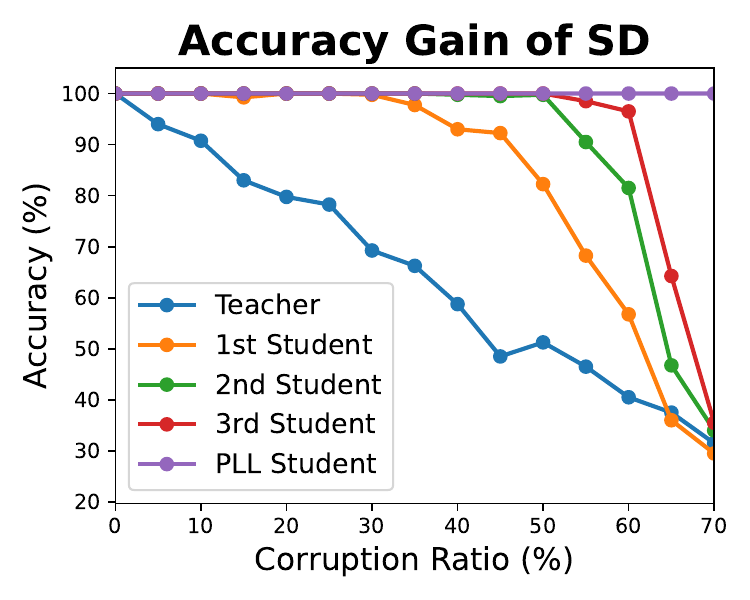}\label{fig:acc_gain}}
    \subfloat[]{\includegraphics[width=0.66\textwidth]{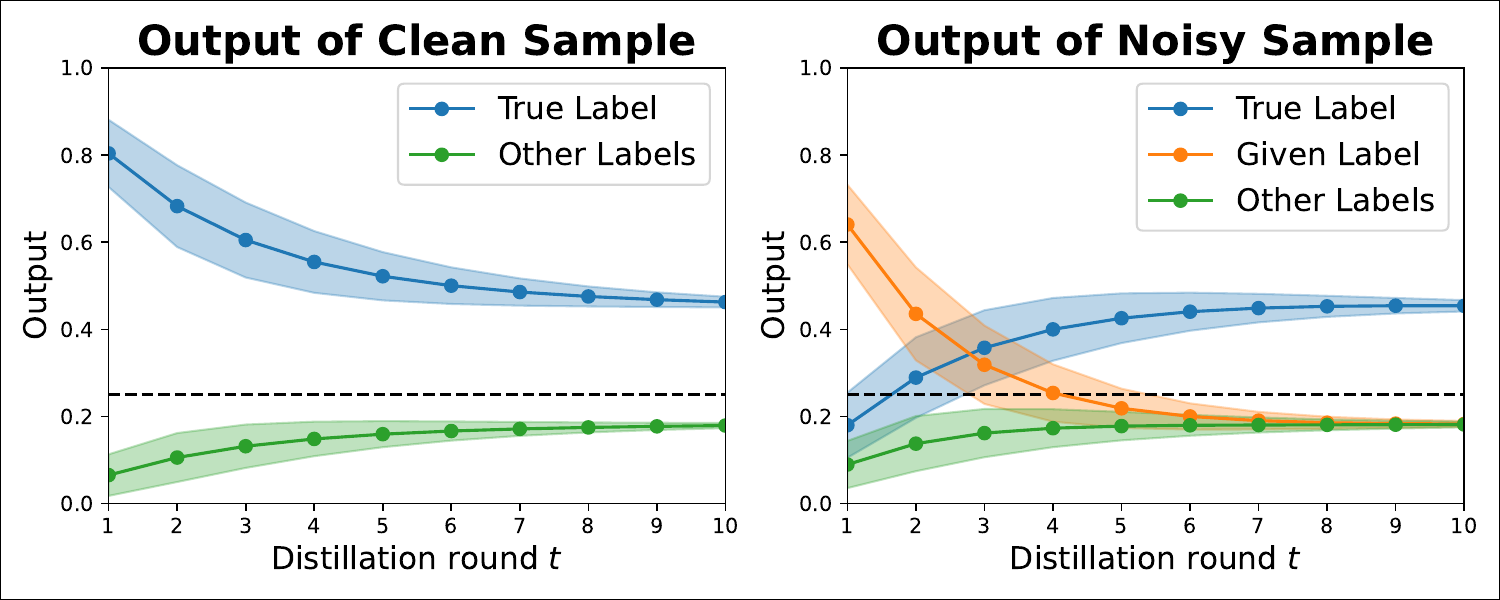}\label{fig:clean_noisy_output}}
        \caption{(a) Prediction accuracy of the teacher and student models for different distillation rounds $t$ as the label corruption rate increases. The 100\% population accuracy regime expands for student models with more distillation rounds.
        (b) Softmax outputs for $K=4$ classification as $t$ increases, using the same setup as Fig. \ref{fig:exp_synth}  with a 50\% label corruption rate. The {\color{C0} BLUE} line shows the average softmax value at the true label, the {\color{C1} ORANGE} line for noisy samples at the given label, and the {\color{C2} GREEN} line for other labels. Shaded regions show $\pm 1$ standard deviation. For clean samples, the true label  maintains the highest value, while for noisy samples, the true label starts lower than the given label but surpasses it after a few rounds. The shaded region shrinks with $t$ due to label clustering.
    }\label{fig:acc_gain_synt}
\end{figure*}
 In this section, we quantify the gains of multi-round self-distillation by identifying conditions under which the $t$-th student model can achieve 100\%  accuracy, even when trained on noisy dataset. 
 
 
\begin{thm}\label{thm:self}
Suppose we have access to the population, i.e., $n\to\infty$. Let $\bsC\in \mathbb{R}^{K\times K}$ be the label corruption matrix of the training dataset, defined as $
[\bsC]_{k,k'}:=|\{i: y_i=k, \hat{y}_i=k'\}|/n$. We assume that $[\bsC]_{k,k'}=0$ whenever $h(k)\neq h(k')$, where $h(k)$ denotes the superclass of class $k$. Then, the $t$-th distilled model, trained on this label-noisy dataset with Gram matrix $\mPhi$ as described in \eqref{eqn:corr}, achieves 100\% population accuracy if
\beq\label{eqn:cond_st_100}
[\bsC]_{k,k}>[\bsC]_{k,k'}+{1}/\left({(\tilde{\lambda}_K/\tilde{\lambda}_{K+1})^t-1}\right),\quad \forall k,k'\in[K] \text{ with }k\neq k'
\eeq
where $\tilde{\lambda}_K=\lambda_{K}/(K^2n\lambda+\lambda_K)$ and $\tilde{\lambda}_{K+1}=\lambda_{K+1}/(K^2n\lambda+\lambda_{K+1})$, with $\lambda_K=1-c+n(c-d)$ and $\lambda_{K+1}=1-c$. 
Conversely, if there exist $k,k'\in[K]$ with $k\neq k'$ such that
\beq\label{eqn:cond_st_n100}
[\bsC]_{k,k}<[\bsC]_{k,k'}+{1}/\left({(\tilde{\lambda}_K/\tilde{\lambda}_{K+1})^{t-1}-1}\right),
\eeq
then the $(t-1)$-th distilled model fails to correctly classify label-noisy samples from class $k$ that have been mislabelled as $\hat{y}_i=k'$, and thus fails to achieve 100\% population accuracy.
\end{thm}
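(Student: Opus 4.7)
The plan is to combine Theorem~\ref{thm:Q1} with the fact that, under \eqref{eqn:corr}, the Gram matrix $\mPhi$ has only three distinct eigenvalues, so that $\mPhi^{(t)}$ admits a clean closed form from which the decision margin between the true class and any competitor can be read off directly.

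First I would diagonalize $\mPhi$ explicitly. Under \eqref{eqn:corr}, $\mPhi$ splits into three mutually orthogonal eigenspaces: the $R$-dimensional space of vectors constant on each superclass, with eigenvalue $\lambda_{\text{sup}}:=1-c+n(c-d)+nKd/R$; the $(K-R)$-dimensional space of vectors constant on each class but summing to zero on each superclass, with eigenvalue $\lambda_K=1-c+n(c-d)$; and the $K(n-1)$-dimensional space of vectors summing to zero on each class, with eigenvalue $\lambda_{K+1}=1-c$. Denoting the corresponding projectors $P_1,P_2,P_3$ and substituting into Theorem~\ref{thm:Q1} yields
\[
\mPhi^{(t)} \;=\; \tilde\lambda_{\text{sup}}^{\,t} P_1 + \tilde\lambda_K^{\,t} P_2 + \tilde\lambda_{K+1}^{\,t} P_3, \qquad \tilde\lambda_\star:=\lambda_\star/(K^2 n\lambda+\lambda_\star).
\]
Expressing $P_1,P_2,P_3$ in the basis $\{I,\,A_{\text{cls}},\,B_{\text{sup}}\}$ (identity, class indicator, superclass indicator) gives an explicit block form with diagonal entry $a_t$, within-class off-diagonal entry $b_t$, within-superclass cross-class entry $c_t$, and zero between different superclasses. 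Two identities will do most of the work: $a_t-b_t=\tilde\lambda_{K+1}^{\,t}$, and the superclass row sum $S:=\sum_j[\mPhi^{(t)}]_{ij}$ equals $\tilde\lambda_{\text{sup}}^{\,t}$ exactly (since $\mathbf{1}_{\text{superclass}}$ is an eigenvector of $P_1$ with eigenvalue $1$, and of $P_2,P_3$ with eigenvalue $0$).

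Next, for an instance $i$ with $y_i=k$ and given label $\hat y_i=k^\ast$, Theorem~\ref{thm:Q1} gives $[\bsy_i^{(t)}]_m-\tfrac{1}{K}=\sum_{j:\hat y_j=m}[\mPhi^{(t)}]_{ij}-\tfrac{S}{K}$. Grouping the inner sum by $(y_j,\hat y_j)$ using $|\{j:y_j=k',\hat y_j=m\}|=n[\bsC]_{k',m}$ together with $nb_t=(\tilde\lambda_K^{\,t}-\tilde\lambda_{K+1}^{\,t})+\tfrac{R}{K}(\tilde\lambda_{\text{sup}}^{\,t}-\tilde\lambda_K^{\,t})$ and $nc_t=\tfrac{R}{K}(\tilde\lambda_{\text{sup}}^{\,t}-\tilde\lambda_K^{\,t})$, and invoking the assumption $[\bsC]_{k,k'}=0$ whenever $h(k)\neq h(k')$, yields a closed form in which, for $h(m)=h(k)$, the class-balance identity $\sum_{\tilde k:\,h(\tilde k)=h(k)}[\bsC]_{\tilde k,m}=1$ collapses all $\tilde\lambda_{\text{sup}}^{\,t}$ contributions to an $m$-independent constant. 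Differencing two such predictions with $h(k')=h(k)$ then eliminates all superclass-level and constant pieces, leaving
\[
[\bsy_i^{(t)}]_k - [\bsy_i^{(t)}]_{k'} \;=\; \tilde\lambda_{K+1}^{\,t}\bigl(\mathbbm{1}[k^\ast=k]-\mathbbm{1}[k^\ast=k']\bigr) + \bigl([\bsC]_{k,k}-[\bsC]_{k,k'}\bigr)\bigl(\tilde\lambda_K^{\,t}-\tilde\lambda_{K+1}^{\,t}\bigr).
\]
For $k'$ in a different superclass a direct calculation shows this margin is automatically positive, so the binding competitor is always some $k'$ with $h(k')=h(k)$.

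Finally, the worst case within $h(k)$ is $k^\ast=k'$, i.e., instances whose label has been flipped from $k$ to $k'$; requiring the margin to remain positive for every such $k'$ reduces exactly to
\[
[\bsC]_{k,k}-[\bsC]_{k,k'} \;>\; \tilde\lambda_{K+1}^{\,t}/(\tilde\lambda_K^{\,t}-\tilde\lambda_{K+1}^{\,t}) \;=\; 1/\bigl((\tilde\lambda_K/\tilde\lambda_{K+1})^{\,t}-1\bigr),
\]
which is \eqref{eqn:cond_st_100}. Since by symmetry all training instances sharing a common $(y,\hat y)$-pair have the same softmax output, classifying every training instance correctly is equivalent to $100\%$ population accuracy as $n\to\infty$. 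The converse is read off the same margin identity applied at round $t-1$: under \eqref{eqn:cond_st_n100}, the margin for any sample with $(y_i,\hat y_i)=(k,k')$ is strictly negative, so the $n[\bsC]_{k,k'}>0$ flipped samples are misclassified. The main technical obstacle I foresee is the block-structure bookkeeping---identifying the three projectors, justifying the vanishing of between-superclass entries of $\mPhi^{(t)}$, and tracking $a_t,b_t,c_t,S$ precisely enough that $a_t-b_t=\tilde\lambda_{K+1}^{\,t}$ and $S=\tilde\lambda_{\text{sup}}^{\,t}$ conspire with the balance identity to erase all $\tilde\lambda_{\text{sup}}^{\,t}$ terms. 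Once that accounting is in place, the decisive inequality drops out by direct comparison.
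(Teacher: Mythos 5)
Your proposal is correct and follows essentially the same route as the paper: eigendecomposition of the block-structured Gram matrix into the superclass/class/residual eigenspaces, the resulting closed form for $\mPhi^{(t)}$ (the paper's Lemma \ref{lem:output_correlation}, with your $a_t-b_t=\tilde\lambda_{K+1}^{\,t}$ identity corresponding to its coefficient $p^t$), and the same clean-vs-noisy margin comparison in which the superclass and uniform terms cancel, yielding condition \eqref{eqn:cond_st_100} and its converse. The only cosmetic difference is that your single eigenvalue $\lambda_{\text{sup}}$ implicitly assumes equal-sized superclasses, whereas the paper keeps a per-superclass $r_s$ depending on $K_s$; since those terms drop out of every within-superclass margin, this does not affect the argument.
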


These conditions illustrate the advantages of multi-round self-distillation. Specifically, when, 
\beq
[\bsC]_{k,k'}+{1}/\left({(\tilde{\lambda}_K/\tilde{\lambda}_{K+1})^t-1}\right)< [\bsC]_{k,k}<[\bsC]_{k,k'}+{1}/\left({(\tilde{\lambda}_K/\tilde{\lambda}_{K+1})^{t-1}-1}\right)
\eeq
for some $k,k'\in[K]$ with $k\neq k'$, the $t$-th distilled model can correctly classify label-noisy samples from the class $k$ that have been mislabelled as $\hat{y}_i = k'$, while the $(t-1)$-th distilled model cannot. This demonstrates that multi-round self-distillation, through the label averaging effect described in Sec. \ref{sec:label_avg}, enhances the model's robustness to label noise, allowing it to tolerate higher rates of label corruption while still achieving 100\% population accuracy. 
This phenomenon is illustrated in Fig. \ref{fig:acc_gain}, where we compare the classification accuracy of the teacher and student models across increasing rounds of distillation ($t$), as the label corruption rate increases, using the same experimental setup as Fig. \ref{fig:exp_synth}. As the distillation round $t$ increases, the range of label corruption for which the student model maintains 100\% accuracy expands, as the condition \eqref{eqn:cond_st_100} becomes easier to satisfy.
In Fig. \ref{fig:clean_noisy_output}, we illustrate how the softmax output components $[\bsy^{(t)}]_k$, for $k\in[K]$, evolves for clean samples (left) and label-noisy samples (right) as the distillation round $t$ increases, with a 50\% label corruption rate. For label-noisy samples, while the teacher model ($t=1$) tends to assign the highest softmax value to the incorrect noisy label due to memorization, after a few rounds of distillation ($t \geq 3$),  the true label begins to attain the highest value in the softmax output, thus improving classification performance.
\begin{proof}[Proof Sketch of Theorem \ref{thm:self}]
The proof relies on analyzing the evolution of the model's outputs over multiple distillation rounds under the Gram matrix model \eqref{eqn:corr}. The key result is Lemma \ref{lem:output_correlation}. 
\begin{lem}\label{lem:output_correlation} Under the Gram matrix model \eqref{eqn:corr}, the output of the $t$-th distilled model for the training instance $(\phi(\bsx_i),\hat{y}_i)$ with true class $y_i$ is given by
\beq\label{eqn:self_t_output}
\bsy_i^{(t)}=p^t \bse(\hat{y}_i) + (q^t - p^t)\bar{\bse}_{y_i} + (r_s^t - q^t)\bar{\bse}_{h(y_i)} + (1-r_s^t){\textbf{1}_{K}}/{K}
\eeq
where $p,q,r_s$ are defined as
\beq
\begin{split}\label{eqn:para_pq}
p&:=(1-c)/(K^2 n\lambda+1-c); \quad q:=(1-c+n(c-d))/(K^2 n\lambda+1-c+n(c-d));\\
r_s&:=(1-c+n(c-d)+K_snd))/(K^2 n\lambda+1-c+n(c-d)+K_snd), 
\end{split}
\eeq
with $K_s$ being the number of classes in superclass $s=h(y_i)$, $\bar{\bse}_{y_i}=\frac{1}{n}  \sum_{\{j:y_j=y_i\}}\bse(\hat{y}_j)$ is the average label vector over samples of the same true class $y_i$, $\bar{\bse}_{h(y_i)}=\frac{1}{K_s n}\sum_{\{j:h(y_j)=s\}}\bse(\hat{y}_j)$ is the average label vector over samples in the same superclass $h(y_i)$, and $\textbf{1}_{K}$ is the all-ones vector in $\mathbb{R}^K$. 
\end{lem}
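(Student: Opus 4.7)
The plan is to invoke Theorem~\ref{thm:Q1}, which gives $\bsY^{(t)} - \frac{1}{K}\textbf{1}_{K\times Kn} = (\bsY^{(0)} - \frac{1}{K}\textbf{1}_{K\times Kn})\mPhi^{(t)}$ with $\mPhi^{(t)} = \sum_{i=1}^{Kn}\bigl(\lambda_i/(K^2 n\lambda+\lambda_i)\bigr)^t \bsv_i\bsv_i^\top$, and then compute the $i$-th column of the right-hand side explicitly under the block structure~\eqref{eqn:corr}. The key enabling step is a clean spectral decomposition of $\mPhi$; once it is in place, the lemma reduces to algebraic bookkeeping that uses only the label-balance assumption.

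To perform this decomposition, rewrite~\eqref{eqn:corr} as $\mPhi = (1-c)\bsI_{Kn} + (c-d)\bsB_{\mathrm{c}} + d\,\bsB_{\mathrm{s}}$, where $\bsB_{\mathrm{c}}$ is block-diagonal with $n\times n$ all-ones blocks indexed by the true class and $\bsB_{\mathrm{s}}$ is block-diagonal with $K_s n\times K_s n$ all-ones blocks indexed by the superclass. Decompose $\R^{Kn}$ into three mutually orthogonal subspaces: $V_3$ of vectors with zero mean within every true class (dimension $K(n-1)$), $V_2$ of vectors constant within every class but zero-mean within every superclass (dimension $K-R$), and $V_1 = \bigoplus_{s} V_{1,s}$ of vectors constant within every superclass (dimension $R$), with corresponding orthogonal projections $\Pi_3,\Pi_2,\Pi_{1,s}$. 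A direct check gives $\bsB_{\mathrm{c}} = n(\Pi_1 + \Pi_2)$ and $\bsB_{\mathrm{s}} = \sum_{s} K_s n\,\Pi_{1,s}$, so that
\[
\mPhi = (1-c)\Pi_3 + (1-c+n(c-d))\Pi_2 + \sum_{s}(1-c+n(c-d)+K_s n d)\,\Pi_{1,s},
\]
and therefore, using the definitions of $p, q, r_s$ in~\eqref{eqn:para_pq}, $\mPhi^{(t)} = p^t\,\Pi_3 + q^t\,\Pi_2 + \sum_{s} r_s^t\,\Pi_{1,s}$.

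Evaluating this at the standard basis vector $\bsu_i \in \R^{Kn}$ (with a $1$ in position $i$) and writing $s=h(y_i)$, one verifies $\Pi_{1,s}\bsu_i = \frac{1}{K_s n}\mathbf{1}_{h(y)=s}$, $(\Pi_1+\Pi_2)\bsu_i = \frac{1}{n}\mathbf{1}_{y=y_i}$, and $\Pi_3\bsu_i = \bsu_i - \frac{1}{n}\mathbf{1}_{y=y_i}$, where $\mathbf{1}_{\cdot}\in\R^{Kn}$ denotes the indicator vector of the named subset. Left-multiplying the resulting three vectors by $\bsY^{(0)}$ converts them into $\bse(\hat{y}_i)$, $\bar{\bse}_{y_i}$, and $\bar{\bse}_{h(y_i)}$ by the definitions in the lemma; left-multiplying any vector whose entries sum to $1$ by $\frac{1}{K}\textbf{1}_{K\times Kn}$ returns $\frac{1}{K}\textbf{1}_K$, so the three contributions from the constant shift collapse, via the cancellations $p^t(1-1)+q^t(1-1)=0$, into a single $r_s^t\,\frac{1}{K}\textbf{1}_K$ term. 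Adding $\frac{1}{K}\textbf{1}_K$ back yields~\eqref{eqn:self_t_output}. The main obstacle is correctly identifying the three nested projections $\Pi_3,\Pi_2,\Pi_{1,s}$ and verifying that they simultaneously diagonalize $\mPhi$ with the stated eigenvalues; once this is done the remaining steps are mechanical.
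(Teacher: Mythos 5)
Your proposal is correct and follows essentially the same route as the paper's proof in Appendix H (Case IV): both diagonalize $\mPhi$ under the block model into the same three eigenspaces with eigenvalues $1-c$, $1-c+n(c-d)$, and $1-c+n(c-d)+K_snd$, substitute into the closed form of Theorem \ref{thm:Q1}, and regroup the resulting column into the given-label, class-average, superclass-average, and uniform components. The only difference is presentational: you package the eigendecomposition via orthogonal projections onto the nested subspaces $V_3, V_2, V_{1,s}$, whereas the paper first eigendecomposes the $K\times K$ class-correlation matrix $\bsW$ and lifts its eigenvectors through $\bsP^\top$ — the resulting $\mPhi^{(t)} = p^t\Pi_3 + q^t\Pi_2 + \sum_s r_s^t\Pi_{1,s}$ is identical.
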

This expression shows that the model's output $\bsy_i^{(t)}$ is a weighted combination of the given label $\bse(\hat{y}_i)$, the average label vector $\bar{\bse}_{y_i}$ over samples from the same true class $y_i$, the average label vector $\bar{\bse}_{h(y_i)}$ over samples from the same superclass $h(y_i)$, and the uniform vector. 
Since $1>r_s>q>p>0$, as the distillation round $t$ increases, the weights assigned to these components shift away from the individual noisy label $\bse(\hat{y}_i)$ towards the average labels of the same class and superclass, eventually making the output  converge to the uniform vector as $t\to\infty$. Theorem \ref{thm:self} can be proved by analyzing \eqref{eqn:self_t_output} and finding the condition under which the correct label $y_i$ attains the highest value in the softmax output components $[\bsy_i^{(t)}]_k$, $k\in[K]$, for both clean ($\hat{y}_i=y_i$) and noisy ($\hat{y}_i\neq y_i$) samples. 
\end{proof}

  \section{A  New Self-Distillation Method with Partial Labels}\label{sec:sd_pll}

In this section, we propose a novel self-distillation method that achieves the benefits of multi-round self-distillation, as outlined in Thm. \ref{thm:self}, in just a single-round. The key insight stems from the observation that, under mild conditions on the label corruption rates, the teacher's softmax output at the true label  (represented by the blue lines in Fig. \ref{fig:clean_noisy_output} at $t=1$) consistently ranks at least second-highest--even for label-noisy samples--even if it is lower than the given label. 
As self-distillation progresses through multiple rounds, the softmax output (or confidence) for the true label decreases in clean samples while increasing in noisy samples due to the label-averaging effect among instances from the same ground-truth class. This phenomenon improves classification accuracy for noisy samples but reduces the model's confidence in clean samples. This trend is illustrated in Fig. \ref{fig:clean_noisy_output}, where the blue lines for clean (left) and noisy (right) samples converge to similar levels as $t$ increases.

 Motivated by this observation, we aim to design a self-distillation method where, in a single round, the true label achieves the highest output value, even for noisy samples. To achieve this, we introduce the following label refinement scheme for the teacher's output:
We generate a two-hot vector $\bar{\bsy}_i$ from the teacher's output $\bsy_i^{(T)}=\sigma(\bsth_T^\top \phi(\bsx_i))$ by selecting the top two labels with the highest values and assigning a weight of 1/2 to each, setting all the other entries to zero. These refined outputs  become the new targets for training the student model $\bsth$, trained with the objective
$
     f_P(\bsth) = \frac{1}{Kn} \sum^{Kn}_{i=1} \textsf{CE}(\bar{\bsy}_i,\bsy_i^{(P)}) + \frac{\lambda \|\bsth\|^2_F}{2}
$
for $\bsy_i^{(P)}=\sigma(\bsth^\top \phi(\bsx))$. We refer to this student model as the Partial Label Learning (PLL) student model, following the concept of PLL \citep{cour2011learning}, which trains a classifier using a candidate set of labels for each input instance. Our label refinement emulates the effect of multi-round self-distillation by significantly boosting the confidence (assigning weight 1/2) to the true label for noisy samples while reducing it for clean samples. As a result, the PLL student model achieves 100\% population accuracy after a single round of distillation.

\begin{thm}\label{thm:pll}
The PLL student model, trained on the label-noisy dataset with the Gram matrix $\mPhi$ from \eqref{eqn:corr}, achieves 100\% population accuracy if
\beq\label{eqn:cond_pll_100}
[\bsC]_{k,k}>[\bsC]_{k,k'},\quad \forall k,k'\in[K] \text{ with }k\neq k',
\eeq
where the label-corruption matrix $\bsC$ is defined by $[\bsC]_{k,k'}:=|\{i: y_i=k, \hat{y}_i=k'\}|/n$, and we assume that $[\bsC]_{k,k'}=0$ whenever $h(k)\neq h(k')$.
\end{thm}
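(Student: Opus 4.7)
The plan is to reuse the closed-form machinery of Theorem~\ref{thm:Q1} and Lemma~\ref{lem:output_correlation}, applying them once more with the refined target $\bar{\bsy}_i$ in the role that $\bse(\hat{y}_i)$ played for the teacher. First I would characterize $\bar{\bsy}_i$. From Lemma~\ref{lem:output_correlation} at $t=1$, and using the hypothesis $[\bsC]_{k,k'}=0$ whenever $h(k)\neq h(k')$, the column sum $\sum_{k':\,h(k')=s}[\bsC]_{k',k}$ equals $1$ for every $k$ with $h(k)=s$, which collapses the superclass contribution $(r_s-q)[\bar{\bse}_{h(y_i)}]_k$ to the constant $(r_s-q)/K_s$ across all $k$ with $h(k)=s$. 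The only index-dependent pieces of $[\bsy_i^{(T)}]_k$ inside the superclass are then $p\cdot\mathbbm{1}(k=\hat{y}_i)$ and $(q-p)[\bsC]_{y_i,k}$; condition \eqref{eqn:cond_pll_100} immediately gives
\[
[\bsy_i^{(T)}]_{y_i}-[\bsy_i^{(T)}]_k = (q-p)\bigl([\bsC]_{y_i,y_i}-[\bsC]_{y_i,k}\bigr)>0 \quad \text{for every } k\neq y_i,\hat{y}_i.
\]
Hence $y_i$ is at worst rank two in the teacher output, and the refined target satisfies $[\bar{\bsy}_i]_{y_i}=1/2$ for every instance, be it clean or noisy.

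Next I would derive $\bsy_i^{(P)}$. The PLL objective $f_P$ is structurally identical to one round of self-distillation with $\bar{\bsy}_i$ replacing $\bse(\hat{y}_i)$, so the derivation behind Lemma~\ref{lem:output_correlation} applies verbatim and yields
\[
\bsy_i^{(P)} = p\,\bar{\bsy}_i + (q-p)\,\bar{\mathbf{u}}_{y_i} + (r_s-q)\,\bar{\mathbf{u}}_{h(y_i)} + (1-r_s)\,\textbf{1}_K/K,
\]
where $\bar{\mathbf{u}}_{y_i}:=\tfrac{1}{n}\sum_{\{j:\,y_j=y_i\}}\bar{\bsy}_j$ and $\bar{\mathbf{u}}_{h(y_i)}$ is the analogous superclass average. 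The decisive fact, obtained from the first step, is $[\bar{\mathbf{u}}_{y_i}]_{y_i}=1/2$ exactly, whereas $[\bar{\mathbf{u}}_{y_i}]_k\le 1/2$ for every $k\neq y_i$, with strict inequality whenever more than one class appears as the second-rank position across samples of true class $y_i$; that mild genericity is in turn guaranteed by the strict inequalities in \eqref{eqn:cond_pll_100}. Assembling, the gap
\[
[\bsy_i^{(P)}]_{y_i}-[\bsy_i^{(P)}]_k = p\bigl(\tfrac{1}{2}-[\bar{\bsy}_i]_k\bigr) + (q-p)\bigl(\tfrac{1}{2}-[\bar{\mathbf{u}}_{y_i}]_k\bigr) + (r_s-q)\bigl([\bar{\mathbf{u}}_{h(y_i)}]_{y_i}-[\bar{\mathbf{u}}_{h(y_i)}]_k\bigr)
\]
is a sum of nonnegative terms with at least one strictly positive, so $[\bsy_i^{(P)}]_{y_i}$ is strictly largest and the PLL student classifies every training instance correctly. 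Sending $n\to\infty$ delivers the population statement.

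I expect the top-two claim to be the main obstacle, since Theorem~\ref{thm:self} only ever compared $[\bsy_i^{(t)}]_{y_i}$ with $[\bsy_i^{(t)}]_{\hat{y}_i}$, whereas here I need a uniform comparison against every competing class within the superclass in a single round. The superclass column-sum identity that cancels the common terms is the essential algebraic observation, and after that the remaining work is careful bookkeeping: tracking which coordinates carry the $1/2$ mass in $\bar{\bsy}_j$, verifying that those coordinates cannot all coincide across samples of the same true class under \eqref{eqn:cond_pll_100}, and confirming that classes outside the superclass are ruled out automatically because their teacher entries collapse to the baseline $(1-r_s)/K$ and hence never enter the top two.
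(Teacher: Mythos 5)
Your route is the paper's route (Appendix~\ref{app:PLL_analysis}): reuse the one-round closed form of Lemma~\ref{lem:output_correlation} with the two-hot target $\bar{\bsy}_i$ in place of $\bse(\hat{y}_i)$, use \eqref{eqn:cond_pll_100} to show the teacher ranks $y_i$ at worst second (your column-sum observation and the remark that out-of-superclass coordinates sit at the baseline $(1-r_s)/K$ are exactly the paper's steps), and then compare coordinates of $\bsy_i^{(P)}$. The divergence --- and the gap --- is in how the superclass term is handled.

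You declare the gap $[\bsy_i^{(P)}]_{y_i}-[\bsy_i^{(P)}]_k$ ``a sum of nonnegative terms,'' but the third summand $(r_s-q)\bigl([\bar{\mathbf{u}}_{h(y_i)}]_{y_i}-[\bar{\mathbf{u}}_{h(y_i)}]_k\bigr)$ is never shown to be nonnegative, and it need not be. The teacher's superclass term is uniform because it averages \emph{one-hot} given labels, which are balanced; the PLL superclass term averages \emph{two-hot} vectors whose runner-up coordinates $\tilde y_k=\argmax_{k'\neq k}[\bsC]_{k,k'}$ are deterministic in $k$ and can pile onto a single class. For instance with $R=1$, $K=3$ and rows of $\bsC$ equal to $(0.8,0.2,0)$, $(0.2,0.7,0.1)$, $(0,0.1,0.9)$ (balanced, satisfying \eqref{eqn:cond_pll_100}), one gets $\frac{1}{Kn}\sum_j\bar{\bsy}_j\propto(0.95,\,1.5,\,0.55)$; for a clean class-$1$ sample your first two summands vanish at $k=2$ (since $[\bar{\bsy}_i]_2=\tfrac12$ and $[\bar{\mathbf{u}}_{1}]_2=\tfrac12$) while the third is strictly negative whenever $d>0$. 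The paper closes this step differently, by asserting that the superclass average of the $\bar{\bsy}_j$ is \emph{exactly uniform} so that the term cancels in every within-superclass comparison; that assertion is what you must either prove or work around, and your one-sided inequality is neither that assertion nor a consequence of anything you established. Separately, your strictness claim is wrong: \eqref{eqn:cond_pll_100} does not force ``more than one class in the second-rank position'' --- take $[\bsC]_{k,k}=0.7$, $[\bsC]_{k,\tilde y_k}=0.3$ and the rest zero --- and in that case $[\bar{\mathbf{u}}_{y_i}]_{\tilde y_i}=\tfrac12=[\bar{\mathbf{u}}_{y_i}]_{y_i}$, so a clean sample only achieves a tie at $k=\tilde y_i$, matching the paper's non-strict ``$\le 1$'' rather than the strict win you need. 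Both points have to be repaired before the argument closes.
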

\begin{proof}[Proof Sketch]
The condition \eqref{eqn:cond_pll_100} ensures that, for each class $k$, the proportion of samples correctly labeled as $k$ is greater than the proportion mislabeled as any other class $k'\neq k$. Under this condition, the teacher's softmax output includes the true label among the top two highest values for both clean and label-noisy samples. By using the refined partial labels $\bar{\bsy}_i$, a single round of self-distillation generates an output for the $i$-th sample similar to \eqref{eqn:self_t_output}:
\beq\label{eqn:pll_output_approx}
\bsy_i^{(P)}=p \bar{\bsy}_i + (q - p) \left(\frac{1}{n}  \sum_{\{j:y_j=y_i\}}\bar{\bsy}_j\right) + (r_s- q)\left(\frac{1}{K_s n}\sum_{\{j:h(y_j)=s\}}\bar{\bsy}_j \right)  + (1-r_s)\frac{\textbf{1}_{K}}{K}.
\eeq
This expression ensures that the true label $y_i$ has the highest value in $\bsy_i^{(P)}$, i.e., $[\bsy_i^{(P)}]_{y_i}>[\bsy_i^{(P)}]_{k}$ for all $k\neq y_i$, as each partial label $\bar{\bsy}_i$ assigns a weight of 1/2 to the true label. 
\end{proof}

\begin{rem}[Comparison with Multi-Round Self-Distillation] The condition \eqref{eqn:cond_st_100} for achieving 100\% accuracy with the $t$-th self-distilled model is more stringent than the condition  \eqref{eqn:cond_pll_100} for the PLL student model. Therefore, the PLL student model not only replicates the effect of multi-round distillation in a single round but also achieves better performance  in scenarios with high label-noise rates, where  \eqref{eqn:cond_pll_100} is satisfied but  \eqref{eqn:cond_st_100} is not. This advantage is illustrated in Fig. \ref{fig:acc_gain}, where the PLL student model achieves 100\% population accuracy across a wider range of label corruption rates.

\end{rem}

\section{Experiment}\label{sec:exp}

\definecolor{lightgray}{HTML}{CCCCCC}
\definecolor{mediumgray}{HTML}{999999}
\definecolor{darkgray}{HTML}{666666}
\definecolor{darkdarkgray}{HTML}{333333}

In this section, we validate our theoretical findings through experiments on real datasets. We employ a two-layer neural network composed of a linear layer followed by a softmax layer on top of a fixed feature extractor. We conduct experiments on six multi-class image classification benchmarks: CIFAR-100~\citep{cifar}, Caltech-101/256~\citep{caltech}, Flowers-102~\citep{flower}, Food-101~\citep{food}, and StanfordCars~\citep{stanfordcars}, utilizing the PyTorch torchvision library. For our fixed feature extractor, we use ResNet34~\citep{resnet} networks pre-trained on ImageNet. More experimental details can be found in Appendix~\ref{app:exp_detail}.
We explore three different label corruption scenarios: \textit{symmetric}, \textit{asymmetric}, and \textit{superclass} corruption, as described in Table~\ref{table:label_corruption_three}, for a label corruption rate $\eta\in [0,1]$ and $\Omega_k:=\{l\in[K]:h(l)=h(k)\}$.

 
\begin{table}[thb]
    \centering
     \caption{Definition of label corruption matrix $\bsC$ for different corruption scenarios}
    \vspace{-10pt}
    \resizebox{\linewidth}{!}{
    \begin{tabular}{c c c | c c c | c c c}
        \toprule
        \multicolumn{3}{c}{\textbf{Symmetric corruption}} & \multicolumn{3}{c}{\textbf{Asymmetric corruption}} & \multicolumn{3}{c}{\textbf{Superclass corruption}}
        \\
        \midrule
        \multicolumn{3}{c}{
        $[\bsC]_{k,k'}:=\begin{cases}
                1-\eta, & k'=k,\\
                \frac{\eta}{K-1}, & k' \neq k.
            \end{cases}$
        }
        &
        \multicolumn{3}{c}{
        $[\bsC]_{k,k'}:=\begin{cases}
            1-\eta, & k'=k,\\
            \frac{2\eta}{K}, & k' = \tilde{k}(k), \\
            \frac{\eta}{K}, &  k' \neq k, \tilde{k}(k).
        \end{cases}$
        }
        &
        \multicolumn{3}{c}{
        $[\bsC]_{k,k'}:=\begin{cases}
            1-\eta, & k'=k,\\
            \frac{\eta}{|\Omega_k|-1}, & k' \neq k, k' \in \Omega_k, \\
            0, & k' \neq \Omega_k,
        \end{cases}$
        }        \\
        \bottomrule
    \end{tabular}}\label{table:label_corruption_three}
\end{table}

\begin{figure}[thb]
    \centering
    \includegraphics[width=\textwidth]{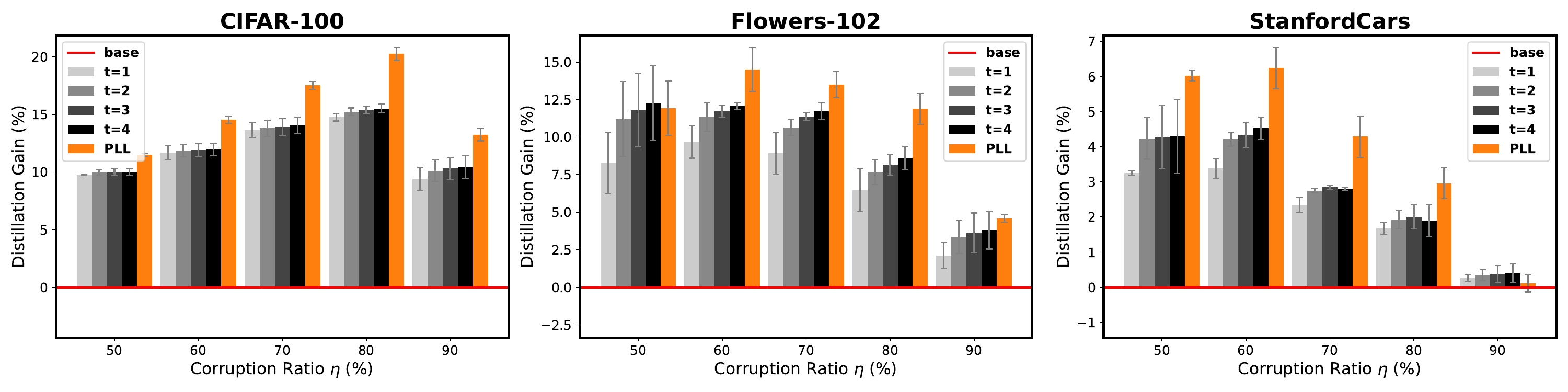}
    \caption{Distillation gain in accuracy (\%) by each student model compared to the teacher in the superclass label corruption scenario. {\color{lightgray}P}{\color{mediumgray}L}{\color{darkgray}A}{\color{darkdarkgray}I}N represents the improvement of the multi-round students over the teacher model, while {\color{orange}ORANGE} represents the improvement by the PLL student model. }
    \label{fig:real_gain}
\end{figure}

Fig.~\ref{fig:real_gain} illustrates the accuracy improvement of student models compared to the base (teacher) model in the superclass corruption scenario. Results for other datasets and corruption scenarios are available in Appendix~\ref{app:exp_sym_asym}. 
Across all datasets, we observe that test accuracy progressively increases with the number of distillation rounds. Notably, the PLL student model outperforms other multi-round distillation models at higher corruption rates ($\eta \geq 0.6$), achieving this gain efficiently in just a single round of distillation. These results support our theoretical findings, showing the effectiveness of multi-round self-distillation--and especially the PLL student model--in mitigating the effects of label noise.


\paragraph{Verification of the Label Averaging Effect}
To verify the occurrence of label averaging effect by self-distillation in real datasets, we plot the evolution of model outputs across distillation steps. We arbitrarily select four classes from the CIFAR-100 dataset and conduct experiments only on samples from these classes. Under 50\% symmetric label corruption, we visualize the output evolution of both the teacher and student models in a 2D plot in Fig.~\ref{fig:real_evolution}.
As the number of distillation rounds $t$ increases, we observe that the outputs tend to form clusters based on their true labels. Additionally, some instances initially misclassified by the teacher are correctly classified by the student model due to the label averaging effect. This suggests that the distillation process helps the student model recover from some of the teacher's errors, especially in cases where the labels are corrupted.

\paragraph{Ablation Study of Top-$k$ Targets}
In the PLL student model, using a larger set of partial labels increases the likelihood of including the true label but introduces a trade-off by reducing the confidence in instances that are already well-classified. Our theoretical analysis in Sec. \ref{sec:sd_pll} indicates that the true label's prediction is at least the second highest  in the teacher's output, supporting the use of a top-2 candidate set. To validate this choice, we conduct experiments under the same conditions as Fig.~\ref{fig:real_gain}, using the top-$k$ partial labels as targets for various $k$.  In Fig.~\ref{fig:topk}, we plot the accuracy gain of each PLL student model with top-$k$ partial labels, chosen from the teacher's output, on the CIFAR-100 dataset. Results for other datasets are provided in Appendix~\ref{app:topk_full}. As shown, using more candidates generally leads to lower test accuracy across most datasets, supporting our choice of top two labels. 


\begin{figure}[thb]
    \centering
    \vspace{-5pt}
    \subfloat[]{\includegraphics[width=0.75\textwidth]{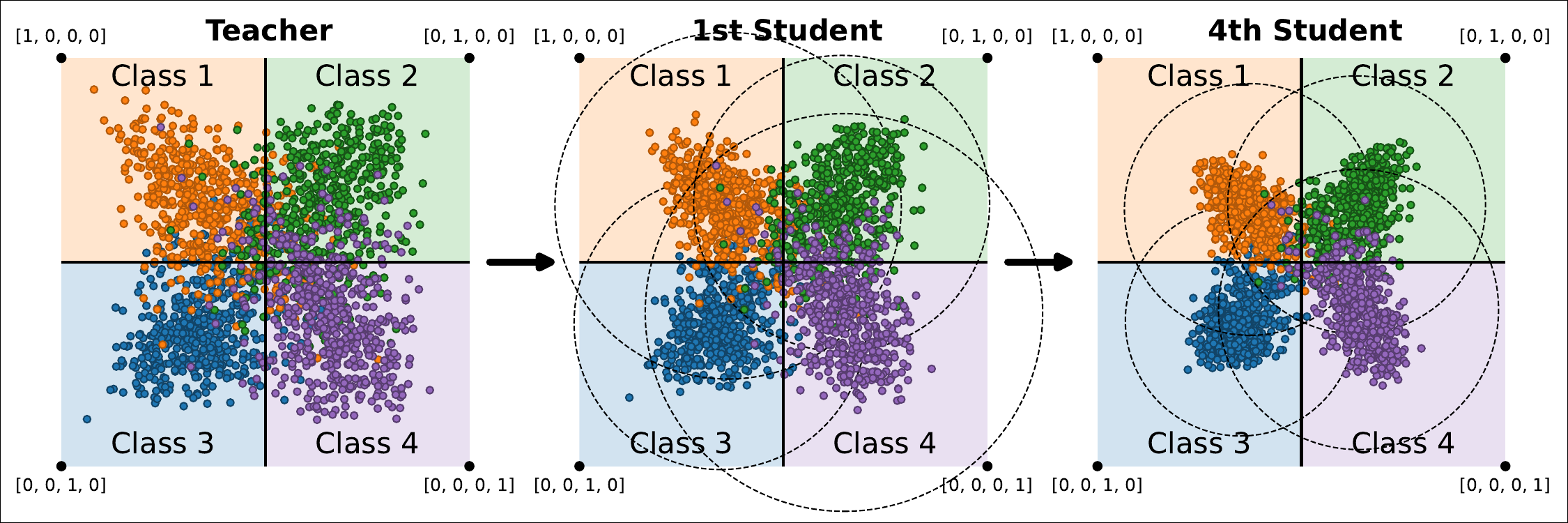}\label{fig:real_evolution}}
    \subfloat[]{\includegraphics[width=0.25\textwidth]{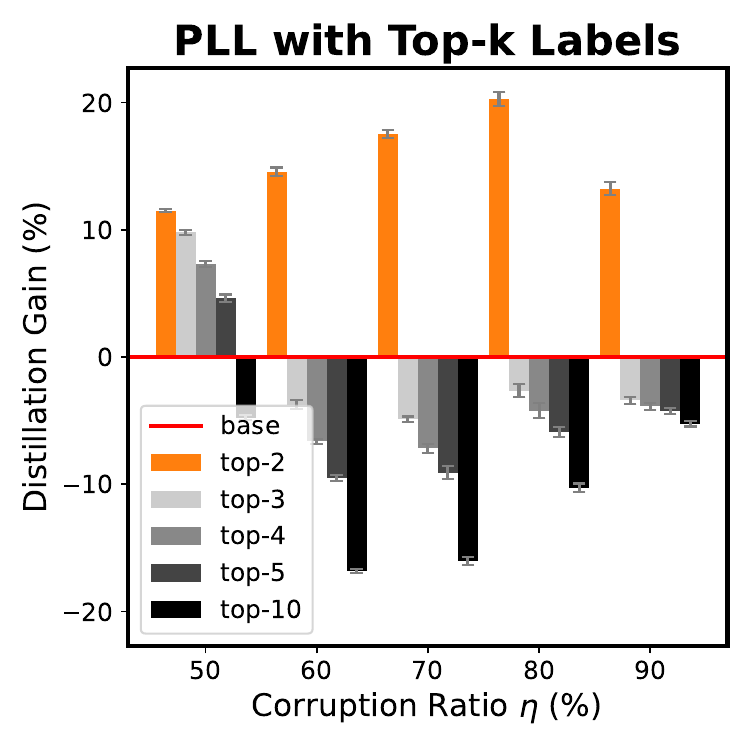}\label{fig:topk}}
    \caption{(a) Visualization of the softmax outputs of the teacher model, and the first and fourth student models on the CIFAR-100 dataset, reduced to four-class classification outputs for clarity.
    (b) Accuracy improvement (\%) for each PLL student model using top-$k$ partial labels compared to the teacher model in the superclass label corruption scenario on the CIFAR-100 dataset. 
    }
    \label{fig:ablation}
\end{figure}

\section{Conclusion}\label{sec:conclusion}

We investigated self-distillation for multi-class classification in the context of linear probing with fixed feature extractors. Our theoretical analysis showed that multi-round self-distillation effectively performs  label averaging among instances with high feature correlations, which helps correct label noise by reducing the model's reliance on potentially corrupted labels. Additionally, we introduced the PLL student model as an efficient approach for label refinement, offering a practical method to distill the teacher's ``dark knowledge'' into a candidate set of labels. This method achieves the benefits of multi-round self-distillation in just a single round, providing a practical and efficient way to improve model performance. Limitations and future works are available in Appendix \ref{app:limitations}.

\newpage
\section*{Acknowledgement}

This work was supported by the National Research Foundation of Korea (NRF) grant funded by the Korea government (MSIT) (No. RS-2024-00408003).

\bibliography{main_SD_PLL_final}
\bibliographystyle{iclr2025_conference}
\newpage
\appendix

\section{Broader Impact}\label{app:broader_impact}

Our paper theoretically analyzes self-distillation, particularly highlighting its advantages in multi-class classification problems with label noise. We also  explore the integration of self-distillation with partial label learning, demonstrating its superior performance gains over traditional multi-round self-distillation, especially in high noise rate regimes. This research directly addresses the prevalent challenges in handling real-world data, where acquiring clean and accurately labeled datasets is often difficult and costly. 

With the recent breakthroughs in large-scale pre-trained neural networks or foundation models, various studies have been conducted to effectively utilize well-trained large neural networks as feature extractor backbones. We believe that our result can be applied to such scenarios and theoretically demonstrate the performance gains achievable by self-distillation through a simple linear probing within a pre-trained neural network. 

By enhancing our understanding of knowledge distillation and self-distillation, we also provide the way for the development of more efficient models, capable of matching the performance of their larger and more complex counterparts, which is especially beneficial in scenarios with limited resources.

\section{Limitations and Future Works}\label{app:limitations}

In this work, we theoretically analyzed the effect of multi-round self-distillation and self-distillation with refined outputs by deriving closed-form solutions for the outputs of the student model as the number of distillation rounds increases (Sec. \ref{sec:label_avg}). The main technical challenge in this analysis lies in the non-linearity of softmax functions. We addressed this by using a linear approximation of the softmax function, similar to the original work by \cite{hinton2015distilling}, as detailed in Assumption \ref{asm:softmax}. We justified this assumption within proper parameter regimes through numerical experiments in Sec. \ref{app:assmp2}. Generalizing our results to fully incorporate the non-linear nature of the softmax function remains a future work.

After deriving closed-form solutions for the outputs of the distilled models in Theorem \ref{thm:Q1} in terms of the general Gram matrix $\mPhi$, representing instance-wise feature correlations, we considered a particular low-rank feature correlation map \eqref{eqn:corr} to conduct further analysis on the student models' outputs. The practicality of this feature correlation model is validated using six different real datasets, as detailed in Sec. \ref{app:assmp1}, demonstrating its applicability in real-world model training. We also provided additional analysis for four additional feature correlation maps in Sec. \ref{app:proof_lemma_corr}, including a more generalized version of the superclass feature correlation model.
Relaxing the assumption on a class-wise feature correlation, for example by generalizing it to
\begin{equation*}
    \E[\inp{\phi(\bsx_i), \phi(\bsx_j)}|(y_i, y_j)=(a,b)] = \omega(a,b),
\end{equation*}
remains a future work.

Lastly, our main analysis and experimental results show the gain of self-distillation in training a linear classifier on top of a fixed feature extractor. Our results can also be combined with the feature learning perspective of self-distillation by considering the feature map evolving over the distillation rounds, as illustrated in Corollary \ref{cor:full}. Understanding the role of self-distillation in more general setups, including distilling foundational models with more diverse applications, such as natural language processing,  is an important direction for future work.

\newpage
\section{Self-Distillation with Feature Learning}
We next provide an extended version of our analysis for the full-to-full self-distillation scenarios, where not only the last linear layer but also the feature extractor can be updated through multi-round self-distillations. To illustrate this point, consider the quantified gains of self-distillation in label-noise scenarios presented in Theorem~\ref{thm:self}. This theorem provides the sufficient number of distillation rounds required for the student's softmax output to assign the highest value to the ground-truth label for both clean and noisy samples. The condition depends not only on the class-wise label corruption rates but also on the relative gap in feature correlations between samples of the same ground-truth class, parameterized by $c$, and those of different ground-truth classes, parameterized by $d$, in the Gram matrix model $\mPhi$ \eqref{eqn:corr}.

We can generalize our theory by allowing these correlations to evolve over distillation rounds, denoted by $c^{(i)}$ and $d^{(i)}$, due to feature updates during self-distillation. Recent work by~\cite{allen2022towards} demonstrates that self-distillation's effectiveness arises from an implicit ensemble of teacher and student models, enabling the student to learn more diverse features when using the teacher's softmax outputs as targets. This suggests that the intra-class feature correlation $c^{(i)}$ may increase with each distillation round $i$, enhancing the separation between classes.

Assuming the class-wise feature map defined in \eqref{eqn:corr}, and allowing $c$ and $d$ to change over distillation steps, our parameters $p$ and $q$ in \eqref{eqn:para_pq}, which govern the label averaging effect, also become functions of $i$: 
\begin{equation}
    \begin{split}
        p^{(i)}&:=\frac{1-c^{(i)}}{K^2 n\lambda+1-c^{(i)}} \\
        q^{(i)}&:=\frac{1-c^{(i)}+n(c^{(i)}-d^{(i)})}{K^2 n\lambda+1-c^{(i)}+n(c^{(i)}-d^{(i)})}.
    \end{split}
\end{equation}
Under the extended class-wise feature correlation assumption, our Theorem \ref{thm:self} can be generalized to: 
\begin{cor}[Extended version]\label{cor:full}
    Under the evolving feature correlation model, the $t$-th distilled model achieves 100\% population accuracy if $$[\mathbf{C}]_{k, k}>[\mathbf{C}]_{k, k'}+\frac{1}{\prod_{i=1}^t ({q^{(i)}}/{p^{(i)}})- 1},\quad\forall k,k'(\neq k)\in[K].$$
\end{cor}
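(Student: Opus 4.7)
The plan is to lift the single-Gram-matrix analysis underlying Theorem \ref{thm:self} to a product of round-dependent operators. Since the eigenvectors of the block-wise Gram matrix \eqref{eqn:corr} depend only on the class partition (which sample belongs to which class and superclass), replacing $(c,d)$ by $(c^{(i)},d^{(i)})$ at each round yields a family of Gram matrices $\mPhi^{(1)},\ldots,\mPhi^{(t)}$ that share the same orthonormal eigenbasis; only their eigenvalues change, with the two relevant ones being $\lambda_K^{(i)}=1-c^{(i)}+n(c^{(i)}-d^{(i)})$ and $\lambda_{K+1}^{(i)}=1-c^{(i)}$, so that $q^{(i)}=\lambda_K^{(i)}/(K^2 n\lambda+\lambda_K^{(i)})$ and $p^{(i)}=\lambda_{K+1}^{(i)}/(K^2 n\lambda+\lambda_{K+1}^{(i)})$.

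First I would re-derive the analog of \eqref{eqn:output_closed_self1} for a sequence of rounds in which the Gram matrix may change. Re-running the recursion that produced Theorem \ref{thm:Q1}, the only modification is that the single-round operator $[\bsI_{Kn}-(\bsI_{Kn}+\mPhi/(K^2 n\lambda))^{-1}]$ must be written with its own $\mPhi^{(i)}$ at step $i$, giving
\begin{equation*}
\bsY^{(t)}-\tfrac{1}{K}\textbf{1}_{K\times Kn} = \left(\bsY^{(0)}-\tfrac{1}{K}\textbf{1}_{K\times Kn}\right)\prod_{i=1}^{t}\!\left[\bsI_{Kn}-\left(\bsI_{Kn}+\mPhi^{(i)}/(K^2 n\lambda)\right)^{-1}\right].
\end{equation*}
Because the $\mPhi^{(i)}$ share an eigenbasis, this ordered product collapses to a single spectral operator whose $j$-th eigenvalue equals $\prod_{i=1}^{t}\lambda_j^{(i)}/(K^2 n\lambda+\lambda_j^{(i)})$.

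Next I would specialize to the block structure \eqref{eqn:corr} to obtain the extended analog of Lemma \ref{lem:output_correlation}: the round-$t$ output on training sample $i$ takes the form
\begin{equation*}
\bsy_i^{(t)}=P^{(t)}\bse(\hat{y}_i)+(Q^{(t)}-P^{(t)})\bar{\bse}_{y_i}+(R^{(t)}-Q^{(t)})\bar{\bse}_{h(y_i)}+(1-R^{(t)})\tfrac{\textbf{1}_K}{K},
\end{equation*}
with $P^{(t)}=\prod_{i=1}^{t}p^{(i)}$, $Q^{(t)}=\prod_{i=1}^{t}q^{(i)}$, and $R^{(t)}=\prod_{i=1}^{t}r_s^{(i)}$. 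The derivation mirrors the proof of Lemma \ref{lem:output_correlation} verbatim, since the eigenvectors carving out the class-cluster and superclass-cluster directions are identified by the class partition alone and do not depend on the numerical values of $(c^{(i)},d^{(i)})$. The argmax comparison is then identical to the proof sketch of Theorem \ref{thm:self}: for a noisy sample $\hat{y}_i\neq y_i$ the binding inequality $[\bsy_i^{(t)}]_{y_i}>[\bsy_i^{(t)}]_{\hat{y}_i}$ reduces, after the uniform and superclass terms cancel, to $(Q^{(t)}-P^{(t)})([\bsC]_{y_i,y_i}-[\bsC]_{y_i,\hat{y}_i})>P^{(t)}$, which rearranges to the stated $[\bsC]_{k,k}>[\bsC]_{k,k'}+1/(\prod_{i=1}^{t}(q^{(i)}/p^{(i)})-1)$; the clean-sample comparisons are weaker and hold under the same condition.

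The main obstacle I anticipate is justifying that the $\mPhi^{(i)}$ actually commute, i.e.\ share an eigenbasis across rounds. This is automatic if feature learning only rescales the class-wise correlation constants $c^{(i)},d^{(i)}$ while preserving the block partition, which is the precise content of the ``evolving feature correlation model'' hypothesized in the corollary. If, more realistically, the block structure is only approximately preserved, the product of operators no longer diagonalizes simultaneously and one must invoke a Davis-Kahan argument, as was already used after \eqref{eqn:corr}, to show that the drift of the top-$K$ eigenvectors is small whenever the round-wise perturbations are dominated by the eigen-gap $\lambda_K^{(i)}-\lambda_{K+1}^{(i)}$; under that control, the closed-form expression above holds up to lower-order error and the conclusion persists. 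Everything downstream of this commutativity reduction is routine: replace the scalar powers $p^t,q^t$ in the proof of Theorem \ref{thm:self} by the products $P^{(t)},Q^{(t)}$ and chase the argmax inequality.
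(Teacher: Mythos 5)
Your proposal is correct and follows essentially the same route as the paper's proof: both reduce the multi-round operation to a single label-averaging operator whose identity, class-wise, and superclass-wise coefficients become the products $\prod_{i=1}^t p^{(i)}$, $\prod_{i=1}^t q^{(i)}$, $\prod_{i=1}^t r_s^{(i)}$, and then rerun the argmax comparison from Theorem~\ref{thm:self} with these products in place of $p^t$, $q^t$, $r_s^t$. The only cosmetic difference is that you justify the collapse of the ordered product via simultaneous diagonalizability of the round-wise Gram matrices, whereas the paper verifies the same fact by an explicit induction on the form $\frac{1}{n}\bsP^\top[\,\cdot\,]\bsP + (\cdot)\,\bsI_{Kn}$ using $\bsP\bsP^\top = n\bsI_K$.
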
 

If the student model learns more diverse features than the teacher model, leading to an increased intra-class feature correlation $c^{(i)}$ as $i$ increases, then ${q^{(i)}}/{p^{(i)}}$ also increases. This means that the regime for a student model to achieve 100\% population accuracy expands, ultimately leading to performance gains from self-distillation. Thus, our result can be integrated with the aspect of ~\cite{allen2022towards}, where  the gain from self-distillation is explained by diverse features learned by the student model. Specifically, Corollary \ref{cor:full} quantifies the gains both from the feature learning and label averaging effects of self-distillation under label noise scenarios. The proof of Corollary~\ref{cor:full} can be found in Appendix~\ref{app:proof_cor}.

\section{Justification of Assumptions}

In this section, we provide empirical justifications for the assumptions we have imposed in the theoretical analysis. 

\subsection{Justification of Softmax Approximation (Assumption~\ref{asm:softmax})} \label{app:assmp2}

In Assumption~\ref{asm:softmax}, we approximate the softmax function $\sigma(\bsv)$ to the linear function $1/K + \bsv/K$, assuming a small enough magnitude of $ \bsv\in \mathbb{R}^K$. To justify our approximation, let us recall the equation for $\bsy_i^{(t)}$:
\begin{equation}\label{app:alpha}
    \bsy_i^{(t)} = \sigma \left( \sum_{j=1}^{Kn} \inp{\phi(\bsx_i), \phi(\bsx_j)} \bsal_j^{(t)}\right) = \sigma \left( \sum_{j=1}^{Kn} \frac{\inp{\phi(\bsx_i), \phi(\bsx_j)}}{Kn\lambda} \left(\bsy_j^{(t-1)} - \bsy_j^{(t)}\right)\right).
\end{equation}

We numerically validate Assumption~\ref{asm:softmax}, by analyzing the softmax approximation error. The softmax approximation error is defined as the maximum difference ($\ell_\infty$-norm) between the softmax outputs derived in Lemma~\ref{lem:output_correlation}, based on the linear approximation of the softmax, and the outputs obtained by numerically solving \eqref{app:alpha} for $\{\bsy_i^{(t)}\}_{i=1}^{Kn}$. (see Appendix~\ref{app:num_solve}  for details.)

We set $K=4$, $\lambda=10^{-4}$, and the Gram matrix $\mPhi$ follows a block-wise structure as in~\eqref{eqn:corr} with parameters $c=0.4$ and $d=0.1$. Under the symmetric label corruption with rate $\eta=0.5$, we analyze the softmax error as the number of samples per class $n$ increases. As shown in Fig.~\ref{fig:sm_error}, the softmax error becomes negligible, especially as the number of samples exceeds $10^4$. 

\begin{figure}[h]
    \centering
    \includegraphics[width=0.4\textwidth]{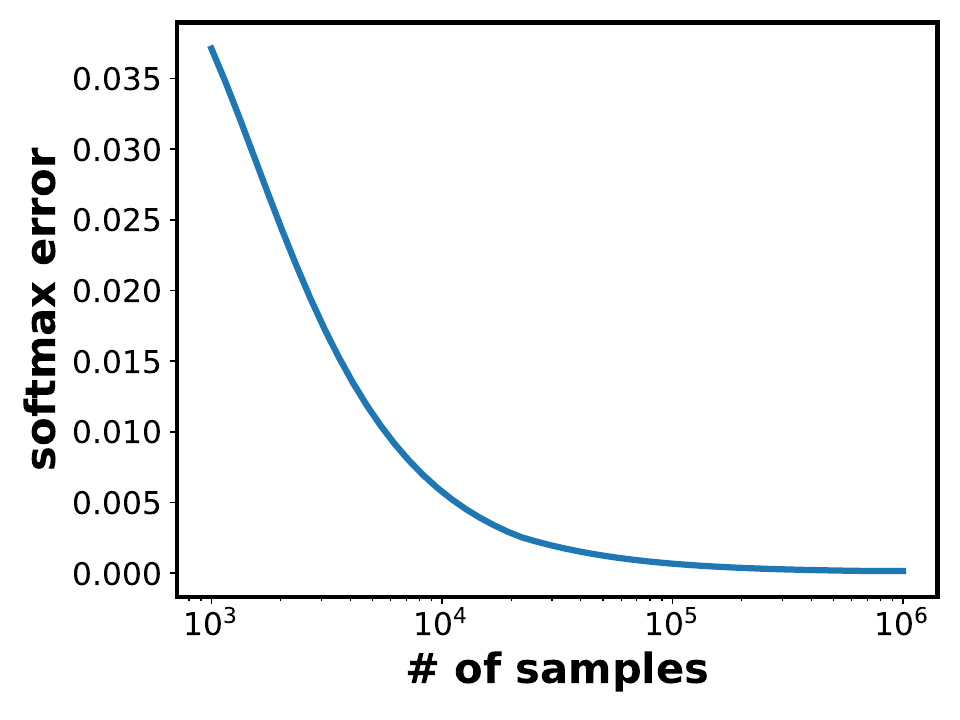}
    \caption{
    We plot the softmax approximation error as the number of samples increases. We use $K=4$, $\lambda = 10^{-4}$, $c=0.4$, $d=0.1$, and impose the symmetric label corruption with rate $\eta = 0.5$.}
     \label{fig:sm_error}
\end{figure}

\subsection{Justification of Class-wise  Feature Correlation Map}\label{app:assmp1}
In~\eqref{eqn:corr}, we argue that the feature correlation matrix exhibits a block diagonal structure, i.e.,
\begin{equation}
    \inp{\phi(\bsx_i), \phi(\bsx_j)} = \begin{cases}
        1 & i = j; \\
        c, & y_i=y_j, i \neq j; \\
        d, & y_i \neq y_j, h(y_i)=h(y_j); \\
        0, & h(y_i) \neq h(y_j).
    \end{cases}
\end{equation}
We empirically demonstrate the validity of this claim on diverse real-world datasets (with dataset details in Appendix \ref{app:data}). 
We first conduct hierarchical clustering to rearrange the order of classes so that the highly-correlated classes become adjacent. The results of hierarchical clustering are presented in Appendix~\ref{app:hier_clst}.  Under the rearranged class orders, the Gram matrices (input feature correlation matrices) have a block-diagonal matrix structure for the real datasets, as shown in Fig.~\ref{fig:ft_corr}.

Additionally, we assume that feature correlation between any two instances is determined by their corresponding true labels. To verify this assumption, we investigate the mean and standard deviation values of the feature correlation for all pairs of instances, for three cases as follows: 1) $y_i=y_j$, 2) $y_i \neq y_j, h(y_i)=h(y_j)$, and 3) $h(y_i) \neq h(y_j)$. We calculate the mean and standard deviation for all sample pairs corresponding to each case. As shown in Table~\ref{tab:cf_class_stat}, we find that there exists a gap in the mean feature correlations between the three cases. This empirical evidence supports the validity of our class-wise feature map, modeled as~\eqref{eqn:corr}. 



\begin{figure}[h]
    \centering
    \includegraphics[width=\textwidth]{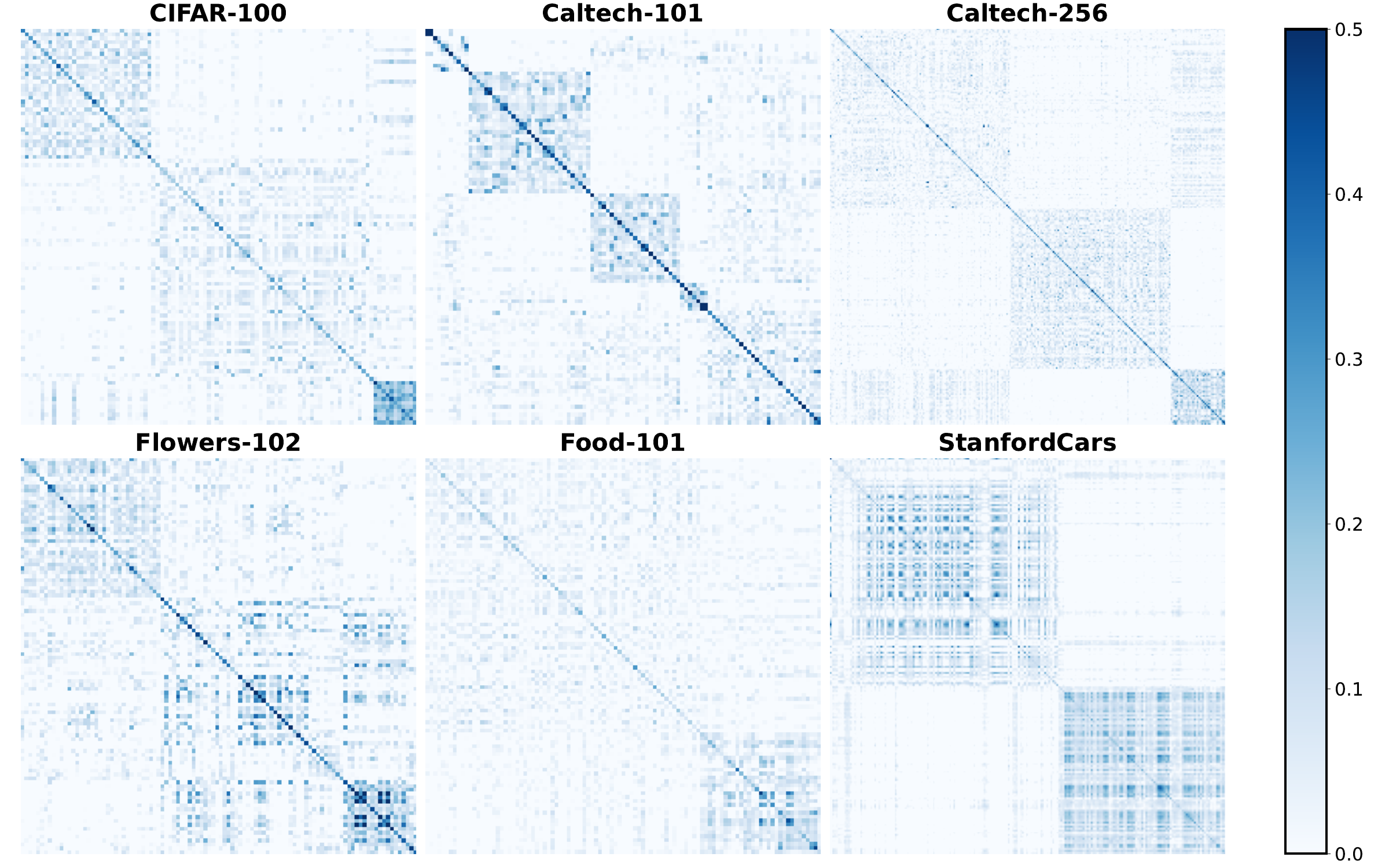}
    \caption{Class-wise feature correlation map of six different real-world datasets. The features are extracted by a pre-trained ResNet34. Block diagonal patterns are exhibited in all the six datasets, as assumed in our feature correlation map~\eqref{eqn:corr}.}
    \label{fig:ft_corr}
\end{figure}

\begin{table}[h]
    \centering
    \caption{Statistics for three different cases of instance-wise feature correlation.}
    \begin{tabular}{c | c |c | c | c}
        \toprule
         Dataset & \# of superclass & $y_i=y_j$& $y_i \neq y_j, h(y_i)=h(y_j)$ & $h(y_i) \neq h(y_j)$ \\
         \midrule 
        CIFAR-100       &       3       &       0.25$\pm$0.16   &       0.02$\pm$0.12   &       -0.03$\pm$0.10 \\
        Caltech-101     &       4       &       0.42$\pm$0.15   &       0.03$\pm$0.08   &       -0.02$\pm$0.07 \\
        Caltech-256     &       3       &       0.35$\pm$0.16   &       0.02$\pm$0.08   &       -0.03$\pm$0.07 \\
        Flowers-102     &       2       &       0.36$\pm$0.15   &       0.03$\pm$0.12   &       -0.04$\pm$0.13 \\
        Food-101        &       2       &       0.17$\pm$0.15   &       0.01$\pm$0.12   &       -0.02$\pm$0.12 \\
        StanfordCars    &       2       &       0.21$\pm$0.16   &       0.06$\pm$0.14   &       -0.07$\pm$0.13 \\
    \bottomrule
    \end{tabular}
    \label{tab:cf_class_stat}
\end{table}

\subsection{Effect of Temperature Scaling}\label{app:eff_temp}
Let us define the softmax $\sigma^\tau: \mathbb{R}^K\to (0,1)^K$ with a temperature scaling parameter $\tau$ as
$
        \sigma^\tau(\bsv) = \left[\frac{\exp(v_1/\tau)}{\sum_{i=1}^K \exp(v_i/\tau)}, \dots, \frac{\exp(v_K/\tau)}{\sum_{i=1}^K \exp(v_i/\tau)}\right]
$
for the logits $\bsv=[v_1,\dots, v_K]\in \mathbb{R}^K$. We set $\tau = 1$ and denote $\sigma^1(\cdot)=\sigma(\cdot)$ in the main document. In this section, we show that the temperature scaling of the softmax, combined with the proper scaling of the cross-entropy (CE) loss function in the regularized CE loss \eqref{eqn:obj_tea}, results in the same optimal solution for $\bsth$ as in $\tau=1$. 
Let $\bsy_i^{(t)}$ represent the output of the $t$-th student model for the $i$-th sample. The objective function for the $t$-th student model $f^{(t)}(\bsth)$, scaled by temperature $\tau$ is given by
\begin{equation}
    f^{(t)}(\bsth) =  \frac{\tau^2}{Kn} \sum^{Kn}_{i=1} \textsf{CE}(\bsy_i^{(t-1)}, \sigma^\tau(\bsth^\top \phi(\bsx_i))) + \frac{\lambda \|\bsth\|^2_F}{2},
\end{equation}
for $t \in \N$. Then, the optimal $\bsth$, denoted as $\bsth^*$ satisfies
\begin{equation}
    \bsth^{*\top} = \tau \sum_{i=1}^{Kn} \underbrace{\frac{1}{Kn\lambda} \left(\bsy_i^{(t-1)} - \bsy_i^{(t)}\right)}_{:= \bsal_i^{(t)}} \phi(\bsx_i)^\top = \tau \sum_{i=1}^{Kn} \bsal_i^{(t)} \phi(\bsx_i)^\top.
\end{equation}
Substituting the optimal $\bsth^*$ yields
\begin{equation*}
\begin{split}
        \bsy_i^{(t)} &= \sigma^\tau\left( \bsth^{*\top} \phi(\bsx_i) \right) \\        &= \sigma^\tau\left(\tau \sum_{j=1}^{Kn} \inp{\phi(\bsx_i), \phi(\bsx_j)}\bsal_j^{(t)}  \right) \\
        &=  \sigma^1\left(\sum_{j=1}^{Kn} \frac{\inp{\phi(\bsx_i), \phi(\bsx_j)}}{Kn\lambda}\left(\bsy_j^{(t-1)} - \bsy_j^{(t)}\right)  \right) .
\end{split}
\end{equation*}
This implies that even with temperature scaling with $\tau$, the equation of the optimal prediction $\bsy_i^{(t)}$ remains unchanged for  $f^{(t)}(\bsth)$. In other words, temperature scaling does not affect the optimal prediction. 

\section{Experimental Details} \label{app:exp_detail}

In this section, we provide the details of the experiments presented in Sec.~\ref{sec:exp}.

\subsection{Image Classification Datasets}\label{app:data}
To validate our theoretical findings, we conduct experiments on several real-world datasets using the PyTorch torchvision library. Below are the details of the real-world datasets we used.

\begin{itemize}[leftmargin=2em]
    \item \textbf{CIFAR-100}~\citep{cifar}: CIFAR-100 consists of 60,000 32x32 images in 100 classes. This dataset covers a wide range of object categories such as animals, vehicles, and household items, with each class having 600 images. 
    
    \item \textbf{Caltech-101}~\citep{caltech}: Caltech-101 contains images from 101 object categories with a background category named ``BACKGROUND\_Google''. For each category, there are about 40 to 800 samples, with an average of 50 samples per category. We removed the background category and divided the dataset into training and validation sets using an 8:2 ratio.
    
    \item \textbf{Caltech-256}~\citep{caltech}: Caltech-256 is a large-scale dataset consisting of 30,607 images across 256 object categories. Similar to the Caltech-101 dataset, we removed the ``clutter'' category and divided the dataset into training and validation sets using an 8:2 ratio.
    
    \item \textbf{Flowers-102}~\citep{flower}: The Flowers-102 dataset contains 102 different categories of flowers, with a total of 8,189 images. Each class consists of varying numbers of images, ranging from 40 to 258 images per class. Since the test set of the Flowers-102 dataset is larger than the training set, we swapped the training and test sets for use.

    \item \textbf{Food-101}~\citep{food}: Food-101 is a large-scale dataset specifically designed for food recognition tasks. It contains 101,000 images of food items categorized into 101 classes, with each class consisting of 1,000 images. 

    \item \textbf{StanfordCars}~\citep{stanfordcars}: The StanfordCars dataset is focused on car recognition and consists of 16,185 images of 196 classes of cars. The dataset includes images of cars taken from various viewpoints and under different lighting conditions.
    
\end{itemize}

\subsection{Choosing an Appropriate Hyperparameter $\lambda$}\label{app:choose_lbd}

In real experiments, we extract the features of input instances of the diverse datasets using the pre-trained ResNet34 model. Recall the definition of $p$ and $q$ from~\eqref{eqn:para_pq}:
\begin{align}
    p&:=\frac{1-c}{K^2 n\lambda+1-c},\\
q&:=\frac{1-c+n(c-d)}{K^2 n\lambda+1-c+n(c-d)}.
\end{align}
In this setting, we visualize the variation of the $q/p$ ratio as $\lambda$ changes across the six real datasets to find an appropriate $\lambda$ considering, the intra-class feature correlation $c$ and the inter-class feature correlation $d$ from Table~\ref{tab:cf_class_stat}. When the $q/p$ ratio is too small or too large, it becomes hard to demonstrate the effect of distillation from Thm. \ref{thm:self}. 

In Fig.~\ref{fig:lambda_pq}, we observe that to achieve a $q/p$ ratio close to 2, $\lambda$ should be selected around $10^{-3}$ for Caltech-101, Flowers-102, and StanfordCars, while around $10^{-4}$ for CIFAR-100, Caltech-256, and Food-101. Therefore, in the main text, we set $\lambda$ to $5 \times 10^{-4}$ for Caltech-101, Flowers-102, and StanfordCars, and $5 \times 10^{-5}$ for CIFAR-100, Caltech-256, and Food-101 to illustrate the impact of distillation. Additional experimental results for various values of $\lambda$ are presented in Appendix \ref{app:emp_result}.

\begin{figure}[ht]
    \centering
    \includegraphics[width=\textwidth]{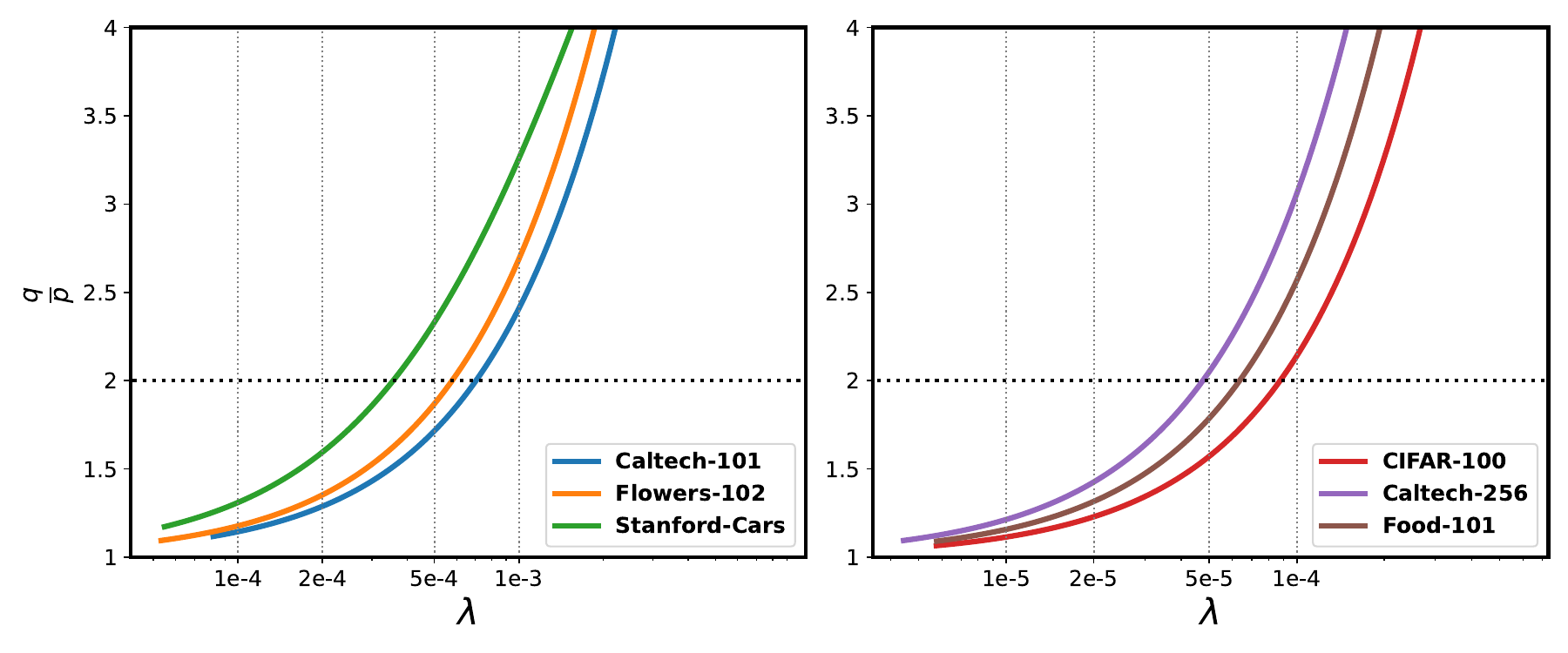}
    \caption{Variation of ${q}/{p}$ with changing weight decay $\lambda$.}
    \label{fig:lambda_pq}
\end{figure}

\subsection{Training} 
We train a two-layer network consisting of a linear layer and a softmax layer. Our loss function is defined as the cross-entropy between the softmax output of the teacher model and the softmax output of the student model. To mitigate uncertainty, we conduct three repeated runs for all experiments with seeds 0, 1 and 2. We perform a grid search over learning rates, in the set $\{0.1, 0.05, 0.01, 0.005, 0.001\}$. Each model is trained for 200 epochs, employing the SGD optimizer with a momentum value of $0.9$. In our experiments, we observe that using CE loss with the PLL student model often leads to instability during training. The PLL student model trains with a set of candidate labels for each sample with equal weights--in our case, the top two labels with weights of $1/2$ each. Using CE loss with equally weighted candidate labels can cause instability since the model may converge incorrectly when the candidate set includes incorrect labels. Hence, for the stable convergence of the PLL student model, we use Generalized Cross Entropy(GCE)~\citep{gce} loss with the hyperparameter $q=0.7$. Our neural networks are trained using multiple NVIDIA RTX A6000 GPUs. 

\subsection{Numerical Method for Calculating the Outputs of Student Models} \label{app:num_solve}
For synthetic experiments in Fig. \ref{fig:exp_synth}--\ref{fig:acc_gain_synt}, we compute the outputs of multi-round self-distillation models using numerical methods, instead of applying the approximation in Assumption \ref{asm:softmax}. Recall from~\eqref{eqn:theta_self}, the equation for $\bsth^{(t)}$:
\begin{equation*}
{ \bsth^{(t)}}{^\top} = \sum_{i=1}^{Kn} \frac{\left(\bsy_i^{(t-1)} - \bsy_i^{(t)}\right)}{Kn\lambda}  \phi(\bsx_i)^\top
\end{equation*}
The optimal output of the $t$-th disitlled model $\{\bsy_i^{(t)}\}_{i=1}^{Kn}$ satisfies the following equation:
\begin{equation}\label{eqn:opt_numerical}
    \bsy_i^{(t)} = \sigma\left(\sum_{j=1}^{Kn} \frac{\inp{\phi(\bsx_i), \phi(\bsx_j)}}{Kn\lambda} \left(\bsy_j^{(t-1)} - \bsy_j^{(t)}\right)  \right)
\end{equation}

Therefore, given the feature correlations and the output of the $(t-1)$-th distilled student model $\{\bsy_i^{(t-1)}\}_{i=1}^{Kn}$, the optimal output for the $t$-th distilled student model $\{\bsy_i^{(t)}\}_{i=1}^{Kn}$ can be found by minimizing the difference between the left-hand side and right-hand side of~\eqref{eqn:opt_numerical}. The corresponding pseudo-code for this process is outlined below:

\begin{algorithm}[H]\label{alg:num_sol}
    \caption{Optimal Output Calculation by Numerical Method}
    \KwIn{$K$: number of classes, $n$: number of samples per class, $\lambda$: weight-decay, $T$: total distillation steps, $\eta$: learning rate, $\epsilon$: convergence tolerance}
    \KwIn{\textbf{Given label (target)} $\bsY^{(0)} = [\bsy_1^{(0)}, \dots, \bsy_{Kn}^{(0)}] \in \R^{K \times Kn}$}
    \KwOut{\textbf{Output of $t$-th distilled model} $\bsY^{(t)} = [\bsy_1^{(t)}, \dots, \bsy_{Kn}^{(t)}] \in \R^{K \times Kn}$ for $t=1, \dots, T$}

    \For{$t = 1, 2, \dots, T$}{
        \textbf{Step 1: Initialize} $\bsY^{(t)}$\;
        \For{each instance $i = 1, 2, \dots, Kn$}{
            $\bsy_i^{(t)} = \frac{\bsu}{\sum_{k=1}^{Kn} u_k}, \quad u_k \sim U(0, 1)$\;
        }
        \vspace{10pt}
        
        \While{iter < max\_iter}{
            \textbf{Step 2: Compute} $\hat{\bsY}^{(t)}$\;
            \For{each instance $i = 1, 2, \dots, Kn$}{
                $\hat{\bsy}_i^{(t)} = \sigma\left( \sum_{j=1}^{Kn} \frac{\inp{\phi(\bsx_i), \phi(\bsx_j)}}{Kn\lambda} \left(\bsy_j^{(t-1)} - \bsy_j^{(t)}\right) \right)$\;
            }
            \vspace{10pt}
            
            \textbf{Step 3: Compute Loss} $\gL$\;
            $\gL(\bsY^{(t)}) = \| \bsY^{(t)} - \hat{\bsY}^{(t)} \|$\;
            \vspace{10pt}
            
            \textbf{Step 4: Update} $\bsY^{(t)}$\;
            $\bsY^{(t)} \gets \bsY^{(t)} - \eta \nabla_{\bsY^{(t)}} \gL(\bsY^{(t)})$\;
            \vspace{10pt}
            
            \textbf{Step 5: Check for Convergence}\;
            \If{$\gL(\bsY^{(t)}) < \epsilon$ }{
                \textbf{Break}\;
            }
        }
    }
\end{algorithm}

\newpage
\section{Full Experimental Results on Three Different Label Corruption Scenarios} \label{app:exp_sym_asym}

For our Theorems~\ref{thm:self} and~\ref{thm:pll} in the main text, we formulate the gain of self-distillation using the label corruption matrix $\bsC$ without making any specific assumptions about the label corruption. To validate our theoretical results, we conduct experiments with various types of label corruption. 
Fig.~\ref{fig:exp_full} shows the accuracy gain of the student models for six different datasets under the three corruption scenarios, described in Table \ref{table:label_corruption_three}. The precise accuracies of each distillation models can be found in Appendix~\ref{sec:app:detailed_results}. Overall, we observe that there is a gain as the distillation step $t$ increases in multi-round distillation, and we confirm that the PLL student model outperforms other multi-round distillation models. This results validate our theoretical finding in Sec.~\ref{sec:sd_noise} and \ref{sec:sd_pll}, where we show that self-distillation plays a role as label averaging, facilitating the correction of label noise.

\begin{figure}[h]
    \centering
    \subfloat[Superclass label corruption (Full version of Fig.~\ref{fig:real_gain})]{\includegraphics[width=\textwidth]{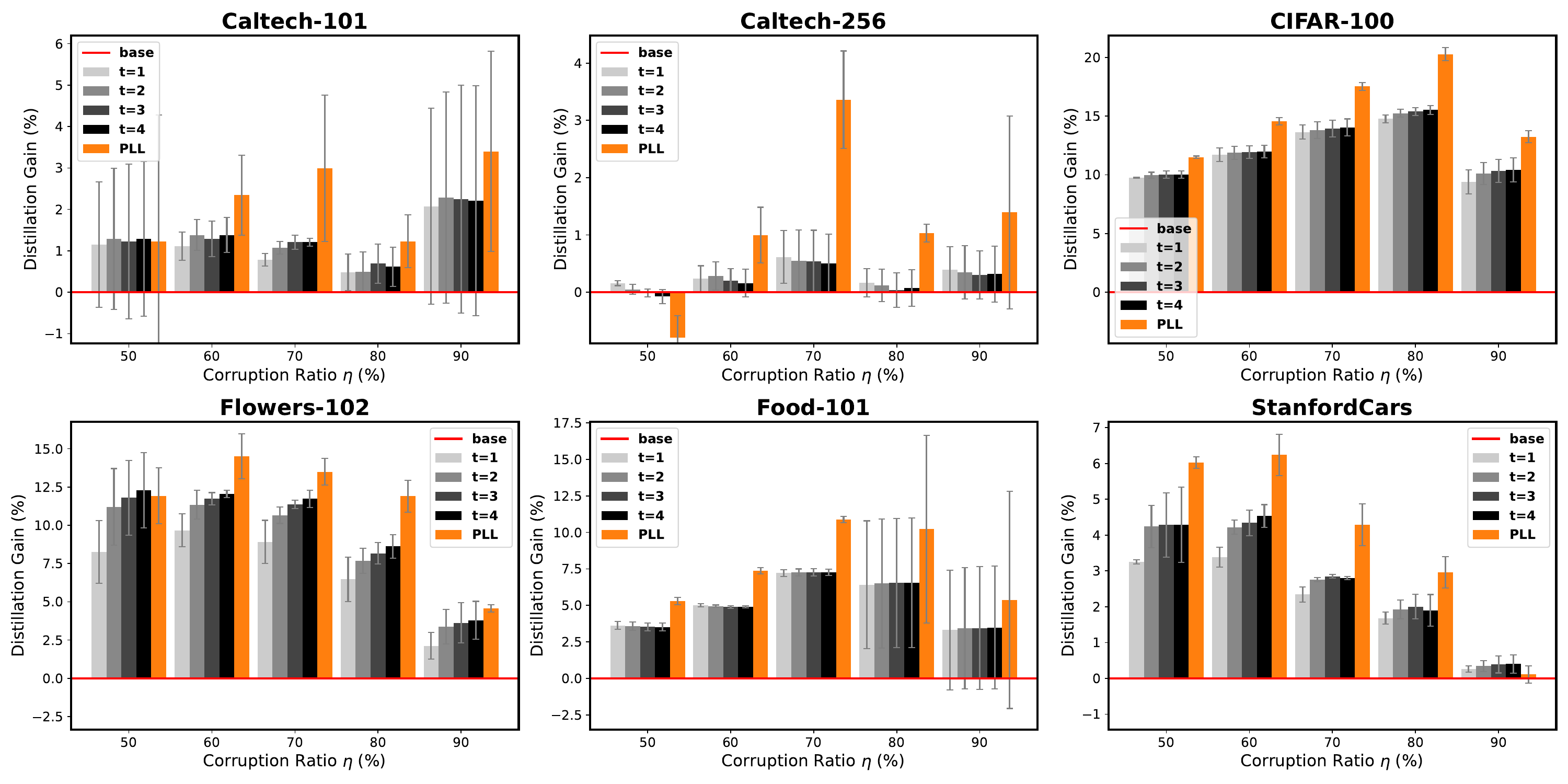}\label{fig:superclass_base}} 
    \\
    \subfloat[Symmetric label corruption]{\includegraphics[width=\textwidth]{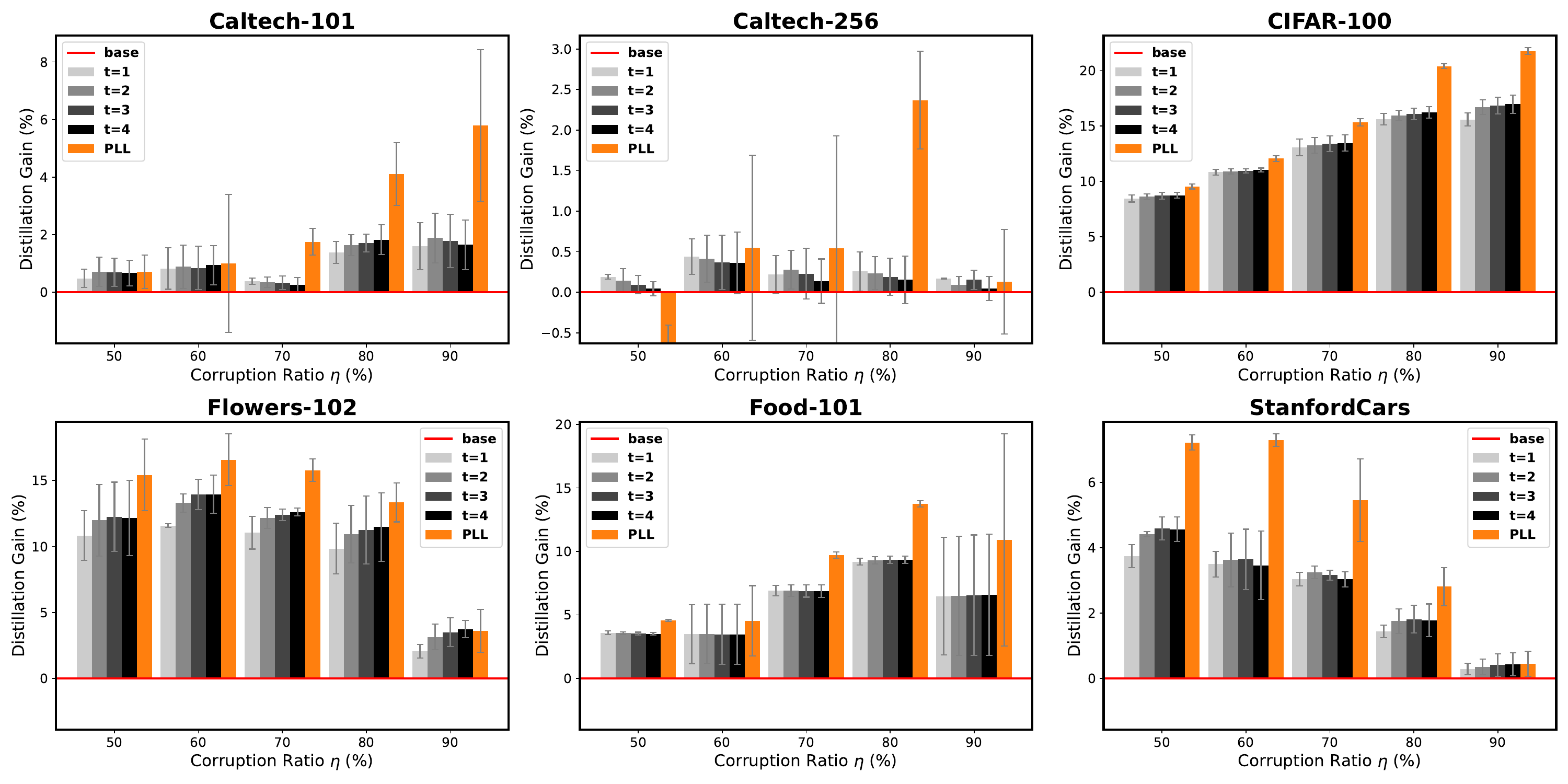}\label{fig:sym_base}}
    \caption*{}
\end{figure}
\newpage

\begin{figure}[h]
    \setcounter{subfigure}{2}
     \subfloat[Asymmetric label corruption]{\includegraphics[width=\textwidth]{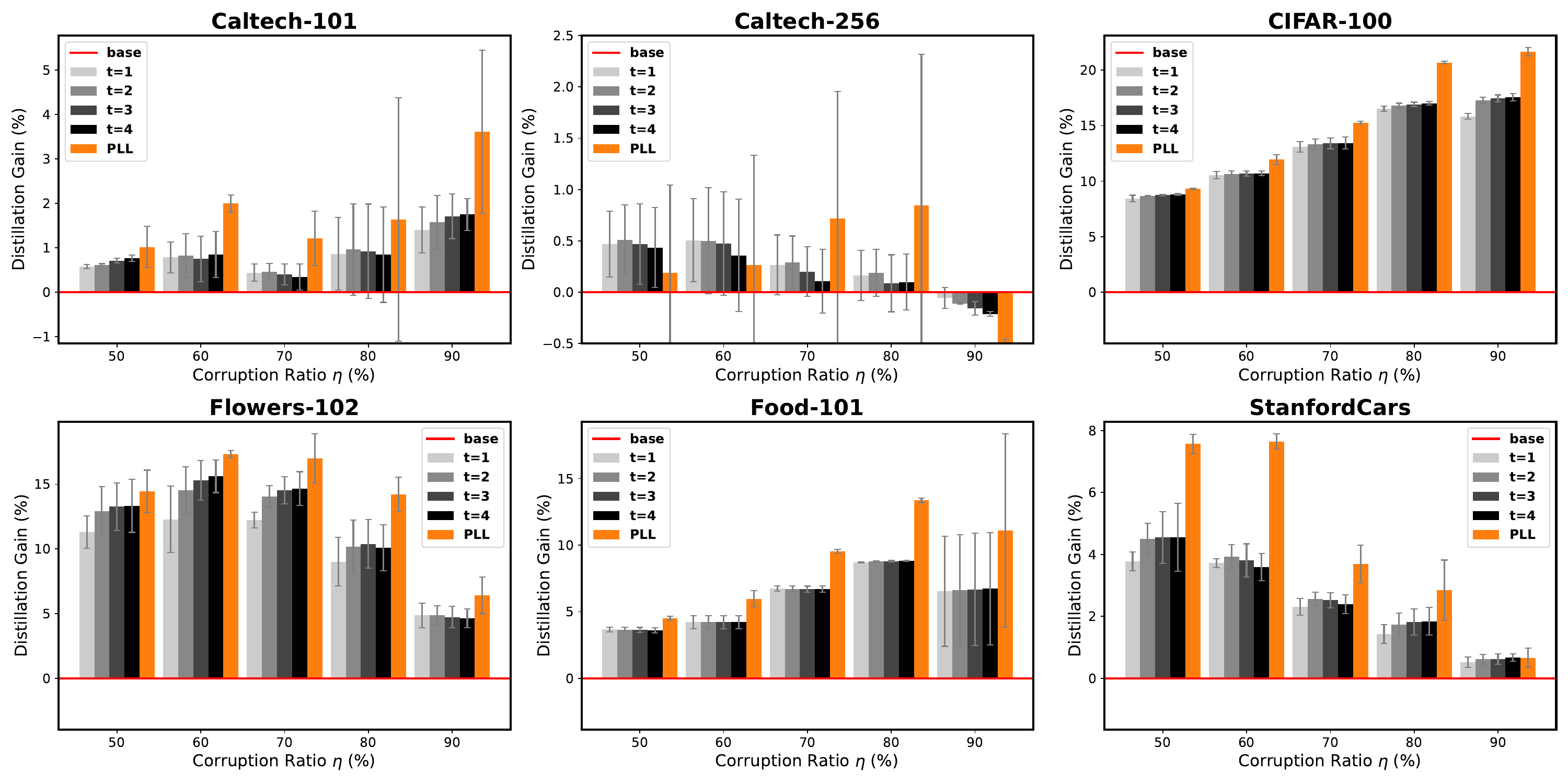}\label{fig:asym_base}}
     
    \caption{Distillation gain in accuracy (\%) by each student model compared to the teacher model in three different label corruption scenarios. {\color{lightgray}P}{\color{mediumgray}L}{\color{darkgray}A}{\color{darkdarkgray}I}N represents the gap between the multi-round SD students and the teacher model, while {\color{orange}ORANGE} represents the gap between the PLL students and the teacher model. We use weight decay value $\lambda=5 \times 10^{-4}$ for Caltech-101, Flowers-102, and StanfordCars datasets, and $\lambda=5 \times 10^{-5}$ for CIFAR-100, Caltech-256, and Food-101 datasets.}
\label{fig:exp_full}
    \vspace{-5pt}
\end{figure}

\section{Full Output Evolution of Fig.~\ref{fig:output_clst} and Fig.~\ref{fig:real_evolution}}\label{app:output_full}
    In Sec.~\ref{sec:label_avg}, our analysis reveals that multi-round self-distillation effectively performs label averaging among instances with high feature correlations. This process leads to clustered predictions and improved generalization. To verify our claim, we conduct experiments on both synthetic and real data and illustrate the clustering effect of softmax outputs of the student models as the number of distillation rounds $t$ increases. 
    Fig.~\ref{fig:output_evolution_full} presents the full version of Fig.~\ref{fig:output_clst}, depicting the output evolution for $K=4$ class classification task for synthetic dataset. Each dotted circle represents a cluster containing all instances of a true label. As the number of distillation round $t$ progresses, the student model's outputs converge to unique clustering points, depending on its true label. 

    \begin{figure}[h]
        \centering
        \includegraphics[width=\linewidth]{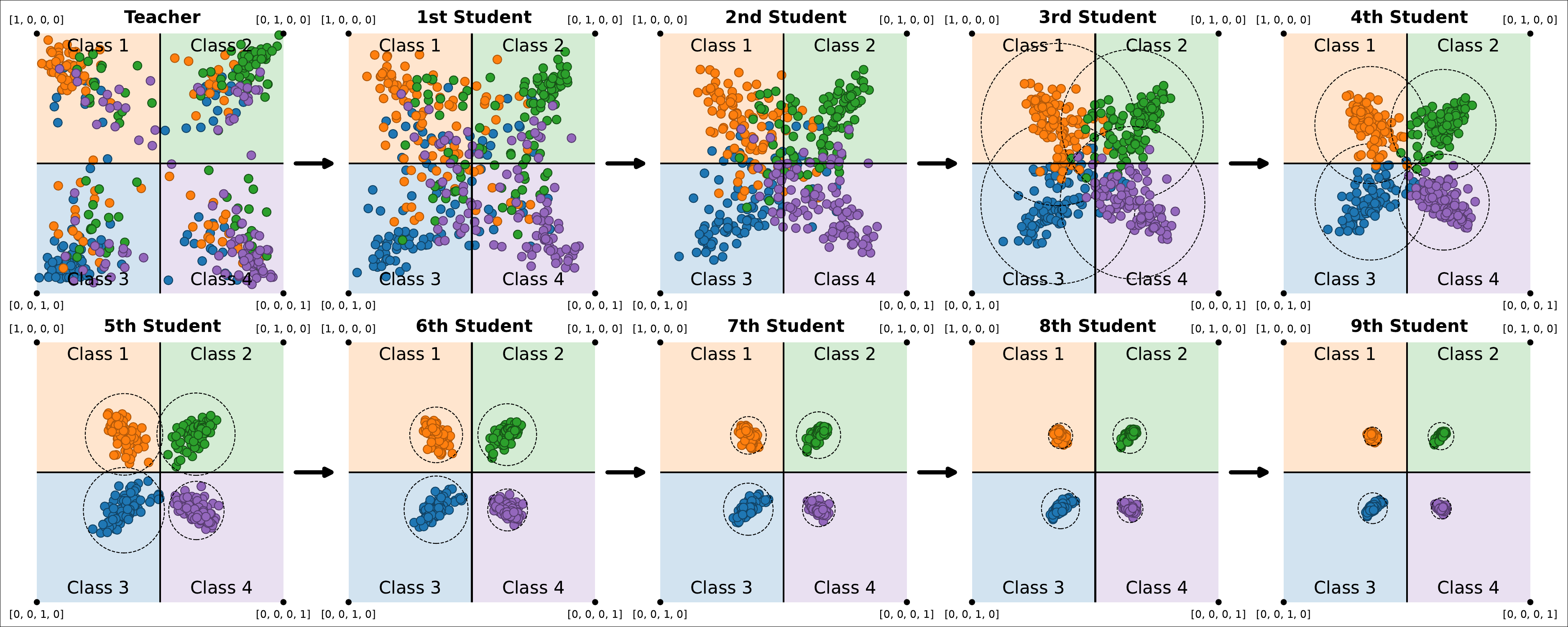}
        \caption{(Full version of Fig.~\ref{fig:output_clst}) The output evolution of the teacher model and the multi-round self-distilled student model. Each corner represents the one-hot vectors of the respective class. We use $n=100$ and $\lambda = 3.125 \times 10^{-4}$, with 50\% symmetric label corruption. The Gram matrix is assumed to follow a block-wise structure as described in \eqref{eqn:corr}, with parameters $c=0.4$ and $d=0.1$, added by a random perturbation matrix $\bsE$, where each entry $[\bsE]_{i,j} \sim U(-0.05, 0.05)$. }
        \label{fig:output_evolution_full}
    \end{figure}
    Fig.~\ref{fig:real_evolution_full} is the full version of Fig.~\ref{fig:real_evolution}. For the CIFAR-100 dataset, we arbitrarily select 4 classes and use only the instances belonging to these 4 classes, with 500 samples per class. As the distillation step $t$ progresses, the outputs form clusters due to the label averaging effect. 
    
    \begin{figure}[t]
        \centering
        \includegraphics[width=\linewidth]{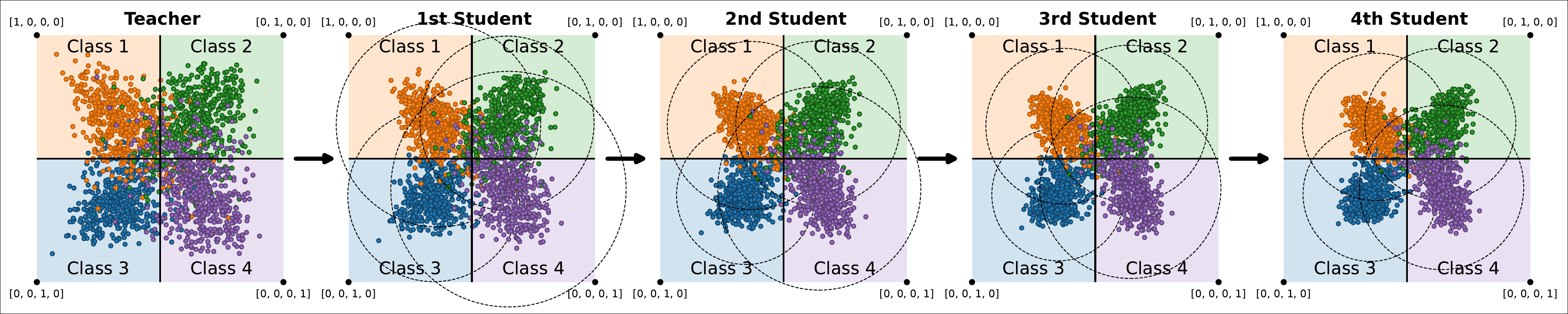}
        \caption{(Full version of Fig.~\ref{fig:real_evolution}) The output evolution of the teacher model and the multi-round self-distilled student models for the 4-class CIFAR-100 dataset. Each corner represents the one-hot vectors of the respective class. We use $\lambda = 2.5 \times 10^{-4}$, with 50\% symmetric label corruption.}
        \label{fig:real_evolution_full}
    \end{figure}

\section{Ablation Study}

\subsection{Targeting Top-k Partial Labels in PLL Student Model}\label{app:topk_full}
In Sec.~\ref{sec:sd_pll}, we theoretically demonstrate the effectiveness of our novel student model, the PLL student model. By selecting the top two label positions from the teacher model's output, we ensure the inclusion of the ground-truth label in the partial label list. While increasing the number of candidate labels may improve the likelihood of including the true label, it can also reduce confidence in the correct label for both noisy and clean-label samples, potentially leading to performance degradation. We conduct additional ablation studies on the sizes of candidate sets to validate the appropriateness of using only the top-two labels, as shown in Fig.~\ref{fig:topk_full}. We observe that the model targeting the top-two vectors achieves the best trade-offs between accurately including the correct label in the partial label list while maintaining a high enough confidence for the correct label within the list, resulting in the best test accuracies across six benchmark datasets.
    \begin{figure}[h]
        \centering
        \includegraphics[width=\linewidth]{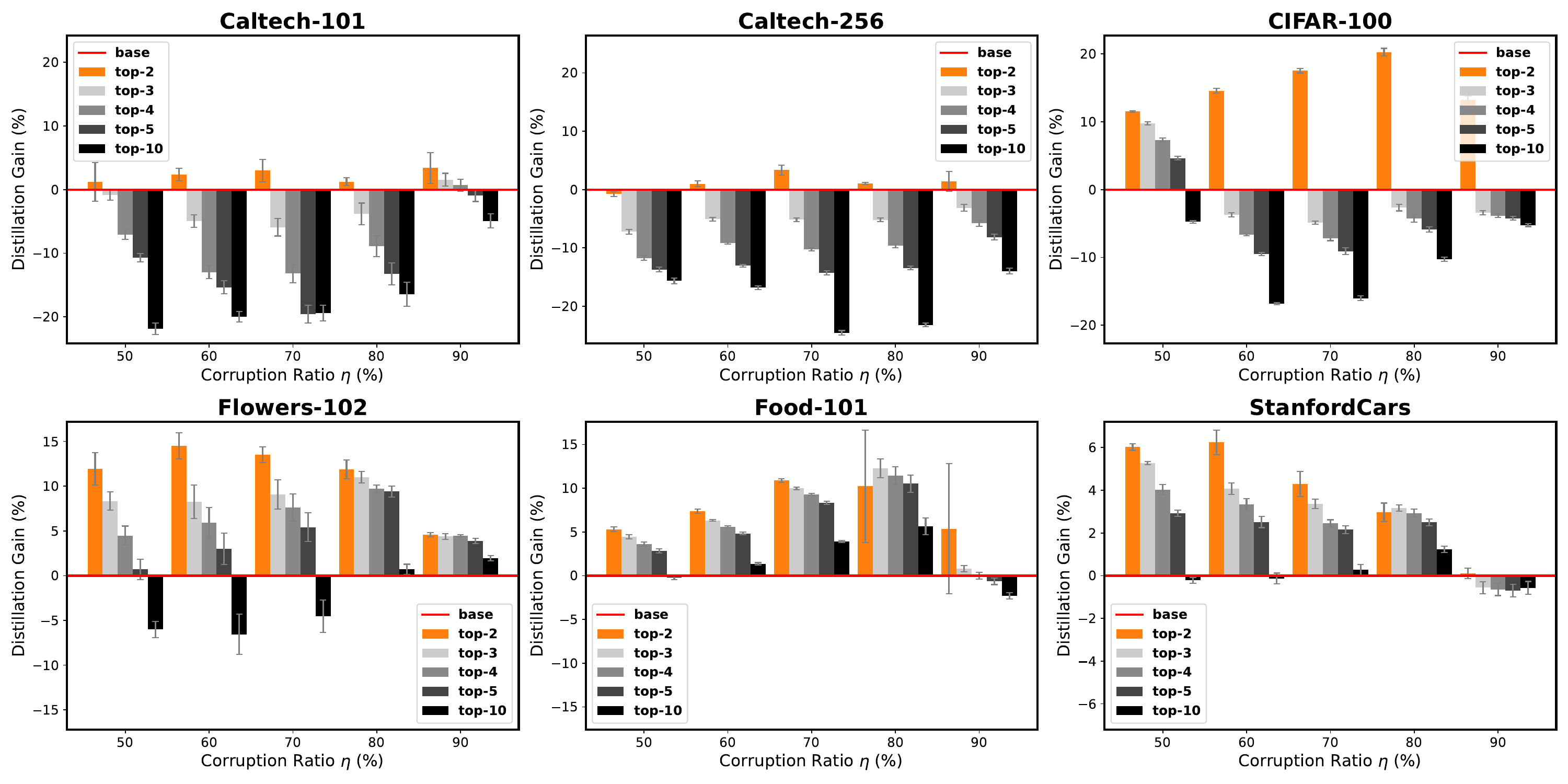}
        \caption{Full version of~\ref{fig:topk}. Distillation gain in accuracy (\%) by each top-$k$ PLL student model compared to the teacher model in the superclass label corruption scenario. {\color{orange}ORANGE} represents the improvement by our proposed top-2 PLL student model compared to the teacher model, while {\color{lightgray}P}{\color{mediumgray}L}{\color{darkgray}A}{\color{darkdarkgray}I}N represents the gap between the top-$k$ PLL students and the teacher model for $k>2$.}
        \label{fig:topk_full}
    \end{figure}

\subsection{Employing a Larger Feature Extractor Backbone}
We also conduct additional experiments using a larger backbone--a pretrained Vision Transformer (Vit-B)~\citep{vit} model--as a fixed feature extractor. The distillation gains in test accuracy (\%) over six benchmark datasets across different label corruption rates are presented in Fig.~\ref{fig:exp_vit}.
\begin{figure}[h]
        \centering
        \includegraphics[width=\linewidth]{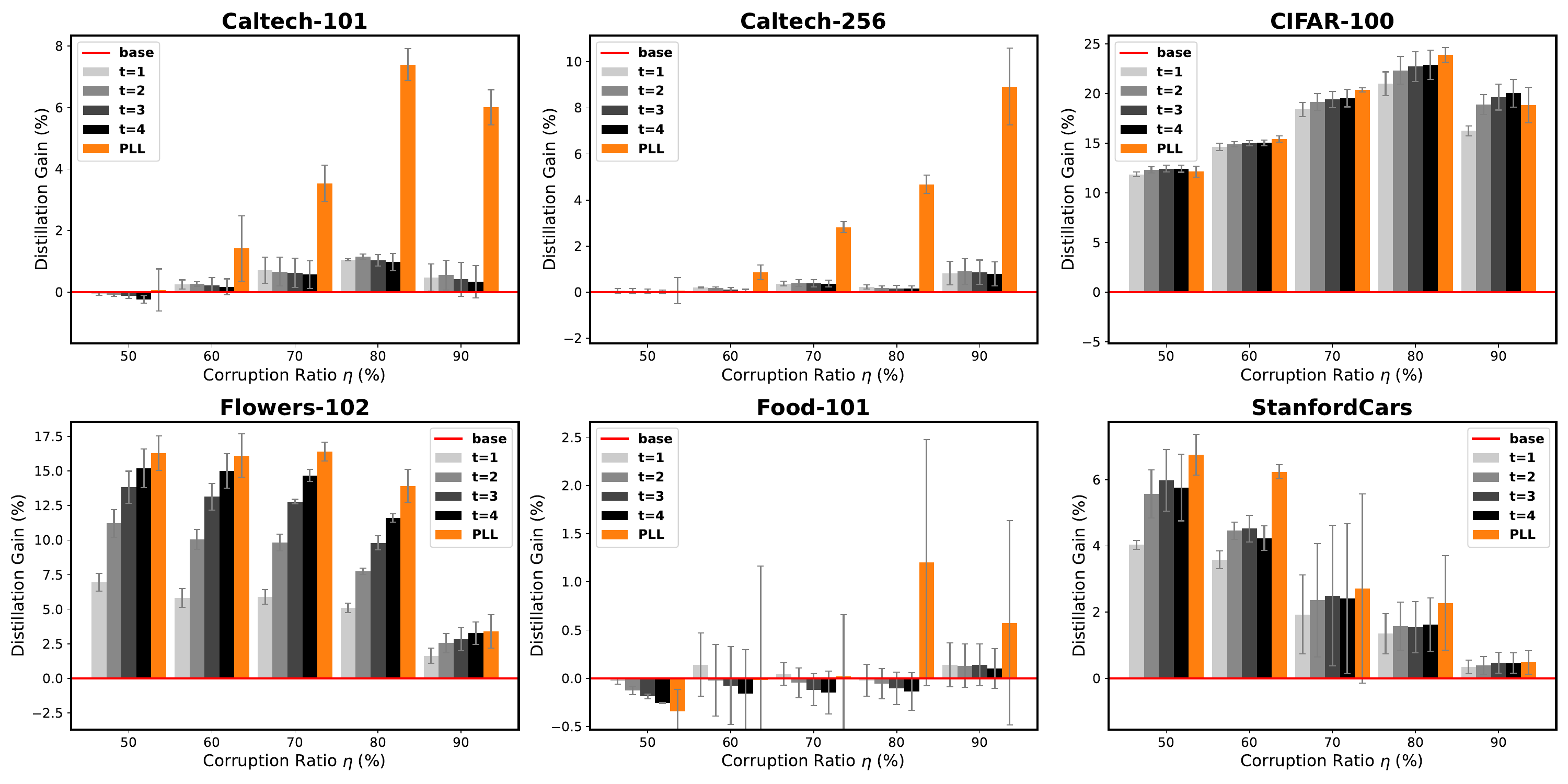}
        \caption{Distillation gain in accuracy (\%) by each student model compared to the teacher model in three different label corruption scenarios, using ViT-B as a fixed feature extractor. {\color{lightgray}P}{\color{mediumgray}L}{\color{darkgray}A}{\color{darkdarkgray}I}N represents the gap between the multi-round SD students and the teacher model, while {\color{orange}ORANGE} represents the gap between the PLL students and the teacher model. We use weight decay value $\lambda=5 \times 10^{-4}$ for Caltech-101, Flowers-102, and StanfordCars datasets, and $\lambda=5 \times 10^{-5}$ for CIFAR-100, Caltech-256, and Food-101 datasets.}
        \label{fig:exp_vit}
    \end{figure}

Using a larger backbone like ViT-B enhances feature extraction capabilities, resulting in a greater disparity between intra-class and inter-class feature correlations. For example, when calculating feature correlations on the CIFAR-100 dataset using the pretrained ViT-B model, we observe that the average intra-class feature correlation increases to 0.35, compared to 0.25 with ResNet34. This higher intra-class correlation amplifies the clustering effect of self-distillation on model predictions, allowing significant performance improvements to be achieved with fewer distillation steps, as implied by our Theorem~\ref{thm:self}.

In experiments with the larger backbone (ViT-B), we find that most of the distillation gains occur within the first few rounds. As shown in Fig.~\ref{fig:exp_vit}, nearly all performance improvements are observed in the first and second distillation steps when using ViT-B, although additional distillation steps still bring slight gains in high noise rate regimes. The PLL student model also effectively achieves the gains of multi-round self-distillation in a single round in high label-noise regimes for the ViT-B backbone. These additional experiments confirm that our approach is effective with larger models, further validating the versatility and robustness of our method.

\newpage
\section{Proof of Theorem \ref{thm:Q1}}\label{app:proof}



Recall~\eqref{eqn:alpha_sm}, the equation for the coefficients $\{\bsal_i^{(t)}\}_{i=1}^{Kn} $:
\begin{equation}
    Kn\lambda \bsal_i^{(t)} = \bsy_i^{(t-1)} -\sigma \left( \sum_{j=1}^{Kn}  \inp{\phi(\bsx_i),  \phi(\bsx_j)}\bsal_j^{(t)}  \right).
\end{equation}

By applying the softmax approximation (Assumption~\ref{asm:softmax}), we have
\begin{equation}\label{eqn:alpha_linear}
    K^2 n \lambda \bsal_i^{(t)} = K \bsy_i^{(t-1)} - \sum_{j=1}^{Kn}  \inp{\phi(\bsx_i),  \phi(\bsx_j)} \bsal_j^{(t)} - \textbf{1}_K.
\end{equation}

Define the output matrix $\bsY^{(t)} \in \R^{K \times Kn}$ and the coefficient matrix $\bsA^{(t)} \in \R^{K \times Kn}$ as:
\begin{equation}
    \bsY^{(t)} := [\bsy_1^{(t)}, \dots, \bsy_{Kn}^{(t)}],\quad  \bsA^{(t)} := [\bsal_1^{(t)}, \dots, \bsal_{Kn}^{(t)}].
\end{equation}

We also define the Gram matrix $\mPhi \in \R^{Kn \times Kn}$, where each entry is given by 
\begin{equation}\label{eqn:inst_feature_corr}
    [\mPhi]_{i, j} = \inp{\phi(\bsx_i), \phi(\bsx_j)}, \quad \forall i, j \in [Kn].
\end{equation}

Denoting the $i$-th column of a matrix $\bsA$ by $\bsA[:,i]$,~\eqref{eqn:alpha_linear} can be written as
\begin{align*}
    K^2 n\lambda \bsA^{(t)}[:, i] &= K \bsY^{(t-1)}[:, i] - \sum_{j=1}^{Kn} [\mPhi]_{i, j} \bsA^{(t)}[:, j] - \textbf{1}_K \\
    &= K \bsY^{(t-1)}[:, i] - (\bsA^{(t)}\mPhi^\top)[:, i] - \textbf{1}_{K \times Kn}[:, i] .
\end{align*}
Since $\mPhi$ is symmetric, we have
\begin{equation*}
    \bsA^{(t)}(K^2n\lambda \bsI_{Kn} + \mPhi) =  K \bsY^{(t-1)} - \textbf{1}_{K \times Kn}.
\end{equation*}
Also, from the definition of $\bsA^{(t)}$, we obtain the relation between $\bsY^{(t)}$ and $\bsY^{(t-1)}$ as follows:
\begin{gather*}
\frac{1}{Kn\lambda}\left(\bsY^{(t-1)} - \bsY^{(t)} \right) = K \left(\bsY^{(t-1)} - \frac{1}{K} \textbf{1}_{K \times Kn}\right) \left(K^2n\lambda\bsI_{Kn} + \mPhi\right)^{-1}; \\
    \left(\bsY^{(t-1)} - \bsY^{(t)} \right) = \left(\bsY^{(t-1)} - \frac{1}{K} \textbf{1}_{K \times Kn}\right) \left(\bsI_{Kn} + \frac{1}{K^2n\lambda}\mPhi\right)^{-1}; \\
    \left(\bsY^{(t-1)} - \frac{1}{K} \textbf{1}_{K \times Kn}\right) - \left(\bsY^{(t)} - \frac{1}{K} \textbf{1}_{K \times Kn}\right) = \left(\bsY^{(t-1)} - \frac{1}{K} \textbf{1}_{K \times Kn}\right) \left(\bsI_{Kn} + \frac{1}{K^2n\lambda }\mPhi\right)^{-1}; \\
     \left(\bsY^{(t)} - \frac{1}{K} \textbf{1}_{K \times Kn}\right) = \left(\bsY^{(t-1)} - \frac{1}{K} \textbf{1}_{K \times Kn}\right) \left[\bsI_{Kn} - \left(\bsI_{Kn} + \frac{1}{K^2n\lambda }\mPhi\right)^{-1}\right].
\end{gather*}
From this recursive relation, we can express $\bsY^{(t)}$ in terms of $\bsY^{(0)}$ and $\mPhi$ as:
\begin{equation}
    \bsY^{(t)} - \frac{1}{K} \textbf{1}_{K \times Kn} = \left(\bsY^{(0)} - \frac{1}{K} \textbf{1}_{K \times Kn}\right) \left[\bsI_{Kn} - \left(\bsI_{Kn} + \frac{1}{K^2n\lambda }\mPhi\right)^{-1}\right]^{t} .
\end{equation}

Similarly, for the outputs of the student model with partial labels, the only difference is that $\bsY^{(0)}$ is replaced by $\bar{\bsY}$:
\beq
   \bsY^{(P)} - \frac{1}{K} \textbf{1}_{K \times Kn}= \left(\bar{\bsY} - \frac{1}{K}\textbf{1}_{K \times Kn}\right)\left[\bsI_{Kn} - \left(\bsI_{Kn} + \frac{1}{K^2n\lambda }\mPhi\right)^{-1}\right] ,
\eeq
where $ \bsY^{(P)} $ is the output of the student model with partial labels using the input $\bar{\bsY} = [\bar{\bsy}_1, \cdots, \bar{\bsy}_{Kn}]$. 

\section{Proof of Lemma \ref{lem:output_correlation} and the Closed-Form Output Solutions for Five Different  Feature Correlations}\label{app:proof_lemma_corr}

Recall the closed-form solution for $\bsY^{(t)}$ from~\eqref{eqn:output_closed_self}:
\beq
    \bsY^{(t)} - \frac{1}{K} \textbf{1}_{K \times Kn} = \left(\bsY^{(0)} - \frac{1}{K} \textbf{1}_{K \times Kn}\right) \sum_{i=1}^{Kn} \left(\frac{\lambda_i}{K^2n\lambda + \lambda_i}\right)^t \bsv_i \bsv_i^\top,
\eeq
where $\{(\lambda_i, \bsv_i)\}_{i=1}^{Kn}$ represents the eigenvalue-eigenvector pairs of the Gram matrix $\mPhi$, satisfying \(\mPhi = \sum_{i=1}^{Kn} \lambda_i \bsv_i \bsv_i^\top\). In this section, we compute $\{(\lambda_i, \bsv_i)\}_{i=1}^{Kn}$ for diverse class-wise feature correlation scenarios and simplify the closed-form solution for $ \bsY^{(t)}$.

Assume the feature correlation between two instances $\inp{\phi(\bsx_i), \phi(\bsx_j)}$ is determined by their true labels:
\begin{equation}
    \inp{\phi(\bsx_i), \phi(\bsx_j)} = \begin{cases}
        1, & i = j; \\
        \omega(y_i, y_j), & i \neq j.
    \end{cases}
\end{equation}

Define the class-wise feature correlation matrix $\bsW \in \R^{K \times K}$, with entries
\beq 
[\bsW]_{i,j} := \omega(i, j), \quad \forall i, j \in [K].
\eeq
Without loss of generality, we assume that the dataset is sorted by true labels $y_i$. Then, the Gram matrix $\mPhi$ can be written as
\beq\label{eqn:gram}
    \mPhi =  \underbrace{
        \begin{pmatrix}
            \begin{tikzpicture}
                \fill[green!20] (0, 0) rectangle (0.36, 0.36);
                \fill[green!20] (0.44, 0) rectangle (0.80, 0.36);
                \fill[green!20] (0.88, 0) rectangle (1.24, 0.36);
                \fill[green!20] (1.32, 0) rectangle (1.68, 0.36);
                \fill[green!20] (1.76, 0) rectangle (2.12, 0.36);
                \fill[green!20] (0, 0.44) rectangle (0.36, 0.80);
                \fill[green!20] (0.44, 0.44) rectangle (0.80, 0.80);
                \fill[green!20] (0.88, 0.44) rectangle (1.24, 0.80);
                \fill[green!20] (1.32, 0.44) rectangle (1.68, 0.80);
                \fill[green!20] (1.76, 0.44) rectangle (2.12, 0.80);
                \fill[green!20] (0, 0.88) rectangle (0.36, 1.24);
                \fill[green!20] (0.44, 0.88) rectangle (0.80, 1.24);
                \fill[green!20] (0.88, 0.88) rectangle (1.24, 1.24);
                \fill[green!20] (1.32, 0.88) rectangle (1.68, 1.24);
                \fill[green!20] (1.76, 0.88) rectangle (2.12, 1.24);
                \fill[green!20] (0, 1.32) rectangle (0.36, 1.68);
                \fill[green!20] (0.44, 1.32) rectangle (0.80, 1.68);
                \fill[green!20] (0.88, 1.32) rectangle (1.24, 1.68);
                \fill[green!20] (1.32, 1.32) rectangle (1.68, 1.68);
                \fill[green!20] (1.76, 1.32) rectangle (2.12, 1.68);
                \fill[green!20] (0, 1.76) rectangle (0.36, 2.12);
                \fill[green!20] (0.44, 1.76) rectangle (0.80, 2.12);
                \fill[green!20] (0.88, 1.76) rectangle (1.24, 2.12);
                \fill[green!20] (1.32, 1.76) rectangle (1.68, 2.12);
                \fill[green!20] (1.76, 1.76) rectangle (2.12, 2.12);
            \end{tikzpicture}
        \end{pmatrix}}_{ \text{ block matrix with } n \times n \text{ blocks}} + 
        \underbrace{
        \begin{pmatrix}
            \begin{tikzpicture}
                \draw[thick, black] (2.12, 0) -- (0, 2.12);
            \end{tikzpicture}
        \end{pmatrix}}_{\text{diagonal matrix}} = \bsP^\top \bsW \bsP + \bsD([1-[\bsW]_{i, i}]_{i=1}^K \bsP),
\eeq

where $\bsD(\bsz)$ is a diagonal matrix with elements from the vector $\bsz$, and $\bsP \in \R^{K \times Kn}$ is defined as:
\begin{align}\label{eqn:mP}
\bsP = [\bse(y_1), \dots, \bse(y_{Kn})] =  &\begin{pmatrix}
1 \cdots 1 & 0 \cdots 0 & \cdots &  0 \cdots 0 \\
0 \dots 0  & 1 \cdots 1 & \cdots &  0 \cdots 0\\
\vdots & \vdots & \ddots & \\
0 \cdots 0 & 0 \cdots 0 & \cdots & 1 \cdots 1
\end{pmatrix}.\\[-11pt]
&\hspace{1em} \underbrace{\hspace{2.4em}}_{n} \hspace{0.9em} \underbrace{\hspace{2.4em}}_{n} \hspace{3.1em} \underbrace{\hspace{2.4em}}_{n} \nonumber
\end{align}

In the following sub-sections, we derive closed-form solutions for the model outputs~\eqref{eqn:output_closed_self} for five different class-wise feature correlations $\bsW$.

\subsection{Case I: Positive Correlations Only Among Same-Class Instances}
Assume that the class-wise feature correlation $\omega(y_i, y_j)$ is defined as follows:
\begin{equation}
    \omega(y_i, y_j) = \begin{cases}
        c, & y_i = y_j; \\
        0, & y_i \neq y_j,
    \end{cases}
\end{equation}
for $c>0$. From~\eqref{eqn:gram}, the Gram matrix $\mPhi$ can be computed as:
\begin{align*}
   \mPhi & = \bsP^\top \bsW \bsP + \bsD([1-[\bsW]_{i, i}]_{i=1}^K \bsP) \\
   & = \bsP^\top (c\bsI_K) \bsP + \bsD([1-c]_{i=1}^K \bsP) \\
   & = nc\left(\frac{1}{\sqrt{n}}\bsP\right)^\top\left(\frac{1}{\sqrt{n}}\bsP\right) + (1-c)\bsI_{Kn}.
\end{align*}

Let us represent $\bsP$ in \eqref{eqn:mP} by $\bsP = [\bsp_1,\dots, \bsp_K]^{\top}$ for $\bsp_i\in\mathbb{R}^{Kn}$. It can be shown that $\mPhi$ has two types of eigenvalue-eigenvector pairs $\{\lambda_i, \bsv_i\}_{i=1}^{Kn}$:
\begin{itemize}
    \item $K$ pairs with eigenvalues
        \begin{equation*}
            \lambda_i = nc + 1-c, \quad \text{for }i = 1, \dots, K,
        \end{equation*}
        and the corresponding eigenvectors $\bsv_i = \frac{1}{\sqrt{n}}\bsp_i$;
    \item $(Kn-K)$ pairs with the eigenvalues $(1-c)$ and the corresponding eigenvectors, denoted by $\bsv_i$ for $i=K+1,\dots, Kn$.
\end{itemize}

Note that $\sum_{i=K+1}^{Kn} \bsv_i \bsv_i^\top$, the sum of outer-product of $(Kn-K)$ eigenvectors satisfies
\begin{equation}
    \sum_{i=K+1}^{Kn} \bsv_i \bsv_i^\top = \bsI_{Kn} - \sum_{i=1}^{K} \bsv_i \bsv_i^\top = \bsI_{Kn} - \frac{1}{n} \bsP^\top \bsP.
\end{equation}

Hence, the label averaging matrix $\mPhi^{(t)}$ is given as
\begin{align*}
    \mPhi^{(t)} &= \sum_{i=1}^{Kn} \left(\frac{\lambda_i}{K^2n\lambda + \lambda_i}\right)^t \bsv_i \bsv_i^\top \\
    &= \frac{1}{n}\left(\frac{1-c+nc}{K^2n\lambda +  1-c + nc }\right)^t \bsP^\top \bsP + \left(\frac{1-c}{K^2n\lambda + 1-c}\right)^t \left(\bsI_{Kn} - \frac{1}{n} \bsP^\top \bsP\right).
\end{align*}

For simplicity, we define two constants $p$ and $q$:
\begin{equation}
    p =  \frac{1-c}{K^2 n \lambda + 1 - c}, \quad q = \frac{1 - c + nc }{K^2 n \lambda + 1 - c + nc}.
\end{equation}

The label averaging matrix $\mPhi^{(t)}$ can be expressed as
\begin{equation}
 \mPhi^{(t)} = \underbrace{\frac{1}{n}(q^t-p^t) \bsP^\top \bsP}_{\text{class-wise label averaging matrix}} + p^t \bsI_{Kn}.
\end{equation}
Then, the closed-form of $\bsY^{(t)}$ is
\begin{align*}
    \bsY^{(t)} &=\left(\bsY^{(0)} - \frac{1}{K}\textbf{1}_{K \times Kn}\right)\mPhi^{(t)} + \frac{1}{K} \textbf{1}_{K \times Kn} \\
    &= \left(\bsY^{(0)} - \frac{1}{K}\textbf{1}_{K \times Kn}\right)\left(\frac{1}{n}(q^t-p^t) \bsP^\top \bsP + p^t \bsI_{Kn}\right) + \frac{1}{K} \textbf{1}_{K \times Kn} \\
    &= p^t \bsY^{(0)} + \frac{1}{n}\left(q^t - p^t\right) \bsY^{(0)}  \bsP^\top \bsP+ \frac{1}{K} (1 - q^t) \textbf{1}_{K \times Kn},
\end{align*}
where the last equality holds due to $ \textbf{1}_{K \times Kn}\bsP^\top \bsP=n\textbf{1}_{K \times Kn} $.

Finally, we arrive at the closed form of $\bsy_i^{(t)}$:
\begin{equation}\label{eqn:cf_const}
    \bsy_i^{(t)} = p^t \bse(\hat{y}_i) + \frac{1}{n}(q^t - p^t) \sum_{\{j:y_j=y_i\}} \bse(\hat{y}_j) + \frac{1}{K}(1 - q^t) \textbf{1}_K.
\end{equation}

\subsection{Case II: Class-Dependent Intra-Class Correlations}
Let the feature correlation between two samples with the same true label change depending on the class. We model the feature correlation between instances as a function of their true labels:
\begin{equation}
     \omega(y_i, y_j) = \begin{cases}
        \omega(y_i), & y_i = y_j; \\
        0, & y_i \neq y_j.
    \end{cases}
\end{equation}
Here, we assume $\omega(y_i)>0$ for all $y_i\in[K]$. The Gram matrix $\mPhi$ can be computed as
\begin{align*}
    \mPhi &= \bsP^\top \bsW \bsP + \bsD([1-[\bsW]_{i, i}]_{i=1}^K \bsP) \\
    &= \bsP^\top \bsD([\omega(1), \dots, \omega(K)]) \bsP + \bsD([(1-\omega(i))]_{i=1}^K \bsP) \\
    &= \sum_{i=1}^K n\omega(i) \left(\frac{1}{\sqrt{n}}\bsp_i\right) \left(\frac{1}{\sqrt{n}}\bsp_i\right)^\top + \bsD([(1-\omega(i))]_{i=1}^K \bsP).
\end{align*}
It can be checked that $\mPhi$ has two types of eigenvalue-eigenvector pairs:
\begin{itemize}
    \item $K$ pairs with eigenvalues
    \begin{equation*}
        \lambda_i = n\omega(i) + 1-\omega(i),\quad  i=1, \dots, K,
    \end{equation*}
    and the corresponding eigenvectors $\bsv_i = \frac{1}{\sqrt{n}} \bsp_i$;
    
    \item $(Kn-K)$ pairs with eigenvalues
    \begin{equation}
        [\lambda_{K+1}, \dots, \lambda_{Kn}] = [\underbrace{1-\omega(1), \dots, 1-\omega(1)}_{(n-1) \text{ eigenvalues}}, \dots, \underbrace{1-\omega(K), \dots, 1-\omega(K)}_{(n-1) \text{ eigenvalues}}],
    \end{equation}
    and the corresponding eigenvectors $\bsv_i$ for $i=K+1, \dots, Kn$.
\end{itemize}

Hence, the label averaging matrix $\mPhi^{(t)}$ is given as
\begin{align*}
    \mPhi^{(t)} &= \sum_{i=1}^{Kn} \left(\frac{\lambda_i}{K^2n\lambda + \lambda_i}\right)^t \bsv_i \bsv_i^\top  \\
    &= \frac{1}{n}\sum_{i=1}^K \left(\frac{ 1-\omega(i) +n\omega(i) }{K^2n\lambda +1-\omega(i)+ n\omega(i) }\right)^t \bsp_i^\top \bsp_i + \sum_{i=1}^K \left(\frac{ 1-\omega(i)}{K^2n\lambda + 1-\omega(i)}\right)^t \sum_{\{j:\lambda_j = 1-\omega(i)\}} \bsv_j \bsv_j^\top 
\end{align*}

Note that 
\begin{equation}
    \sum_{\{j:\lambda_j = 1-\omega(i)\}} \bsv_j \bsv_j^\top = \bsD([\underbrace{0, \dots, 0}_{(i-1)n}, \underbrace{1, \dots, 1}_{n}, \underbrace{0, \dots, 0}_{(K-i)n}]) - \frac{1}{n} \bsp_i^\top \bsp_i. 
\end{equation}

For simplicity, let us define two vectors $\bsp=[p_1, \dots, p_K]^\top,\bsq=[q_1, \dots, q_K]^\top \in \R^K$:
\begin{equation}
    p_i =  \frac{1 - \omega(i)}{K^2 n \lambda + 1 - \omega(i)}, \quad q_i =  \frac{1 - \omega(i) + n\omega(i) }{K^2 n \lambda + 1 - \omega(i) + n\omega(i)}.
\end{equation}

Then, $\mPhi^{(t)}$ can be simplified as
\begin{align*}
    \mPhi^{(t)} &= \sum_{i=1}^K \frac{1}{n} (q_i^t - p_i^t) \bsp_i^\top \bsp_i + \sum_{i=1}^K p_i^t \bsD([\underbrace{0, \dots, 0}_{(i-1)n}, \underbrace{1, \dots, 1}_{n}, \underbrace{0, \dots, 0}_{(K-i)n}]) \\
    &=\underbrace{\frac{1}{n} \bsP^\top \bsD(\bsq^t - \bsp^t)\bsP}_{\text{class-wise label averaging matrix}} + \bsD(\bsp^t \bsP).
\end{align*}

Therefore, we arrive at the closed form for $\bsy_i^{(t)}$:
\begin{align}
    \bsy_i^{(t)} &= \left(\bsY^{(0)} - \frac{1}{K} \textbf{1}_{K \times Kn}\right)\mPhi^{(t)}[:, i] + \frac{1}{K} \textbf{1}_{K} \nonumber \\
    &= p_{y_i}^t \bse(\hat{y}_i) + \frac{1}{n}\left(q_{y_i}^t -p_{y_i}^t\right) \sum_{\{j:y_i=y_j\}} \bse(\hat{y}_j )+ \frac{1}{K} \left(1-q_{y_i}^t \right) \textbf{1}_{K}. 
\end{align}

\subsection{Case III: Correlation Gap between Intra- and Inter-Class Instances}
Let $c$ be the feature correlation between two samples with the same true label, and let $d$ be the correlation between two samples with different true labels. The class-wise feature correlation $\omega(y_i, y_j)$ is defined as follows:
\begin{equation}
    \omega(y_i, y_j) = \begin{cases}
        c, & y_i = y_j; \\
        d, & y_i \neq y_j,
    \end{cases}
\end{equation}
for $c>d\geq 0$. The Gram matrix $\mPhi$ can be computed as
\begin{align*}
     \mPhi &= \bsP^\top \bsW \bsP + \bsD([1-[\bsW]_{i, i}]_{i=1}^K \bsP) \\
    & =\bsP^\top (d\textbf{1}_{K \times K} + (c-d) \bsI_{K}) \bsP + (1-c)\bsI_{Kn}.
\end{align*}
Note that $\bsW = d\textbf{1}_{K \times K} + (c-d) \bsI_{K}$ is a type of diagonal-shift to a low-rank matrix, which has two types of eigenvalue-eigenvector pairs $\{\sigma_i, \bsu_i\}_{i=1}^K$:
\begin{itemize}
    \item one pair with an eigenvalue 
    \begin{equation*}
        \sigma_1 = Kd + (c-d),
    \end{equation*}
    and the corresponding eigenvector $\bsu_1 = \frac{1}{\sqrt{K}}\textbf{1}_K$;
    \item $(K-1)$ pairs with eigenvalues
    \begin{equation*}
        \sigma_i = (c-d), \quad i=2,\dots, K,
    \end{equation*}
    and the corresponding eigenvectors $\bsu_i$. 
\end{itemize}

Then, the Gram matrix $\mPhi$ can be expressed as
\begin{equation*}
    \mPhi = \bsP^\top \bsW \bsP + (1-c)\bsI_{Kn} = \sum_{i=1}^K n\sigma_i \left(\frac{1}{\sqrt{n}}\bsP^\top \bsu_i\right)\left(\frac{1}{\sqrt{n}}\bsP^\top \bsu_i\right)^\top + (1-c)\bsI_{Kn}. 
\end{equation*}
It can be checked that $\mPhi$ has three types of eigenvalue-eigenvector pairs:
\begin{itemize}
    \item one pair with an eigenvalue 
    \begin{equation*}
        \lambda_1 = n\sigma_1 + (1-c) = Knd + n(c-d) + (1-c),
    \end{equation*}
    and the corresponding eigenvector
    \begin{equation*}
        \bsv_1 = \frac{1}{\sqrt{n}}\bsP^\top \bsu_1 = \frac{1}{\sqrt{Kn}} \bsP^\top \textbf{1}_K;
    \end{equation*}
    \item $(K-1)$ pairs with eigenvalues 
    \begin{equation*}
        \lambda_i = n\sigma_i + (1-c)= n(c-d) + (1-c), \quad i=2, \dots, K,
    \end{equation*}
    and the corresponding eigenvectors
    \begin{equation*}
        \bsv_i = \frac{1}{\sqrt{n}}\bsP^\top \bsu_i, \quad i=2, \dots, K;
    \end{equation*}
    \item $(Kn-K)$ pairs with eigenvalues 
    \begin{equation*}
        \lambda_{i} = (1-c), \quad i=K+1, \dots, Kn,
    \end{equation*}
    and the corresponding eigenvectors $\bsv_i$'s.
\end{itemize}

Define $p$, $q$, and $r$ according to the following equations:
\begin{align*}
    p &=  \frac{1-c}{K^2 n \lambda + 1-c}, \\
    q &= \frac{1-c + n(c-d)}{K^2 n \lambda + 1-c + n(c-d)}, \\
    r &= \frac{1-c + n(c-d) + Knd}{K^2 n \lambda + 1-c + n(c-d) + Knd}.
\end{align*}
Then, the label averaging matrix $\mPhi^{(t)}$ is given as
\begin{align*}
    \mPhi^{(t)} &= \sum_{i=1}^{Kn} \left(\frac{\lambda_i}{K^2n\lambda + \lambda_i}\right)^t \bsv_i \bsv_i^\top  \\
    &= \frac{r^t}{n}\bsP^\top \bsu_1 \bsu_1^\top \bsP + \frac{q^t}{n} \sum_{i=2}^K \bsP^\top \bsu_i \bsu_i^\top \bsP + p^t \sum_{i=K+1}^{Kn} \bsv_i \bsv_i^\top \\
    &= \frac{r^t}{Kn} \bsP^\top \textbf{1}_{K \times K} \bsP + \frac{q^t}{n} \bsP^\top \left[\bsI_K - \frac{1}{K} \textbf{1}_{K \times K} \right]\bsP +  p^t \left(\bsI_{Kn} - \frac{1}{n}\sum_{i=1}^K \bsP^\top \bsu_i \bsu_i^\top \bsP\right) \\
    &= \frac{r^t-q^t}{Kn} \textbf{1}_{Kn \times Kn} + \underbrace{\frac{q^t-p^t}{n}  \bsP^\top \bsP}_{\text{class-wise label averaging matrix}} + p^t \bsI_{Kn}.
\end{align*}

Finally, we obtain the closed form for $\bsy_i^{(t)}$:
\begin{equation}
\begin{split}
    \bsy_i^{(t)} &= \left(\bsY^{(0)} - \frac{1}{K} \textbf{1}_{K \times Kn} \right) \mPhi^{(t)}[:, i] + \frac{1}{K} \textbf{1}_K \\
    &= p^t \bse(\hat{y}_i) + (q^t - p^t) \left(\frac{1}{n}  \sum_{\{j:y_i=y_j\}}\bse(\hat{y}_j)\right) + (r^t - q^t)\left(\frac{1}{Kn}\sum_{j=1}^{Kn}\bse(\hat{y}_j)\right)  + (1-r^t)\frac{1}{K}\textbf{1}_{K} \\
    &= p^t \bse(\hat{y}_i) + (q^t - p^t) \left(\frac{1}{n}  \sum_{\{j:y_i=y_j\}}\bse(\hat{y}_j)\right)  + (1-q^t)\frac{1}{K}\textbf{1}_{K}. 
 \end{split}
\end{equation}

The last equality holds due to the balanced given labels, $\{\hat{y}_i\}$, across the $K$ classes. Note that the closed form for $\bsy_i^{(t)}$ consists of the following three components, across all the three cases considered so far:
\begin{itemize}[leftmargin=1em]
    \item $\bse(\hat{y}_i)$: the given label,
    \item $\frac{1}{n}  \sum_{\{j:y_j=y_i\}}\bse(\hat{y}_j)$: the average label vector among samples from the same true class $y_i$, 
    \item $\frac{1}{K} \textbf{1}_K$: the uniform vector.
\end{itemize}

\subsection{Case IV: Superclass Feature Correlation (Proof of Lemma \ref{lem:output_correlation})}\label{sec:case4}
Let us assume that there exists a set of superclasses, where the classes within the same superclass have higher feature correlation, while the classes from different superclasses have zero feature correlation. Let $h:[K] \rightarrow [R]$ be a function that map the class index to a superclass index. The model for the superclass feature correlation is defined as
\begin{equation}
    \omega(y_i, y_j) = \begin{cases}
        c, & y_i = y_j; \\
        d, & y_i \neq y_j, h(y_i)=h(y_j);\\
        0,   & h(y_i) \neq h(y_j)
    \end{cases}
\end{equation}
for $c>d\geq 0$.
Without loss of generality, we assume that the labels are sorted according to the superclass index, which implies that $h$ is a non-decreasing function. Then, the class-wise feature correlation matrix $\bsW\in\mathbb{R}^{K\times K}$ can be expressed as the sum of rank-$R$ block diagonal matrix and a diagonal matrix:
\begin{equation}
    \bsW = \underbrace{\begin{pmatrix}
        \begin{tikzpicture}
        \fill[blue!20] (0,1.01) rectangle (0.99,2); 
        \fill[blue!20] (1,0.34) rectangle (1.66,1);
        \fill[blue!20] (1.67,0) rectangle (2,0.33); 
    \end{tikzpicture}
    \end{pmatrix}}_{\text{block diagonal matrix}} + 
    \underbrace{\begin{pmatrix}
            \begin{tikzpicture}
                \draw[thick, black] (2, 0) -- (0, 2);
            \end{tikzpicture}
        \end{pmatrix}}_{\text{diagonal matrix}}
    =
   d \bsQ^\top \bsQ + (c-d)\bsI_{K},
\end{equation}
where $\bsQ \in \R^{R \times K}$ are defined as
\begin{align}\label{eqn:mQ}
\bsQ = [\bse(h(1)), \dots, \bse(h(R))] =  &\begin{pmatrix}
1 \cdots 1 & 0 \cdots 0 & \cdots &  0 \cdots 0 \\
0 \dots 0  & 1 \cdots 1 & \cdots &  0 \cdots 0\\
\vdots & \vdots & \ddots & \\
0 \cdots 0 & 0 \cdots 0 & \cdots & 1 \cdots 1
\end{pmatrix}.\\[-11pt]
&\hspace{1em} \underbrace{\hspace{2.4em}}_{K_1} \hspace{0.9em} \underbrace{\hspace{2.4em}}_{K_2} \hspace{3.1em} \underbrace{\hspace{2.4em}}_{K_R} \nonumber
\end{align}
with $K_i := |\{k \in [K]:h(k)=i\}|$ for $i\in[R]$, where $\sum_{i=1}^R K_i=K$.

Let us represent $\bsQ$ in \eqref{eqn:mQ} as $\bsQ=[\bsq_1,\dots,\bsq_R]^\top$ for $\bsq_i\in\mathbb{R}^{K}$. Then, $\bsW$ can be written as
\begin{equation*}
    \bsW = \sum_{i=1}^R K_id \left(\frac{1}{\sqrt{K_i}}\bsq_i\right) \left(\frac{1}{\sqrt{K_i}}\bsq_i\right)^\top + (c-d)\bsI_K.
\end{equation*}
It can be shown that $\bsW$ has two types of eigenvalue-eigenvector pairs $\{\sigma_i, \bsu_i\}_{i=1}^K$:
\begin{itemize}
    \item $R$ pairs with eigenvalues 
    \begin{equation*}
        \sigma_i = K_id + (c-d), \quad i=1, \dots, R,
    \end{equation*} 
    and the corresponding eigenvectors $\bsu_i = \frac{1}{\sqrt{K_i}}\bsq_i$.
    \item $(K-R)$ pairs with the eigenvalues
    \begin{equation*}
        \sigma_i = (c-d), \quad i=R+1, \dots, K,
    \end{equation*}
    and the corresponding eigenvectors $\bsu_i$. 
\end{itemize}

Then, $\mPhi$ can be expressed as follows:
\begin{equation*}
    \mPhi = \bsP^\top \bsW \bsP + (1-c)\bsI_{Kn} = \sum_{i=1}^K n\sigma_i \left(\frac{1}{\sqrt{n}}\bsP^\top \bsu_i\right)\left(\frac{1}{\sqrt{n}}\bsP^\top \bsu_i\right)^\top+ (1-c)\bsI_{Kn}
\end{equation*}
Thus, it can be shown that $\mPhi$ has three types of eigenvalue-eigenvector pairs:
\begin{itemize}
    \item $R$ pairs with eigenvalues
    \begin{equation*}
        \lambda_i =n\sigma_i +(1-c)= K_ind + n(c-d) + (1-c), \quad i=1, \dots, R,
    \end{equation*} 
    and the corresponding eigenvectors
    \begin{equation*}
        \bsv_i = \frac{1}{\sqrt{n}}\bsP^\top \bsu_i = \frac{1}{\sqrt{K_in}} \bsq_i;
    \end{equation*}
    \item $(K-R)$ pairs with eigenvalues 
    \begin{equation*}
        \lambda_i =n\sigma_i +(1-c)= n(c-d) + (1-c), \quad i=R+1, \dots, K,
    \end{equation*}
    and the corresponding eigenvectors 
    \begin{equation*}
        \bsv_i = \frac{1}{\sqrt{n}} \bsP^\top \bsu_{i};
    \end{equation*}
    \item $(Kn-K)$ pairs of eigenvalues 
    \begin{equation*}
        \lambda_i = (1-c), \quad i=K+1, \dots, Kn,
    \end{equation*}
    with eigenvectors $\bsv_{i}$.
\end{itemize}

Define $p$, $q$ and $\bsr= [r_1, \dots, r_R] \in \R^R$ as follows:
\begin{align*}
    p &=  \frac{1-c}{K^2 n \lambda + 1-c} \\
    q &= \frac{1-c + n(c-d)}{K^2 n \lambda + 1-c + n(c-d)} \\
    r_i &=  \frac{1-c + n(c-d) + K_i nd}{K^2 n \lambda + 1-c + n(c-d) + K_i nd}, \quad i\in[R].
\end{align*}

Then, the label averaging matrix $\mPhi^{(t)}$ is given as
\begin{align*}
    \mPhi^{(t)} &= \sum_{i=1}^{Kn} \left(\frac{\lambda_i}{K^2n\lambda + \lambda_i}\right)^t \bsv_i \bsv_i^\top  \\
    &= \sum_{i=1}^R \frac{r_i^t}{n}\bsP^\top \bsu_i \bsu_i^\top \bsP + \frac{q^t}{n} \sum_{i=R+1}^K \bsP^\top \bsu_i \bsu_i^\top \bsP + p^t \sum_{i=K+1}^{Kn} \bsv_i \bsv_i^\top \\
    &= \sum_{i=1}^R \frac{r_i^t}{n}\bsP^\top \bsu_i \bsu_i^\top \bsP + \frac{q^t}{n} \bsP^\top\left[\bsI_{K} - \sum_{i=1}^R \bsu_i \bsu_i^\top \right]\bsP + p^t \left[\bsI_{Kn} - \frac{1}{n}\sum_{i=1}^K \bsP^\top \bsu_i \bsu_i^\top \bsP \right] \\
    &= \sum_{i=1}^R \frac{r_i^t-q^t}{K_in}\bsP^\top \bsq_i \bsq_i^\top \bsP + \frac{q^t-p^t}{n} \bsP^\top \bsP + p^t \bsI_{Kn} \\    
    &= \underbrace{\bsP^\top \bsD\left(\left[\frac{r_i^t-q^t}{K_i n}\textbf{1}_{K_i \times K_i}\right]_{i=1}^R \right) \bsP}_{\text{superclass-wise  label averaging matrix}} + \underbrace{\frac{q^t-p^t}{n}  \bsP^\top \bsP}_{\text{class-wise label averaging matrix }} + p^t \bsI_{Kn}. 
\end{align*}

Finally, we arrive at the closed form of $\bsy_i^{(t)}$,
\begin{equation}\label{eqn:closed_case4}
    \bsy_i^{(t)} = p^t \bse(\hat{y}_i) + (q^t - p^t) \left(\frac{1}{n}  \sum_{\{j:y_i=y_j\}}\bse(\hat{y}_j)\right) + (r_s^t - q^t)\left(\frac{1}{K_s n}\sum_{\{j:h(y_j)=s\}}\bse(\hat{y}_j)\right)  + (1-r_s^t)\frac{1}{K}\textbf{1}_{K} 
\end{equation}
where $s=h(y_i)$ is the superclass of $y_i$. 
The closed form for $\bsy_i^{(t)}$ consists of the following four components:
\begin{itemize}[leftmargin=1em]
    \item $\bse(\hat{y}_i)$: the given label,
    \item $\frac{1}{n}  \sum_{\{j:y_j=y_i\}}\bse(\hat{y}_j)$: the average label vector among samples from the same true class $y_i$, 
    \item $\frac{1}{K_sn} \sum_{\{j:h(y_j)=s\}}\bse(\hat{y_j})$: the average label vector among samples from the same superclass class $s=h(y_i)$, 
    \item $\frac{1}{K} \textbf{1}_K$: the uniform vector.
\end{itemize}

\subsection{Case V: Extended Superclass Feature Correlation}\label{app:sec:further_gen}

As the last case, we allow a non-zero inter-superclass correlation, fixed as a constant $e$. The model for the class-wise feature correlation is defined as
\begin{equation}
    \omega(y_i, y_j) = \begin{cases}
        c, & y_i = y_j; \\
        d, & y_i \neq y_j, h(y_i)=h(y_j);\\
        e,   & h(y_i) \neq h(y_j)
    \end{cases}
\end{equation}
for $c>d\geq e\geq 0$. Under this model, the class-wise feature correlation matrix $\bsW$ is expressed as:
\begin{align}
    \bsW = e \textbf{1}_{K \times K} + (d-e)\bsQ^\top \bsQ + (c-d) \bsI_K,
\end{align}
and the Gram matrix $\mPhi$ is given by:
\begin{equation*}
    \mPhi = \bsP^\top \bsW \bsP + \bsD([1-[\bsW]_{i,i}]_{i=1}^K \bsP) = \bsP^\top \bsW \bsP + (1-c)\bsI_{Kn}
\end{equation*}
Then, the label averaging matrix of the $\mPhi^{(t)}$ can be formulated as:
\begin{align*}
    \mPhi^{(t)} &= \left(\bsI_{Kn} - \left(\bsI_{Kn} + \frac{1}{K^2n\lambda} \mPhi \right)^{-1}\right)^t \\
    &= \left(\bsI_{Kn} - \left(
    \frac{K^2n\lambda + 1-c}{K^2n\lambda} \bsI_{Kn} + \frac{1}{K^2n\lambda}\bsP^\top \bsW \bsP \right)^{-1}\right)^t
\end{align*}

Since $\bsP \bsP^\top = n\bsI_{Kn}$, the following matrix inversion holds for any $\bsU \in \R^{K \times K}$ and $v \in R$:
\begin{equation*}
    \left(v \bsI_{Kn} + \frac{1}{n} \bsP^\top \bsU \bsP \right)^{-1} = \frac{1}{v} \bsI_{Kn} + \frac{1}{n} \bsP^\top \left[\left(v\bsI + \bsU\right)^{-1} - \frac{1}{v} \bsI\right]\bsP. 
\end{equation*}
Thus, we have
\begin{equation*}
    \left(
    \frac{K^2n\lambda + 1-c}{K^2n\lambda} \bsI_{Kn} + \frac{1}{K^2n\lambda}\bsP^\top \bsW \bsP \right)^{-1} = \frac{K^2n\lambda}{K^2n\lambda + 1 - c} \bsI_{Kn} + \frac{1}{n}\bsP^\top \left[\bsV^{-1} -  \frac{K^2n\lambda}{K^2n\lambda + 1 - c} \bsI\right]\bsP
\end{equation*}
where $\bsV \in \R^{K \times K}$ is given by:
\begin{equation*}
    \bsV = \frac{K^2n\lambda + 1 - c}{K^2n\lambda} \bsI_K + \frac{\bsW}{K^2\lambda}.
\end{equation*}

Substituting back into $\mPhi^{(t)}$, we get:
\begin{equation*}
    \mPhi^{(t)} = \left(\left(1 - \frac{K^2n\lambda}{K^2n\lambda + 1 - c}\right) \bsI_{Kn} + \frac{1}{n}\bsP^\top \left[\frac{K^2n\lambda}{K^2n\lambda + 1 - c} \bsI -\bsV^{-1}\right]\bsP\right)^t,
\end{equation*}

\textit{CLAIM:} For any $t \ge 1$, $\bsU \in \R^{K \times K}$, and $v \in \R$, 
\begin{equation}
\left( \frac{1}{n} \bsP^\top \bsU \bsP + v\bsI_{Kn} \right)^t = \frac{1}{n} \bsP^\top [(\bsU+v\bsI_K)^t - v^t\bsI_K]\bsP + v^t\bsI_{Kn} .\label{eqn:power_t}
\end{equation}

\begin{proof} Since $\bsP \bsP^\top = n\bsI_{Kn}$, we have:
    \begin{align*}
        \left( \frac{1}{n} \bsP^\top \bsU \bsP + v\bsI_{Kn} \right)^t &= v^t \bsI_{Kn} + \sum_{i=1}^t {t \choose i} v^i \left(\frac{1}{n}\bsP^\top \bsU \bsP\right)^{t-i} \\
        &= v^t \bsI_{Kn} + \frac{1}{n}\sum_{i=1}^t {t \choose i} v^i \bsP^\top \bsU^{t-i} \bsP \\
        &= v^t\bsI_{Kn} + \frac{1}{n}\bsP^\top  \left[\sum_{i=1}^t {t \choose i} v^i \bsU^{t-i}\right] \bsP \\
        &= v^t\bsI_{Kn} + \frac{1}{n} \bsP^\top [(\bsU+v\bsI_K)^t - v^t\bsI_K]\bsP.
    \end{align*}
\end{proof}

From~\eqref{eqn:power_t}, we can express $\mPhi^{(t)}$ as
\begin{equation}
    \mPhi^{(t)} = \left(1 - \frac{K^2n\lambda}{K^2n\lambda + 1 - c}\right)^t \bsI_{Kn} + \frac{1}{n}\bsP^\top \left[(\bsI - \bsV^{-1})^t  - \left(1 - \frac{K^2n\lambda}{K^2n\lambda + 1 - c}\right)^t \bsI_K \right]\bsP.
\end{equation}

The main problem is to calculate the inverse of $\bsV$, which can be written as
\begin{equation}\label{eqn:inv_V}
    \bsV^{-1} = \left[\frac{K^2n\lambda + 1-c + n(c-d)}{K^2n\lambda} \bsI_{K} + \frac{e}{K^2\lambda} \textbf{1}_{K \times K} + \frac{d-e}{K^2\lambda} \bsQ^\top \bsQ\right]^{-1}
\end{equation}

By simple matrix calculations, we have
\begin{align*}
    \textbf{1}_{K \times K} \textbf{1}_{K \times K} &= K\textbf{1}_{K \times K} ; \\
    \textbf{1}_{K \times K} \bsQ^\top\bsQ &= \bsQ^\top\bsD([K_i]_{i=1}^R)\bsQ ;\\
    \bsQ^\top\bsQ \bsQ^\top\bsQ &= \bsQ^\top\bsD([K_i]_{i=1}^R)\bsQ. 
\end{align*}
Hence, the following matrix inversion holds true for the corresponding $\alpha>0$, $\beta$ and $\gamma$ of \eqref{eqn:inv_V}:
\begin{equation*}
    \left(\alpha \bsI_K + \beta \textbf{1}_{K \times K}  +\gamma \bsQ^\top\bsQ \right)^{-1}=
    \left(
    \frac{1}{\alpha} \bsI_K - \frac{\beta}{\alpha(\alpha + K\beta)}\textbf{1}_{K \times K} -  \bsQ^\top \bsD([\delta_i]_{i=1}^R)\bsQ
    \right),
\end{equation*}
where $\delta_i$ is given as
\begin{equation*}
    \delta_i = \frac{1}{\alpha + K_i\beta + K_i\gamma} \left(\frac{\gamma}{\alpha} - \frac{K_i\beta \gamma }{\alpha(\alpha + K\beta)}\right).
\end{equation*}

Define $p$, $q$, $s$ and $\bsr= [r_1, \dots, r_R] \in \R^R$ according to the following equations:
\begin{align*}
    p &= \frac{1-c}{K^2 n \lambda + 1-c}; \\
    q &= \frac{1-c + n(c-d)}{K^2 n \lambda + 1-c + n(c-d)}; \\
    r_i &= \frac{1-c + n(c-d) + K_ind}{K^2 n \lambda + 1-c + n(c-d) + K_ind}, \quad i \in [R]; \\
    s &= \frac{1-c + n(c-d) + Kne}{K^2 n \lambda + 1-c + n(c-d) + Kne}.
\end{align*}

Then, the coefficients for $\bsI_K, \textbf{1}_{K \times K}, \textbf{1}_{K_i \times K_i}$ of $\bsV^{-1}$ in \eqref{eqn:inv_V} can be written, respectively, as follows:
\begin{align*}
    \frac{1}{\alpha} &= 1-q, \\
    -\frac{\beta}{\alpha(\alpha + K\beta)} &= \frac{1}{K}\left(\frac{1}{\alpha+K\beta} - \frac{1}{\alpha}\right) = \frac{s-q}{K} ,
    \end{align*}
and

\begin{align*}
    \delta_i &= \frac{1}{\alpha + K_i\beta + K_i\gamma} \left(-\frac{\gamma}{\alpha} + \frac{K_i\beta \gamma }{\alpha(\alpha + K\beta)}\right) \\
    &= \frac{1}{\alpha + K_i\beta + K_i\gamma} \frac{\gamma(\alpha + K\beta - K_i\beta)}{\alpha(\alpha + K\beta)}
    \\
    &= \frac{\gamma}{K} \frac{1}{\alpha + K_i\beta + K_i\gamma} \left(\frac{K - K_i}{\alpha} + \frac{K_i}{\alpha + K\beta}\right)
    \\
    &= \frac{d-e}{K^3\lambda} \frac{K^2n\lambda }{K^2n\lambda + 1-c + n(c-d) + K_in d}\\
    &\quad\times\left(\frac{(K-K_i)K^2n\lambda}{K^2n\lambda + 1 - c + n(c-d)} + \frac{K_i(K^2n\lambda)}{K^2n\lambda + 1-c + n(c-d) + Kn e}\right) \\
    &= \frac{(K-K_i)(d-e)}{KK_i d} (r_i-q) + \frac{K_i(d-e)}{K(K_id - Ke)} (r_i-s).
\end{align*}

Substituting the expanded form of $\bsV^{-1}$, we get:
\begin{align*}
    \mPhi^{(t)} &= \bsp^t \bsI_{Kn} + \frac{1}{n}\bsP^\top \left[(\bsI - \bsV^{-1})^t - p^t \bsI_K\right]\bsP \\
    &= p^t \bsI_{Kn} + \frac{1}{n} \bsP^\top \left(\left(q\bsI_K + \frac{q-s}{K}\textbf{1}_{K \times K} + \bsQ^\top \bsD([\delta_i]_{i=1}^R) \bsQ\right)^t - p^t\bsI_{K}\right)\bsP.
\end{align*}
Also, for $j \in \Z^+$, the following holds:
\begin{align*}
    \Big(\frac{q-s}{K}\textbf{1}_{K \times K} + &\bsQ^\top \bsD([\delta_i \bsQ\textbf{1}_{K_i \times K_i}]_{i=1}^R \bsQ \Big)^j \\
    &= \frac{(q-s)^j}{K} \textbf{1}_{K \times K} + \sum_{k=1}^j {j \choose k}\left(\frac{q-s}{K}\textbf{1}_{K \times K}\right)^{j-k}\left(\bsQ^\top \bsD([\delta_i]_{i=1}^R) \bsQ\right)^k \\
    &= \frac{(q-s)^j}{K} \textbf{1}_{K \times K} + \sum_{k=1}^j {j \choose k}\left(\frac{q-s}{K}\bsQ^\top\bsQ\right)^{j-k}\left(\bsQ^\top \bsD([\delta_i]_{i=1}^R) \bsQ\right)^k \\
    &= \frac{(q-s)^j}{K} \textbf{1}_{K \times K} + \bsQ^\top \bsD\left(\left[K_i^{j-1} \left\{\left(\frac{q-s}{K}+\delta_i\right)^j - \left(\frac{q-s}{K}\right)^j \right\} \right]_{i=1}^R \right) \bsQ.
\end{align*}

Hence, we get
\begin{align*}
    \left(q\bsI_K + \frac{q-s}{K}\textbf{1}_{K \times K} + \bsQ^\top  \bsD([\delta_i]_{i=1}^R) \bsQ\right)^t  &= q^t \bsI_K + \sum_{j=1}^t {t \choose j} 
    q^{t-j}  \left(\frac{q-s}{K}\textbf{1}_{K \times K} + \bsD([\delta_i \textbf{1}_{K_i \times K_i}]_{i=1}^R\right)^j \\
    &= q^t \bsI_K + \nu(t) \textbf{1}_{K \times K} + \bsQ^\top\bsD([\mu_i^{(t)}/K_i]_{i=1}^R) \bsQ,
\end{align*}
where $\nu(t)$ and $\mu_i^{(t)}$ are defined as
\begin{align*}
    \nu(t) &= \left(q + \frac{q-s}{K}\right)^t - q^t;\\
    \mu_i^{(t)} &= \left\{q + K_i\left(\frac{q-s}{K}+\delta_i\right)\right\}^t - \left\{q + K_i\left(\frac{q-s}{K}\right)\right\}^t, \quad i \in [R].
\end{align*}

Finally, we get the closed-form of the label averaging matrix $\mPhi^{(t)}$ as
\begin{align*}
     \mPhi^{(t)} &=  p^t \bsI_{Kn} + \frac{1}{n} \bsP^\top \left(\left(q\bsI_K + \frac{q-s}{K}\textbf{1}_{K \times K} + \bsQ^\top \bsD([\delta_i]_{i=1}^R) \bsQ\right)^t - p^t\bsI_{K}\right)\bsP \\
     &= p^t \bsI_{Kn} + \frac{q^t-p^t}{n} \bsP^\top\bsP + \bsP^\top\bsQ^\top\bsD([\mu_i^{(t)}/K_i]_{i=1}^R) \bsQ \bsP + \nu(t) \textbf{1}_{Kn\times Kn}\\\
    &= \underbrace{\bsP^\top \bsD\left(\left[\frac{\mu_i^{(t)}}{K_i n}\textbf{1}_{K_i \times K_i}\right]_{i=1}^R \right) \bsP}_{\text{superclass-wise  label averaging matrix}} + \underbrace{\frac{q^t-p^t}{n}  \bsP^\top \bsP}_{\text{class-wise label averaging matrix }} + p^t \bsI_{Kn} + \frac{\nu(t)}{n} \textbf{1}_{Kn \times Kn}.
\end{align*}

Then, the closed-form of $\bsy_i^{(t)}$ for the extended feature map is given as follows:
\begin{equation}
    \bsy_i^{(t)} = p^t \bse(\hat{y}_i) + (q^t - p^t) \left(\frac{1}{n}  \sum_{\{j:y_i=y_j\}}\bse(\hat{y}_j)\right) + \frac{\mu_s^{(t)}}{K_s n} \left(\sum_{\{j:h(y_j)=s\}}\bse(\hat{y}_j)\right)  + (1-\mu_s^{(t)}-q^t)\frac{1}{K}\textbf{1}_{K} \\
\end{equation}
where $h(y_i)=s$.
Note that when $e=0$, $\nu(t)$ and $\mu_s^{(t)}$ are given as
\begin{equation*}
    \nu(t) = 0, \quad \mu_s^{(t)}= r_s^t - q^t,
\end{equation*}
which recovers the result in \eqref{eqn:closed_case4}.
\section{Proof of Theorem~\ref{thm:self}, Theorem~\ref{thm:pll} and Corollary~\ref{cor:full}} \label{app:proof_thm}

In this section, we present the complete proofs of Theorem~\ref{thm:self} and~\ref{thm:pll}, and Corollary~\ref{cor:full}. As in Sec.~\ref{sec:sd_noise}, we define the corruption matrix $\bsC \in \R^{K \times K}$ by
\begin{equation*}
    [\bsC]_{k,k'}:=\frac{|\{i: y_i=k, \hat{y}_i=k'\}|}{n}.
\end{equation*}
Then, the mean of the given labels for samples that share the same true label, denoted as $\frac{1}{n}\sum_{\{j:y_i=y_j\}} \bse(\hat{y}_j)$, corresponds to the $y_i$-th row of the matrix $\bsC$, expressed as:
\begin{align*}
    \frac{1}{n}\sum_{\{j:y_i=y_j\}} \bse(\hat{y}_j)  = \bsC[y_i, :]^\top.
\end{align*}

Here, we aim to find the conditions for the prediction vector $\bsy_i^{(t)}$ to have the maximum value at the true label position $y_i$, indicating a correct prediction. To achieve this, we use the closed-form solution \eqref{eqn:self_t_output} of  $\bsy_i^{(t)}$, presented in Lemma~\ref{lem:output_correlation}. 

\subsection{Proof of Theorem~\ref{thm:self}}
According to Lemma~\ref{lem:output_correlation}, the closed-form outputs of the $t$-th distilled model can be formulated as
\begin{equation*}
    \bsy_i^{(t)}=p^t \bse(\hat{y}_i) + (q^t - p^t) \Big(\frac{1}{n}  \sum_{\{j:y_j=y_i\}}\bse(\hat{y}_j)\Big) + (r_s^t-q^t)\Big(\frac{1}{K_s n}  \sum_{\{j:h(y_j)=s\}}\bse(\hat{y}_j)\Big) + \frac{(1-r_s^t)}{K}\textbf{1}_{K},
\end{equation*}
where $s=h(y_i)$ is the superclass of $y_i$, $K_s$ is the size of the superclass $s$, and $p$, $q$, and $r_s$ are defined as in~\eqref{eqn:para_pq}. Also, we assume that the label error happens only within each superclass, which implies that $[\bsC]_{k,k'}=0$ if $h(k) \neq h(k')$. Remind that we assume that the dataset is balanced with respect to both the ground-truth labels and given labels, i.e., $|\{i: y_i=k\}|=|\{i: \hat{y}_i=k\}|=n$ for $\forall k\in[K]$.  Then, we can simplify the third term, the mean of superclass labels, using the indicator function, denoted as $\lone$, as follows:
\begin{equation}\label{eqn:superclass_unif}
    (r_s^t-q^t)\Big(\frac{1}{K_s n}  \sum_{\{j:h(y_j)=s\}}\bse(\hat{y}_j)\Big) = \frac{r_s^t-q^t}{K_s} [\lone(h(1)=s), \dots, \lone(h(K)=s)],
\end{equation}
which means that third term is uniform inside the superclass, while zero outside the superclass. Now, we will analyze the prediction of clean samples and label-noise samples.

For the clean samples, i.e., $\hat{y}_i = y_i$, the prediction of the $t$-th distilled model for the $i$-th sample can be written as
\begin{equation*}
    [\bsy_i^{(t)}]_k = \begin{cases} 
        p^t + (q^t - p^t)[\bsC]_{y_i, y_i} + (r_s^t-q^t)/K_s + (1-r_s^t)/K, & k = y_i; \\
        (q^t - p^t)[\bsC]_{y_i, k} + (r_s^t-q^t)/K_s + (1-r_s^t)/K,  & k \neq y_i, h(k) = h(y_i); \\
       (1-r_s^t)/K.  & h(k) \neq h(y_i).
    \end{cases}
\end{equation*}

Then, the condition for the correct classification of the clean sample $i$ is given by
\begin{equation}
   [\bsC]_{y_i, y_i} > [\bsC]_{y_i, k} - \frac{1}{(q/p)^t - 1}, \quad \forall k \neq y_i. \label{cond:cond1}
\end{equation}

For the noisy samples such that $\hat{y}_i \neq y_i$, the prediction of the $t$-th distilled model can be written as
\begin{equation*}
    [\bsy_i^{(t)}]_k = \begin{cases} 
        (q^t - p^t)[\bsC]_{y_i, y_i} + (r_s^t-q^t)/K_s + (1-r_s^t)/K, & k = y_i; \\
        p^t + (q^t - p^t)[\bsC]_{y_i, \hat{y}_i} + (r_s^t-q^t)/K_s + (1-r_s^t)/K, & k = \hat{y}_i; \\
        (q^t - p^t)[\bsC]_{y_i, k} + (r_s^t-q^t)/K_s + (1-r_s^t)/K,  & k \neq y_i, \hat{y}_i, h(k) = h(y_i); \\
        (1-r_s^t)/K,  & h(k) \neq h(y_i).
    \end{cases}
\end{equation*}

Thus, the condition for the correct classification of the noisy sample $i$ is given as
\begin{gather}
    [\bsC]_{y_i, y_i} > [\bsC]_{y_i, \hat{y}_i} + \frac{1}{(q/p)^t - 1}\quad\text{and} \label{cond:cond2} \\
    [\bsC]_{y_i, y_i} > [\bsC]_{y_i, k}, \quad \forall k \neq y_i, \hat{y}_i \label{cond:cond3}
\end{gather}

Note that~$\eqref{cond:cond1}$ is the easiest condition, and~$\eqref{cond:cond2}$ is the most stringent condition. Thus, if 
\beq\label{eqn:cond_st_100}
[\bsC]_{k,k}>[\bsC]_{k,k'}+{1}/\left({(q/p)^t-1}\right),\quad \forall k,k'\in[K] \text{ with }k\neq k'
\eeq
then the $t$-th student model can correctly classify both the noisy samples and clean samples. 
On the other hand, if 
\beq
[\bsC]_{k,k}<[\bsC]_{k,k'}+{1}/\left({(q/p)^{t-1}-1}\right),
\eeq
then the $(t-1)$-th distilled model fails to correctly classify label-noisy samples from class $k$ that have been mislabelled as $\hat{y}_i=k'$, and thus fails to achieve 100\% population accuracy.

 To find the regimes where the student model outperforms the teacher model, we need to find $t\in \mathbb{Z}^+$ that satisfies
 \begin{equation}
   [\bsC]_{k,k'} + \frac{1}{(q/p)^{t-1} - 1}  > [\bsC]_{k,k} > [\bsC]_{k,k'} + \frac{1}{(q/p)^t - 1} ,
\end{equation}
In such a $t$, the student model correctly classifies the samples whose true labels are $k$ and the given labels are $k'$, but the teacher model makes mistakes. 

\subsection{Proof of Theorem~\ref{thm:pll}}\label{app:PLL_analysis}

The closed form solution for the output of the PLL student model for the $i$-th training instance is given by
\begin{equation}
    \bsy_i^{(P)}=p \bar{\bsy}_i + (q - p) \left(\frac{1}{n}  \sum_{\{j:y_j=y_i\}}\bar{\bsy}_j\right) + (r_s- q)\left(\frac{1}{K_s n}\sum_{\{j:h(y_j)=s\}}\bar{\bsy}_j \right)  + (1-r_s)\frac{\textbf{1}_{K}}{K},
\end{equation}
for $s=h(i)$.
This can be derived from Lemma~\ref{lem:output_correlation}, by plugging in the refined teacher's output $\bar{\bsy}_i$ at the position of $ \bse(\hat{y}_i)$ and considering the one more step of distillation $t=1$.
Here, $\bar{\bsy}_i$ is the two-hot vector with weights $1/2$, at the positions of top two labels with the highest values in the teacher's predictions. 

Suppose that $[\bsC]_{y_i, y_i} > [\bsC]_{y_i, k}$ holds for all $k \neq y_i$. From the conditions~\eqref{cond:cond1} and ~\eqref{cond:cond3}, the true label is always included in the top-two candidate set. Let us define $\tilde{y}_i$ as
\begin{equation}
    \tilde{y}_i = \argmax_{k \neq y_i} [\bsC]_{y_i, k}.
\end{equation}
Note that $\tilde{y}_i$ is always in the superclass of $h(y_i)$. Then, the equation for $\bar{\bsy}_i$ can be expressed as: 
\begin{equation}
    \bar{\bsy}_i = \frac{1}{2} \bse(y_i) + \frac{1}{2} \bse(\tilde{y}_i)
\end{equation}
for the clean sample, and 
\begin{equation}
    \bar{\bsy}_i = \frac{1}{2} \bse(y_i) + \frac{1}{2} \bse(\hat{y}_i)
\end{equation}
for the noisy sample. Then, $\frac{1}{n}\sum_{\{j:y_j=y_i\}} \bar{\bsy}_j$ and $\frac{1}{K_sn}\sum_{\{j:h(y_j)=s\}} \bar{\bsy}_j$ are given as
\begin{align*}
     \frac{1}{n}\sum_{\{j:y_j=y_i\}} \bar{\bsy}_j &= \frac{1}{2} \bse(y_i) + \frac{1}{2} [\bsC]_{y_i, y_i} \bse(\tilde{y}_i) + \frac{1}{2}\sum_{j \neq y_i} [\bsC]_{y_i, j}  \bse(j),
\end{align*}
and
\begin{align*}
    \frac{1}{K_sn}\sum_{\{j:h(y_j)=s\}} \bar{\bsy}_j =\frac{1}{K_s}[\lone(h(1)=s), \dots, \lone(h(K)=s)],
\end{align*}
since the true labels and the given labels are balanced. Hence, for the clean sample, the prediction of the PLL student model for the $i$-th sample can be written as
\begin{equation*}
    [\bsy_i^{(P)}]_k = \begin{cases} 
        \frac{p}{2} + \frac{q-p}{2} + \frac{r_s - q}{K_s} + \frac{1-r_s}{K}, & k = y_i; \\
        \frac{p}{2} + \frac{q-p}{2}([\bsC]_{y_i, y_i} +[\bsC]_{y_i, \tilde{y}_i}) + \frac{r_s - q}{K_s} + \frac{1-r_s}{K}, & k = \tilde{y}_i; \\
        \frac{q-p}{2}[\bsC]_{y_i, k} + \frac{r_s-q}{K_s} + \frac{1-r_s}{K} & k \neq y_i, \tilde{y}_i, h(k) = h(y_i); \\
         \frac{1-r_s}{K}, & h(k) \neq h(y_i).
    \end{cases}
\end{equation*}

Here, $[\bsy_i^{(P)}]_k$ has the maximum value at $k=y_i$, because $[\bsC]_{y_i, y_i} +[\bsC]_{y_i, \tilde{y}_i}$ is less than or equal to 1. 

For the noisy sample, when $\hat{y}_i = \tilde{y}_i$, we have
\begin{equation*}
    [\bsy_i^{(P)}]_k = \begin{cases} 
        \frac{p}{2} + \frac{q-p}{2} + \frac{r_s - q}{K_s} + \frac{1-r_s}{K}, & k = y_i; \\
        \frac{p}{2} + \frac{q-p}{2}([\bsC]_{y_i, y_i} +[\bsC]_{y_i, \tilde{y}_i}) + \frac{r_s - q}{K_s} + \frac{1-r_s}{K}, & k = \hat{y}_i; \\
        \frac{q-p}{2}[\bsC]_{y_i, k} + \frac{r_s-q}{K_s} + \frac{1-r_s}{K} & k \neq y_i, \hat{y}_i, h(k) = h(y_i); \\
         \frac{1-r_s}{K}, & h(k) \neq h(y_i).
    \end{cases}
\end{equation*}

On the other hand, when $\hat{y}_i \neq \tilde{y}_i$, we have
\begin{equation*}
    [\bsy_i^{(P)}]_k = \begin{cases} 
        \frac{p}{2} + \frac{q-p}{2} + \frac{r_s - q}{K_s} + \frac{1-r_s}{K}, & k = y_i; \\
        \frac{p}{2} + \frac{q-p}{2}[\bsC]_{y_i, \hat{y}_i} + \frac{r_s - q}{K_s} + \frac{1-r_s}{K}, & k = \hat{y}_i; \\
        \frac{q-p}{2}([\bsC]_{y_i, y_i} +[\bsC]_{y_i, \tilde{y}_i}) + \frac{r_s - q}{K_s} + \frac{1-r_s}{K}, & k = \tilde{y}_i; \\
        \frac{q-p}{2}[\bsC]_{y_i, k} + \frac{r_s-q}{K_s} + \frac{1-r_s}{K} & k \neq y_i, \hat{y}_i, \tilde{y}_i, h(k) = h(y_i); \\
         \frac{1-r_s}{K}, & h(k) \neq h(y_i).
    \end{cases}
\end{equation*}

Therefore, the prediction of the PLL student model has the highest probability at the true label $y_i$ for all noisy samples as well. 

Throughout all the cases, it can be concluded that the PLL model consistently achieves 100\% population accuracy when the PLL candidate sets of size two always contain the true label. Note that the condition for this situation is given by \eqref{eqn:cond_pll_100}. 

To summarize, the $t$-th distilled model attains 100\% population accuracy when 
    \begin{equation*}
[\bsC]_{k,k}>[\bsC]_{k,k'}+{1}/\left({(q/p)^t-1}\right),\quad \forall k,k'\in[K] \text{ with }k\neq k'
    \end{equation*}
holds. For the PLL student model, the condition for achieving 100\% population accuracy is given by
    \begin{equation*}
[\bsC]_{k,k}>[\bsC]_{k,k'},\quad \forall k,k'\in[K] \text{ with }k\neq k'
    \end{equation*}
which implies that the PLL student model outperforms any $t$-th distilled model in the population accuracy.

\subsection{Proof of Corollary~\ref{cor:full}}\label{app:proof_cor}

Suppose that the class-wise feature correlation at the $t$-distillation round is given by

\begin{equation}\label{eqn:extended_cd}
  [\mPhi'(t)]_{i,j}:=   \inp{\phi(\bsx_i), \phi(\bsx_j)} =
\begin{cases}
1, & i = j; \\
c^{(t)}, & y_i=y_j, i \neq j;\\
d^{(t)}, & y_i\neq y_j, h(y_i)=h(y_j);\\
0,   & h(y_i) \neq h(y_j).
\end{cases}
\end{equation}

Then, following the similar analysis as in Sec. \ref{sec:case4}, the label averaging matrix $\mPhi^{(t)}:=\left[\bsI_{Kn}-\left(\bsI_{Kn}+\mPhi'(t)/{(K^2n\lambda)}\right)^{-1}\right]$ at the $t$-th distillation round can be expressed as follows:
\begin{equation}
     \mPhi^{(t)}=\frac{1}{n}\bsP^\top \left[(q^{(t)}-p^{(t)})\bsI_K + \bsD\left(\left[\frac{r_s^{(t)}-q^{(t)}}{K_s}\textbf{1}_{K_s \times K_s}\right]_{s=1}^R \right)\right]\bsP + p^{(t)}\bsI_{Kn},
\end{equation}
where $p^{(t)}, q^{(t)}, r_s^{(t)}$ are defined as
\begin{align*}
    p^{(t)} &=  \frac{ 1-c^{(t)}}{K^2 n \lambda + 1-c^{(t)}}; \\
    q^{(t)} &=  \frac{1-c^{(t)} + n(c^{(t)}-d^{(t)})}{K^2 n \lambda + 1-c^{(t)} + n(c^{(t)}-d^{(t)})}; \\
    r_s^{(t)} &=  \frac{1-c^{(t)} + n(c^{(t)}-d^{(t)}) + K_s nd^{(t)}}{K^2 n \lambda + 1-c^{(t)} + n(c^{(t)}-d^{(t)}) + K_s nd^{(t)}}.
\end{align*}

For simplicity in notation, define $\mathcal{D}:\R^R \rightarrow \R^{K \times K}$ as
\begin{equation}
    \mathcal{D}(\bsu) = \bsD\left(\left[\frac{u_s}{K_s}\textbf{1}_{K_s \times K_s}\right]_{s=1}^R  \right). 
\end{equation}

\textit{CLAIM}. For any $t \ge 1$,
\begin{equation*}
    \prod_{i=1}^{t} \mPhi^{(i)} =  \frac{1}{n}\bsP^\top \left[(\prod_{i=1}^t q^{(i)} - \prod_{i=1}^t p^{(i)})\bsI_K + \mathcal{D}\left(\left[\prod_{i=1}^t r_s^{(i)} -\prod_{i=1}^t q^{(i)}\right]_{s=1}^R \right)\right]\bsP + \prod_{i=1}^t p^{(i)} \bsI_{Kn},
\end{equation*}
\begin{proof}

    When $t=1$, the above claim holds trivially. Next, suppose our claim holds true for $t=k$.

    Then, we have
    \begin{equation*}
         \prod_{i=1}^{k} \mPhi^{(i)} =  \frac{1}{n}\bsP^\top \left[
         (\prod_{i=1}^k q^{(i)} - \prod_{i=1}^k p^{(i)})\bsI_K
         + 
         \mathcal{D}\left(\left[
         \prod_{i=1}^k r_s^{(i)} -\prod_{i=1}^k q^{(i)}\right]_{s=1}^R
         \right)\right]\bsP
         + 
         \prod_{i=1}^k p^{(i)} \bsI_{Kn}
    \end{equation*}
    and
    \begin{equation*}
          \mPhi^{(k+1)} =  \frac{1}{n}\bsP^\top \left[
          (q^{(k+1)} - p^{(k+1)})\bsI_K
          +
          \mathcal{D}\left([r_s^{(k+1)} -q^{(k+1)}]_{s=1}^R \right)
          \right]\bsP
          + 
          p^{(k+1)} \bsI_{Kn}.
    \end{equation*}
    Since $\bsP \bsP^\top = n\bsI_K$ and $\mathcal{D}(\bsu)\mathcal{D}(\bsv) = \mathcal{D}(\bsu \circ \bsv)$, we have
    \begin{equation*}
        \prod_{i=1}^{k+1} \mPhi^{(i)} = \frac{1}{n}\bsP^\top\left[a\bsI_K + \mathcal{D}([b_i]_{i=1}^R) \right] \bsP + \prod_{i=1}^{k+1} p^{(i)}\bsI_{Kn},
    \end{equation*}
    for some $a$ and $b_i$. Then, $a$ and $b_i$ can be computed as following:
    \begin{align*}
        a &= (\prod_{i=1}^k q^{(i)} - \prod_{i=1}^k p^{(i)})(q^{(k+1)} - p^{(k+1)})
        + 
        p^{(k+1)}(\prod_{i=1}^k q^{(i)} - \prod_{i=1}^k p^{(i)})
        + 
        \prod_{i=1}^k p^{(i)}(q^{(k+1)} - p^{(k+1)}) \\
        &= \prod_{i=1}^{k+1} q^{(i)} - \prod_{i=1}^{k+1} p^{(i)},
    \end{align*}
    and
    \begin{align*}
        b_i &= (r_s^{(k+1)}-q^{(k+1)})\left(\prod_{i=1}^k r_s^{(i)} -\prod_{i=1}^k q^{(i)}\right)
        + (q^{(k+1)}-p^{(k+1)})\left(\prod_{i=1}^k r_s^{(i)} -\prod_{i=1}^k q^{(i)}\right)
        \\
        &\quad + 
        (r_s^{(k+1)}-q^{(k+1)})\left(\prod_{i=1}^k q^{(i)} -\prod_{i=1}^k p^{(i)}\right) 
        + 
         \quad + p^{(k+1)}\left(\prod_{i=1}^k r_s^{(i)} -\prod_{i=1}^k q^{(i)}\right)
        \\
        &\quad \quad + (r_s^{(k+1)} - q^{(k+1)})\prod_{i=1}^{k}p^{(i)} \\
        &= \prod_{i=1}^{k+1} r_s^{(i)} - \prod_{i=1}^{k+1} q^{(i)}.
    \end{align*}

    Thus, by the mathematical induction, our claim holds true for all $t \ge 1$. 
\end{proof}

From our claim, we can derive a closed-form solution of $\bsy_i^{(t)}$ in the evolving feature map~\eqref{eqn:extended_cd} as follows:
\begin{equation}
\begin{split}
    \bsy_i^{(t)} &= \prod_{i=1}^t p^{(i)} \bse(\hat{y}_i) + \left(\prod_{i=1}^t q^{(i)} - \prod_{i=1}^t p^{(i)}\right) \left(\frac{1}{n}  \sum_{\{j:y_i=y_j\}}\bse(\hat{y}_j)\right) \\
    &\quad \quad + \left(\prod_{i=1}^t r_s^{(i)} - \prod_{i=1}^t q^{(i)}\right)\left(\frac{1}{K_s n}\sum_{\{j:h(y_j)=s\}}\bse(\hat{y}_j)\right)  + \left(1-\prod_{i=1}^t r_s^{(i)}\right)\frac{1}{K}\textbf{1}_{K} 
\end{split}
\end{equation}

The only difference from the original equation is that $p^t$, $q^t$, and $r_s^t$ have changed to $\prod_{i=1}^t p^{(i)}$, $\prod_{i=1}^t q^{(i)}$, and $\prod_{i=1}^t r_s^{(i)}$, respectively. Hence, Corollary~\ref{cor:full} holds.

\newpage
\section{Additional Experimental Results}

\subsection{Ablation Study: Varying Weight-Decay Value $\lambda$}\label{app:emp_result}
We perform an ablation study for $\lambda$ values that make $q/p$ near 2 as explained in Appendix  ~\ref{app:choose_lbd}. For these $\lambda$ values, we observe that there is a consistent accuracy gain as the distillation round $t$ increases. Also, we observe that our PLL student model outperforms the multi-round SD models.
\begin{figure}[!th]
    \centering
    \subfloat[Superclass label corruption]{\includegraphics[width=0.7\textwidth]{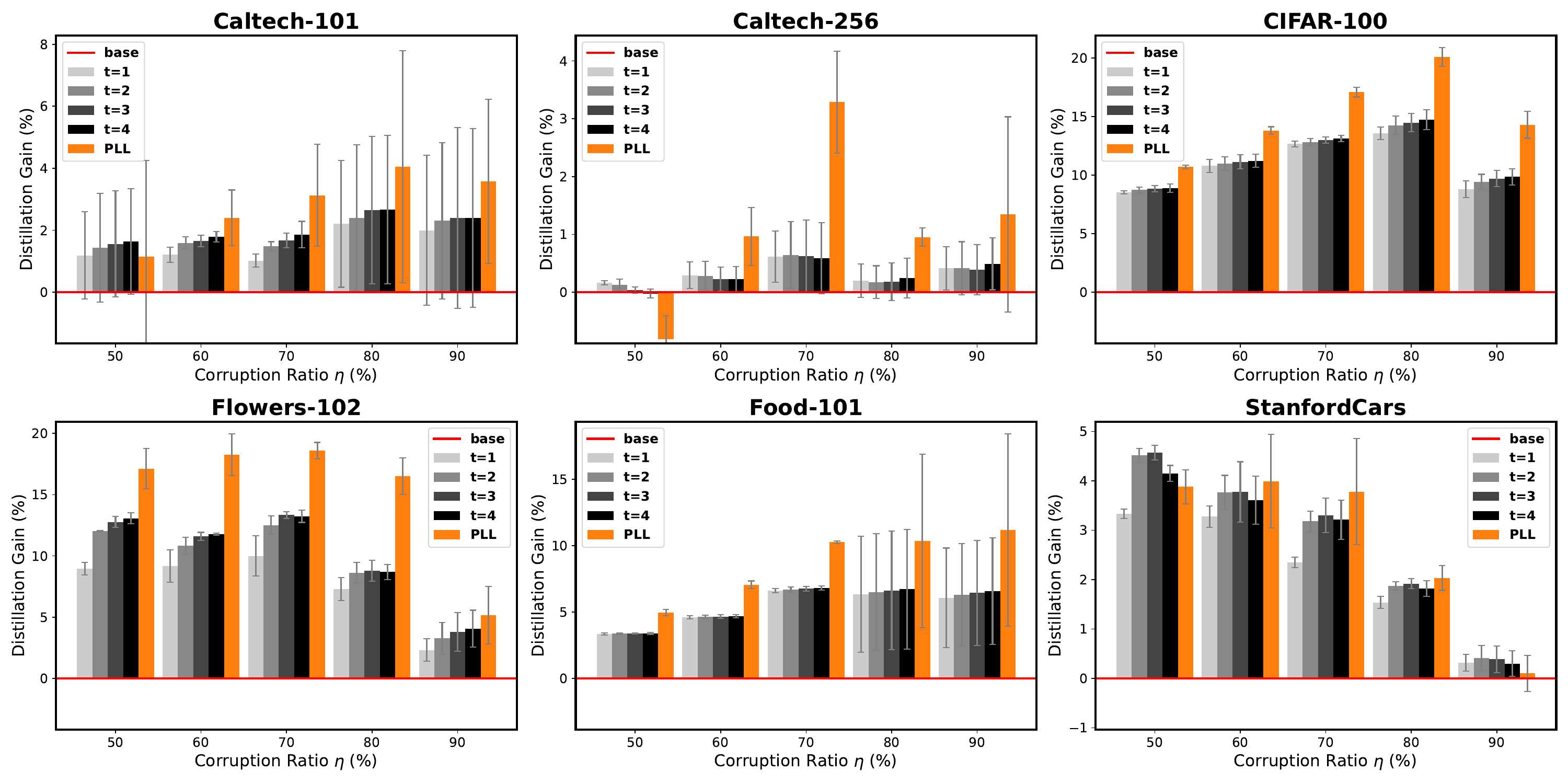}\label{fig:superclass_small}}
    \\
    \subfloat[Symmetric label corruption]{\includegraphics[width=0.7\textwidth]{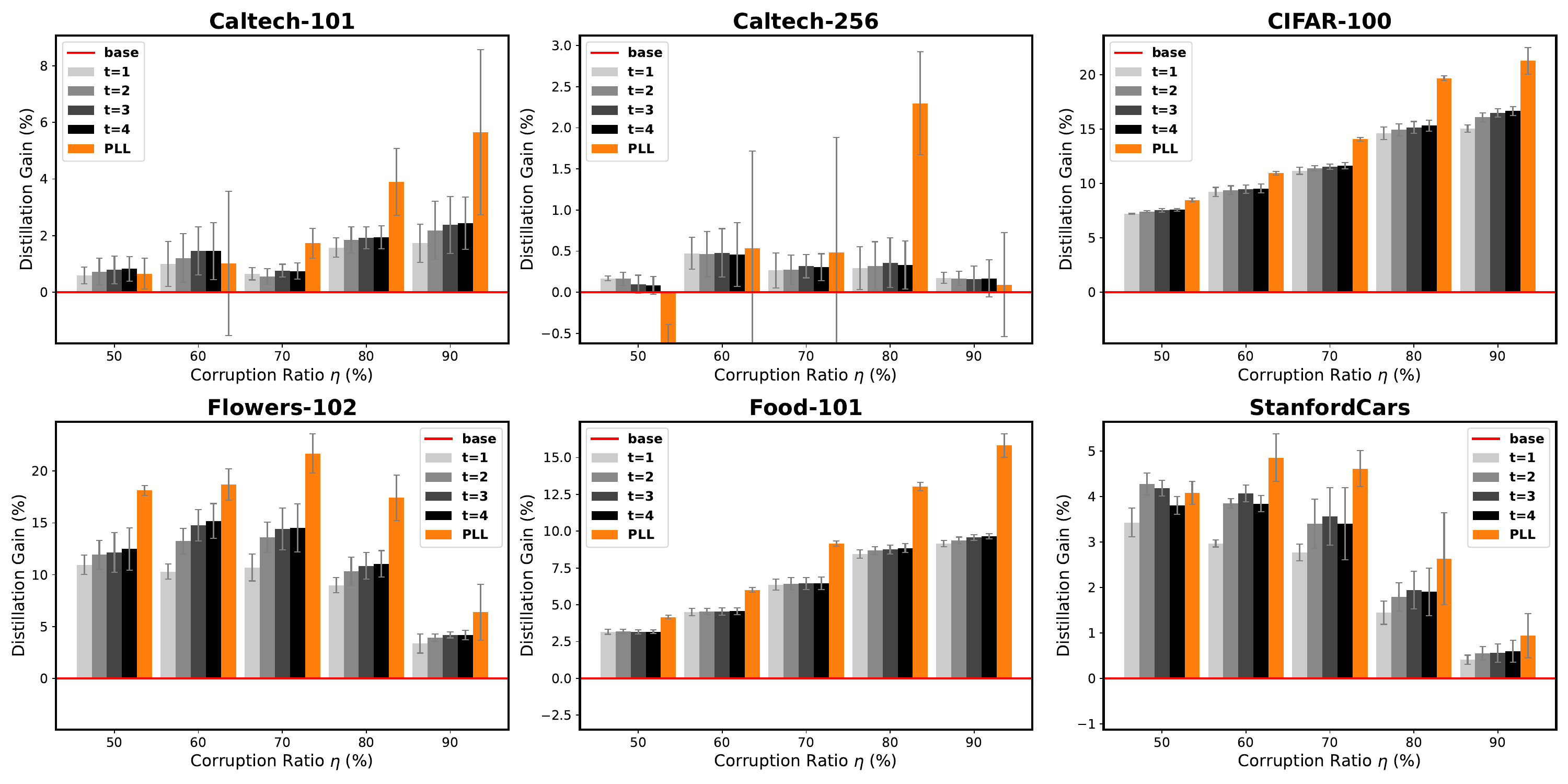}\label{fig:sym_small}}
    \\
     \subfloat[Asymmetric label corruption]{\includegraphics[width=0.7\textwidth]{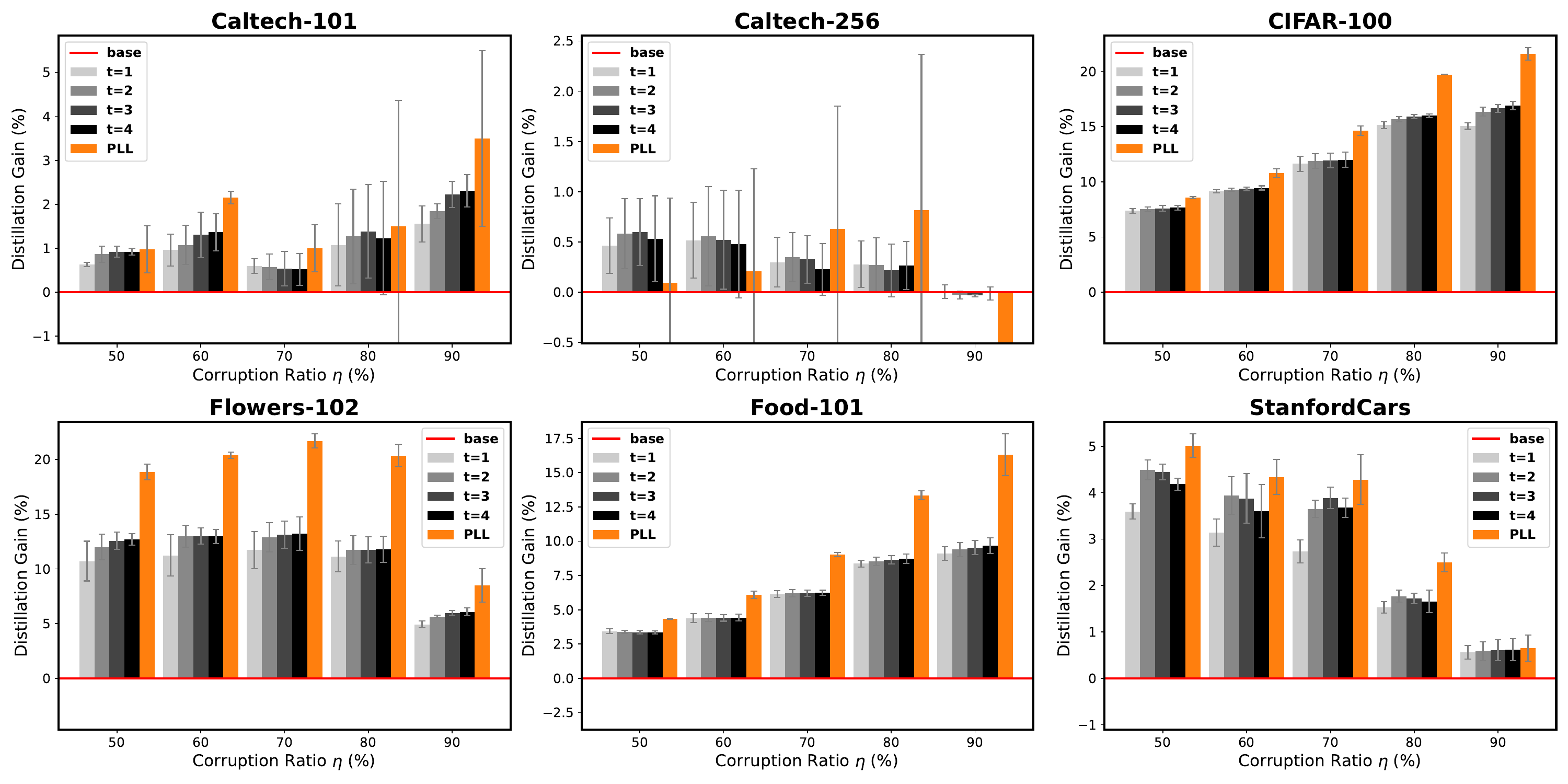}\label{fig:asym_small}}
     
    \caption{Accuracy gap for each student model compared to the teacher model in three different label corruption scenarios. {\color{lightgray}P}{\color{mediumgray}L}{\color{darkgray}A}{\color{darkdarkgray}I}N represents the gap between multi-round SD students and the teacher model, while {\color{orange}ORANGE} represents the gap between PLL students and the teacher model. We use weight decay value $\lambda=2 \times 10^{-4}$ for Caltech-101, Flowers-102, and StanfordCars datasets, and $\lambda=2 \times 10^{-5}$ for CIFAR-100, Caltech-256, and Food-101 datasets.}
\end{figure}

\begin{figure}[!th]
    \centering
    \subfloat[Superclass label corruption]{\includegraphics[width=0.7\textwidth]{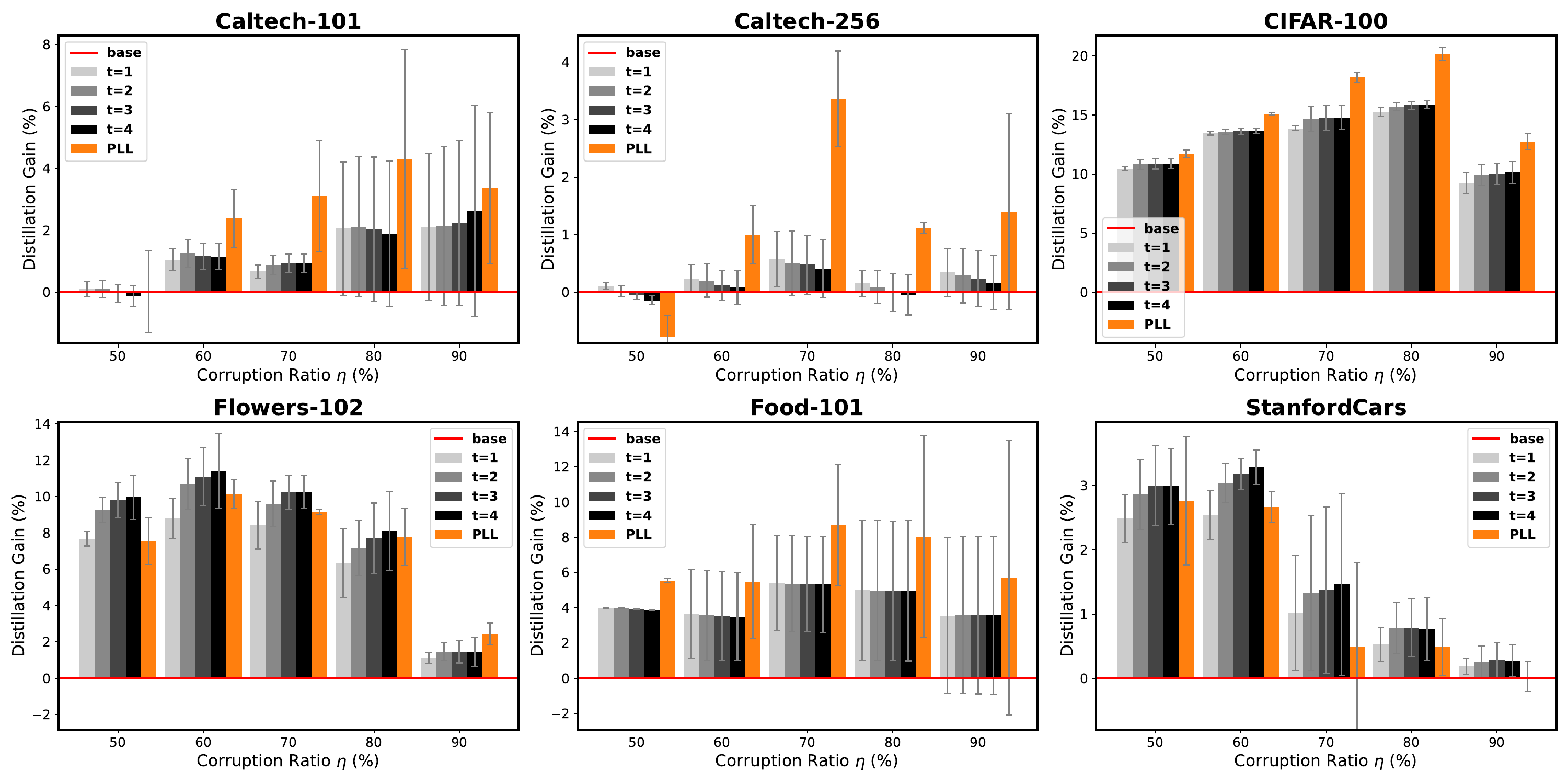}\label{fig:superclass_big}}
    \\
    
    \subfloat[Symmetric label corruption]{\includegraphics[width=0.7\textwidth]{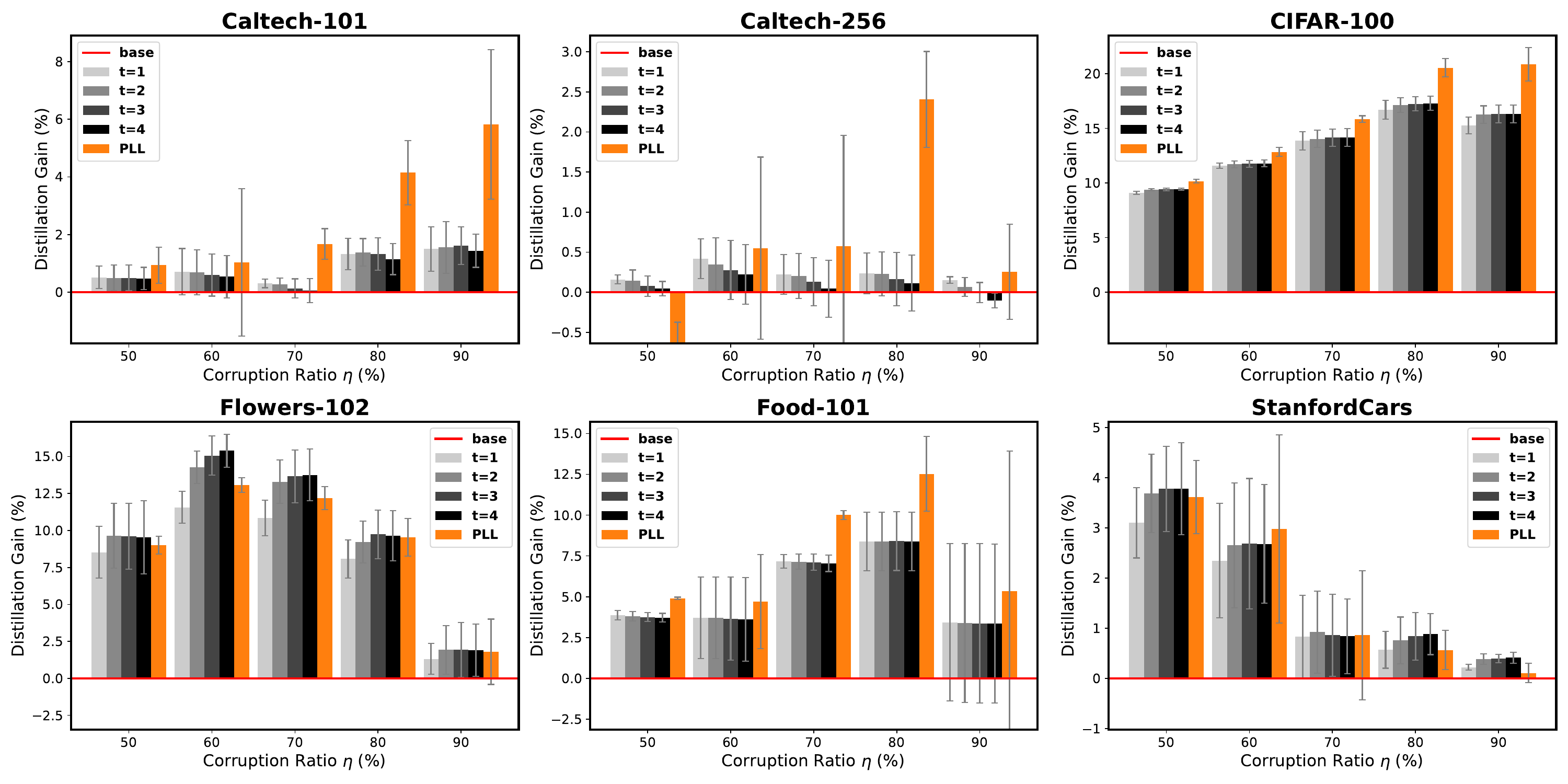}\label{fig:sym_big}}
    \\
     \subfloat[Asymmetric label corruption]{\includegraphics[width=0.7\textwidth]{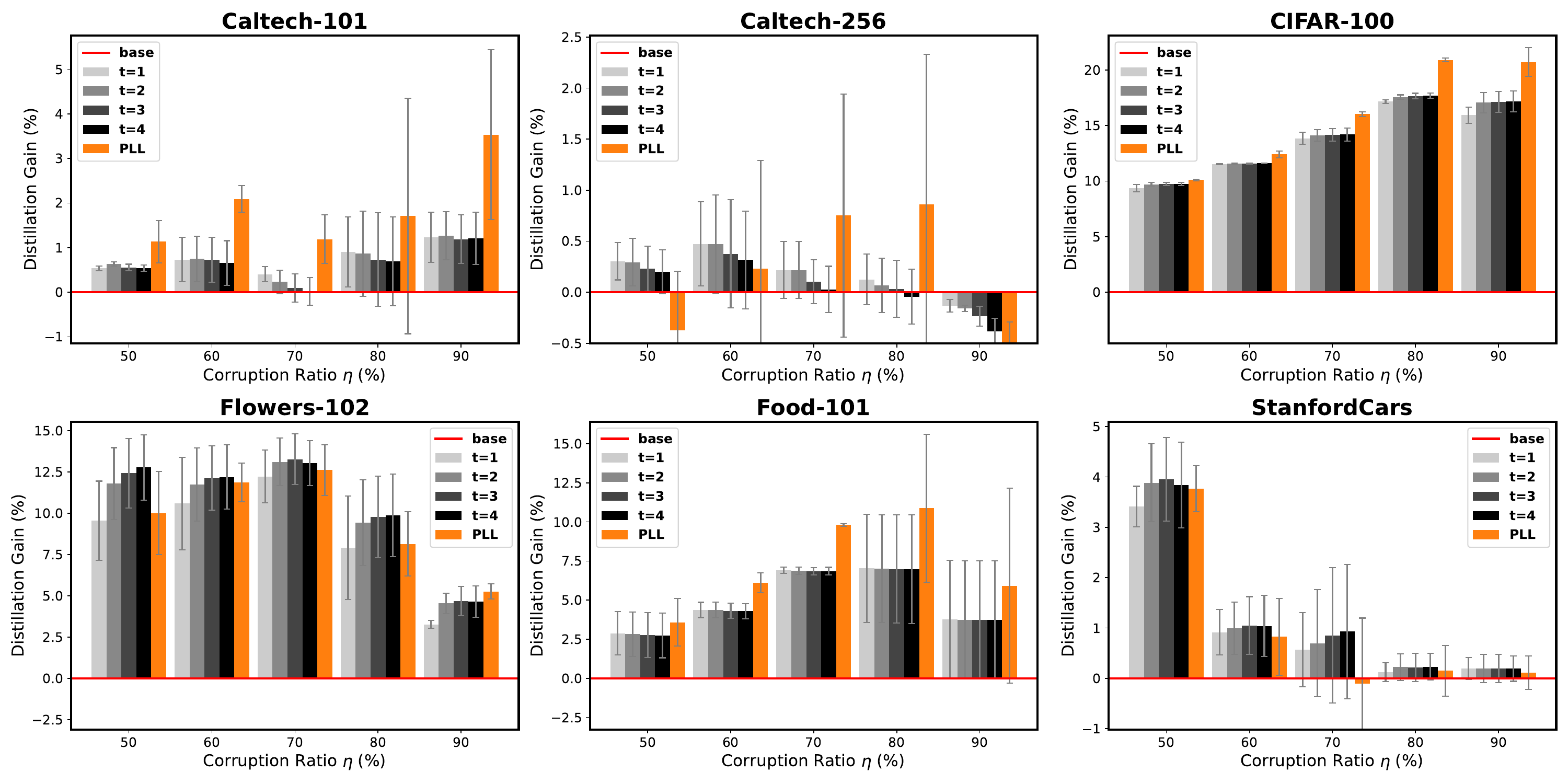}\label{fig:asym_big}}
    
    \caption{Accuracy gap for each student model compared to the teacher model in a three label corruption scenarios. {\color{lightgray}P}{\color{mediumgray}L}{\color{darkgray}A}{\color{darkdarkgray}I}N represents the gap between multi-round SD students and the teacher model, while {\color{orange}ORANGE} represents the gap between PLL students and the teacher model. We use weight decay value $\lambda=1 \times 10^{-3}$ for Caltech-101, Flowers-102, and StanfordCars datasets, and $\lambda=1 \times 10^{-4}$ for CIFAR-100, Caltech-256, and Food-101 datasets.}
    \vspace{50pt}
\end{figure}

\newpage

\subsection{Hierarchical Clustering Results}\label{app:hier_clst}
We conduct hierarchical clustering to estimate the super-class on six different real datasets. The figures below show the hierarchical clustering results of each dataset. Clustering is performed to place similar classes adjacent to each other.

\begin{figure}[!ht]
    \centering
    \rotatebox{90}{%
        \begin{minipage}{0.85\textheight}
            \centering
            \includegraphics[width=\textwidth]{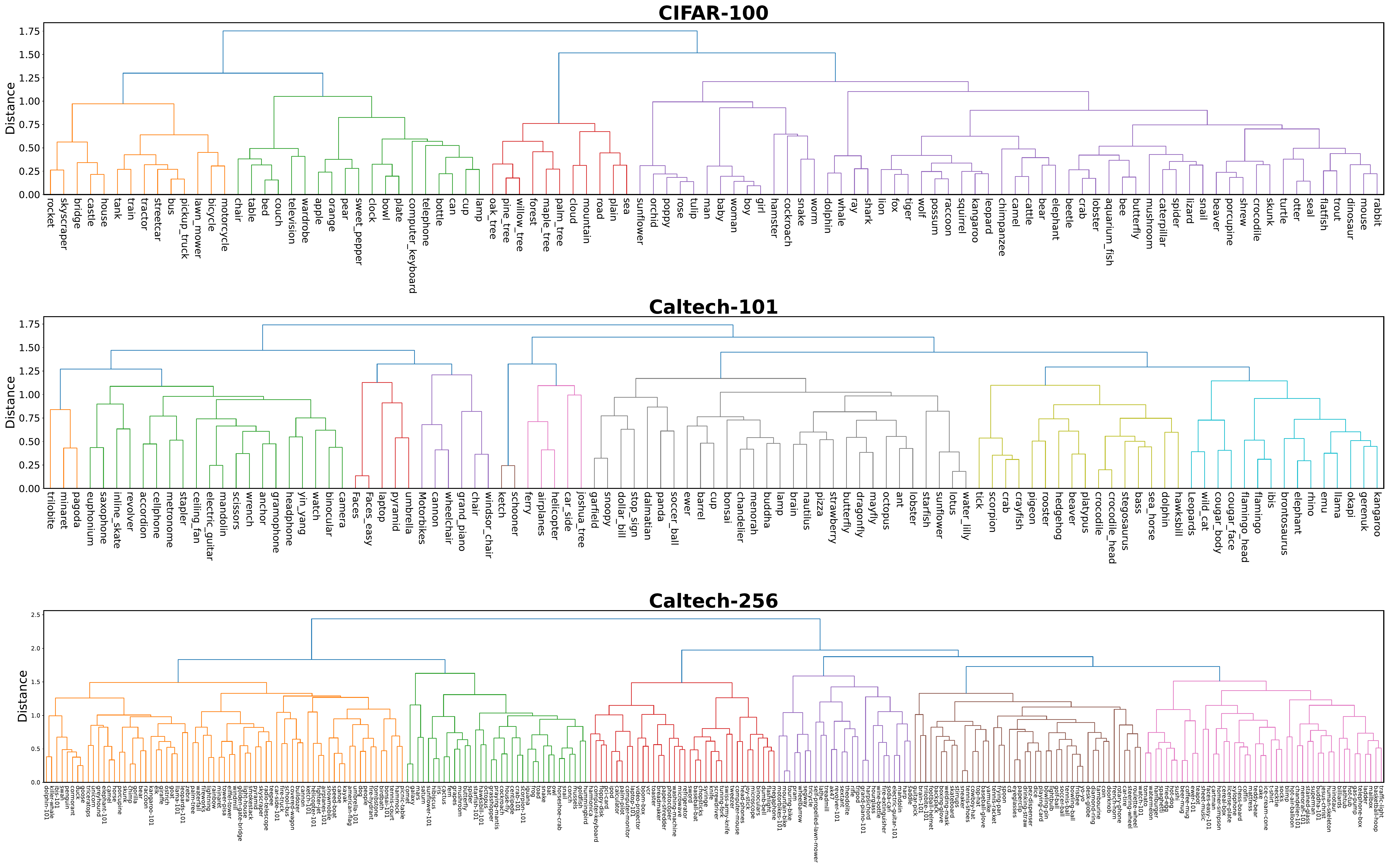}
            \caption{Hierarchical clustering results for CIFAR-100, Caltech-101 and Caltech-256 datasets}
            \label{fig:hier_clsuter1}
        \end{minipage}
    }
\end{figure}

\clearpage

\begin{figure}[!ht]
    \centering
    \rotatebox{90}{%
        \begin{minipage}{0.85\textheight}
            \centering
            \includegraphics[width=\textwidth]{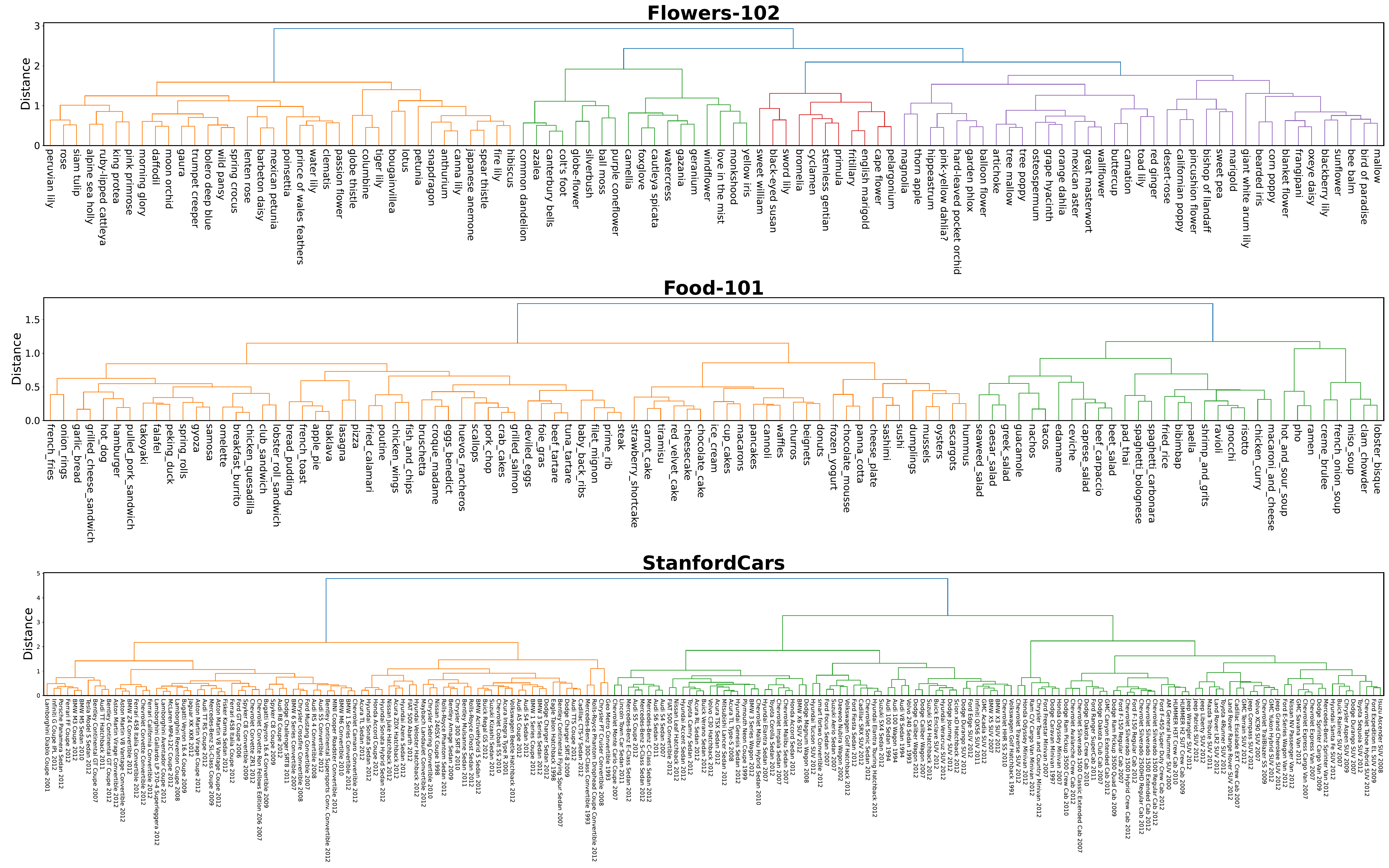}
            \caption{Hierarchical clustering results for Flowers-102, Food-101 and StanfordCars datasets}
            \label{fig:hier_clsuter2}
        \end{minipage}
    }
\end{figure}

\newpage

\section{Detailed Empirical Results}\label{sec:app:detailed_results}

In this section, we report the full results of the experiments presented in the Sec.~\ref{sec:exp}, Appendix~\ref{app:exp_sym_asym} and~\ref{app:emp_result}. 
\begin{table}[!ht]
\centering
\begin{minipage}[b]{\linewidth}
\caption{(Details of Fig.~\ref{fig:superclass_base}) Test accuracy of Caltech-101 dataset applying superclass label corruption, where weight decay value $\lambda=5\times 10^{-4}.$}
\centering
\resizebox{\linewidth}{!}{
}
\end{minipage}
\vspace{10pt}

\end{table}

\end{document}